\documentclass{article}

\usepackage[preprint]{neurips_2026}

\usepackage[utf8]{inputenc} % allow utf-8 input
\usepackage[T1]{fontenc}    % use 8-bit T1 fonts
\usepackage{hyperref}       % hyperlinks
\usepackage{url}            % simple URL typesetting
\usepackage{booktabs}       % professional-quality tables
\usepackage{amsfonts}       % blackboard math symbols
\usepackage{nicefrac}       % compact symbols for 1/2, etc.
\usepackage{microtype}      % microtypography
\usepackage{xcolor}         % colors

\usepackage{graphicx}
\usepackage{enumitem}
\setlist[enumerate]{label=(\roman*), leftmargin=*, itemsep=0pt, topsep=2pt}
\RequirePackage{algorithm}
\RequirePackage{algorithmic}
\usepackage{pifont} % Required for the symbols
\usepackage{caption}
\usepackage{float}
\usepackage{dsfont}
\usepackage{multirow} 
\usepackage{wrapfig}

\usepackage{amsmath}
\usepackage{amssymb}
\usepackage{mathtools}
\usepackage{amsthm}
\usepackage[capitalize,noabbrev]{cleveref}
\newtheoremstyle{mainplain}% name
  {0.5em}% space above
  {0.5em}% space below
  {\itshape}% body font
  {}% indent amount
  {\bfseries}% theorem head font
  {.}% punctuation after head
  {.5em}% space after head
  {}% head spec
\newtheoremstyle{maindefinition}% name
  {0.5em}% space above
  {0.5em}% space below
  {}% body font
  {}% indent amount
  {\bfseries}% theorem head font
  {.}% punctuation after head
  {.5em}% space after head
  {}% head spec
\newtheoremstyle{mainremark}% name
  {0.5em}% space above
  {0.5em}% space below
  {}% body font
  {}% indent amount
  {\itshape}% theorem head font
  {.}% punctuation after head
  {.5em}% space after head
  {}% head spec
\theoremstyle{mainplain}
\newtheorem{theorem}{Theorem}[section]
\newtheorem*{theorem*}{Theorem}
\newtheorem{proposition}[theorem]{Proposition}
\newtheorem*{proposition*}{Proposition}
\newtheorem{lemma}[theorem]{Lemma}

\theoremstyle{maindefinition}
\newtheorem{definition}[theorem]{Definition}
\newtheorem{assumption}[theorem]{Assumption}
\newtheorem{example}[theorem]{Example}
\newtheorem*{examplecont}{Example~\ref{ex:vp-matched}, continued}
\theoremstyle{mainremark}
\newtheorem{remark}[theorem]{Remark}
\newcommand{\R}{\mathbb{R}}
\newcommand{\E}{\mathbb{E}}
\newcommand{\Var}{\operatorname{Var}}
\newcommand{\cN}{\mathcal{N}}
\newcommand{\cA}{\mathcal{A}}
\newcommand{\cV}{\mathcal{V}}
\newcommand{\backvec}[1]{\reflectbox{$\vec{\reflectbox{$#1$}}$}}
\newcommand{\dd}{\mathrm{d}}
\newcommand{\given}{\,|\,}

\title{Sticky Jump Diffusions: A Unifying View of Masked, Continuous, and Hybrid Diffusion}

\author{%
  Pascal Jutras-Dubé \\%\thanks{Use footnote for providing further information about author (webpage, alternative address)} \\
  Department of Computer Science\\
  Purdue University\\
  \texttt{pjutrasd@purdue.edu} \\
  \And
  Patrick Pynadath \\
  Department of Computer Science\\
  Purdue University\\
  \texttt{ppynadat@purdue.edu} \\
  \AND
  Jeremy Lu \\
  Department of Computer Science\\
  Purdue University\\
  \texttt{lu1008@purdue.edu} \\
  \And
  Yuan Gao \\
  Department of Mathematics \\
  Purdue University\\
  \texttt{gao662@purdue.edu} \\
  \And
  Ruqi Zhang \\
  Department of Computer Science\\
  Purdue University\\
  \texttt{ruqiz@purdue.edu} \\
}

\begin{document}

\maketitle

\begin{abstract}
We introduce \emph{Sticky Jump Diffusions} (SJDs), continuous-time Markov processes on $\R^d$ whose discrete anchors are token embeddings. In forward time, anchors release their mass at a hazard rate and the released mass diffuses in the continuous ambient space; time reversal couples a score-driven SDE with a sticky jump kernel whose rate and destination are fixed by flux balance with the forward law. We estimate the score and the per-anchor reverse hazards from a single denoising classifier via \emph{Denoising Hazard Matching}, the hazard analogue of denoising score matching, with simulation-free cross-entropy training. SJD recovers masked diffusion, continuous diffusion, and hybrid diffusion as limits. Its reversal explains features that each family treats as given: the mask of masked diffusion carries no evidence about the source token because the unsticking kernel of every anchor collapses to the same absorbing point; the terminal projection of continuous diffusion is required due to the absence of atoms in its forward marginal, without which flux balance yields no reverse jumps; and the update rules of hybrid diffusion (commit rate, destination, and drift) all follow from flux balance rather than from separate design. Beyond these limits, the unsticking kernel becomes a design space: a cross-position blending corrupts each position toward a blend of its neighbors' clean values or embeddings, turning dependency structure such as spatial locality or a constraint graph into an inductive bias of the corruption itself, and improves over the identity-kernel hybrid on CIFAR-10, Text8, and Sudoku. Our code is available at \url{https://github.com/PascalJD/sticky-jump-diffusions}.
\end{abstract}

\section{Introduction}\label{sec:introduction}
Diffusion-based generative modeling has been extended from continuous data \citep{sohl2015thermo,song2019smld,ho2020ddpm,song2021sde,karras2022edm} to discrete sequences such as text. Two families dominate the discrete setting. \emph{Masked diffusion} operates on the token space and outputs valid tokens by construction, but each position is mapped to a single absorbing token, and the per-position factorization erases graded information about how close the corruption is to a candidate token \citep{austin2021d3pm,sahoo2024mdlm,shi2024md4,lou2024sedd}. \emph{Continuous diffusion} operates on token embeddings and uses standard score-based learning, but its terminal state lies off the discrete support and must be mapped back by a projection step external to the dynamics \citep{li2022diffusionlm,dieleman2022cdcd,chen2023analogbits}. Recent \emph{hybrid diffusion} pairs the two: a masked chain decides which positions are committed, and a continuous latent carries graded information for the masked ones \citep{zheng2025cadd,pynadath2025candi,zhou2025ccdd}. A review of these three families is given in \Cref{app:prelim}.

We define \emph{Sticky Jump Diffusions} (SJDs), a class of continuous-time Markov processes on a continuous state space $\mathsf X = \R^d$ with a finite or countable set of anchors $\cA \subset \R^d$ identified with token embeddings. All three families above arise as limits of SJDs. In forward time, data start on anchors; an anchor releases its mass at a hazard rate $\vec\lambda_t$ into the continuous region $\mathsf X_\cA$, where the trajectory diffuses by a standard SDE. The marginal $p_t$ splits into a continuous density on $\mathsf X_\cA$ and atoms on $\cA$, generalizing the classical sticky-boundary diffusions of a single absorbing point on the half-line \citep{feller1952parabolic,feller1954diffusion, ito1963brownianhalfline} to anchor sets of vocabulary size living in $\R^d$. \Cref{fig:diffusion_comparison} visually compares SJD with masked and continuous diffusion.

Time reversal of an SJD produces a generative process with two coupled mechanisms: a score-driven SDE on $\mathsf X_\cA$ and a sticky jump kernel that commits mass to anchors. The per-anchor reverse intensities are fixed by flux balance \citep{conforti2022timereversal}, so the choice of where to commit and at what rate is determined by the forward law. Token commitment is therefore a reverse-time jump event of the stochastic process, with a rate and a destination that the dynamics specify, rather than a hand-designed schedule applied at sampling time.

To turn the reverse-time identity into a usable generative model, we introduce \emph{Denoising Hazard Matching} (DHM), the hazard analogue of denoising score matching \citep{vincent2011denoisingscorematching}: the reverse hazard equals the conditional expectation of a forward-known target under the SJD corruption. Combined with a denoising classifier $P_\theta$, DHM yields both the score and the per-anchor reverse intensities from the same network, by an analytic reweighting of $P_\theta$ that is Rao–Blackwellized and needs no separate hazard head. Training is plain cross-entropy on SJD-corrupted pairs and is simulation-free under standard conditions on the forward dynamics.

\begin{figure*}[t]
  \centering
  \includegraphics[width=\textwidth]{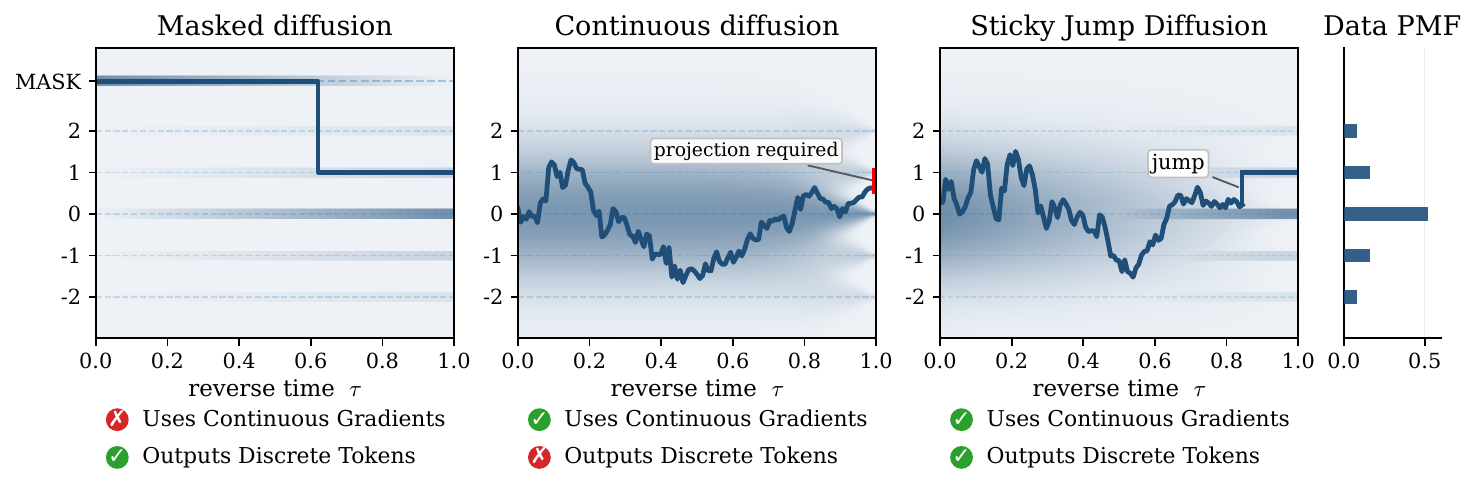}
  \caption{Reverse-time trajectories over each model's path space. Masked diffusion is discrete but gradient-free; continuous diffusion is gradient-based but requires a terminal projection; SJD couples ambient gradients with sticky jumps onto anchors.}
  \label{fig:diffusion_comparison}
\end{figure*}
The same construction places existing model classes inside SJD, and each limit inherits an explanation from the reversal. In masked diffusion, the unsticking kernel collapses to the absorbing token, and commitment is forced to be time-only. In hybrid diffusion, the kernel matches the forward perturbation kernel: commit rate, destination, and drift follow from flux balance. In continuous diffusion, there is no sticking jump: without atoms in the forward marginal, there are no reverse jumps, so a terminal projection outside the dynamics is needed. The intermediate regime is the one we study empirically: the unsticking kernel there carries a cross-position blending toward a weighted combination of neighbors' clean values or embeddings, which improves over the identity-kernel hybrid on CIFAR-10, Text8, and Sudoku.

\paragraph{Contributions.}
\begin{enumerate}
  \item \textbf{The SJD process class.} We introduce Sticky Jump Diffusions, continuous-time jump-diffusions on $\R^d$ whose forward marginal splits into a density on a continuous off-anchor region and atoms on a discrete anchor set identified with token embeddings.

  \item \textbf{Reverse dynamics from a single classifier.} We derive the time reversal process of SJD: a score-driven SDE coupled with a sticky jump kernel whose per-anchor rates are fixed by flux balance. Both the score and the reverse hazard are estimated by a denoising classifier through Denoising Hazard Matching, the hazard analogue of denoising score matching, an analytic Rao-Blackwellized reweighting that needs no second network and trains by cross-entropy.

  \item \textbf{A unifying view of masked, continuous, and hybrid diffusion.} SJD recovers absorbing-state masked diffusion, hybrid diffusion, and continuous diffusion as limits. Where these families assemble the reverse sampler by hand, SJD makes the discrete and continuous coordinates one reversed state: commit rate, destination token, and drift all follow from the forward corruption by flux balance.

  \item \textbf{The jump kernel yields a new design space.} We empirically study cross-position jump kernels: each position's corruption is centered on a blend of the other positions' clean values or embeddings rather than its own embedding alone. We instantiate this on three data types whose structure it can express, encoding spatial locality on CIFAR-10, local sequential correlation on Text8, and the row/column/box constraint graph on Sudoku; it improves FID, valid-word generation, and board completion over the identity-kernel hybrid under matched settings.
\end{enumerate}

\section{Sticky Jump Diffusion (SJD) Models}\label{sec:sjd}
Let the state space be $\mathsf{X}=\R^d$ and let $\cA=\{a_k\}_{k=1}^K\subset\R^d$ be a finite (or countable) set of anchors.  We write $\mathsf{X}_\cA := \R^d\setminus\cA$ for the continuous region and allow the law $p_t$ of $X_t$ to have an absolutely continuous part on $\mathsf{X}_\cA$ and atoms on $\cA$. Notation and assumptions are introduced throughout the text and reiterated in \cref{app:notation}. Proofs are in \cref{app:proofs}.

\subsection{Forward Noising Dynamics: Unsticking}\label{forward-sjd}
SJDs are defined for a single particle in what follows; a running example lifts the construction to sequences and is carried throughout the paper.

\begin{definition}[Forward (noising) process]\label{def:forward}
Fix a terminal time $T>0$. The forward process $(\vec X_t)_{t\in[0,T]}$ starts at an anchor, $\vec X_0 = a \in \cA$ with $a\sim p_{\mathrm{data}}$, and evolves as follows. While stuck at its anchor, $\vec X_t = a$, it waits and
\begin{equation}\label{eq:forward-jump}
    \text{with hazard } \vec\lambda_t,\ \text{ unsticks to a new state } Y\sim R_t(\cdot\given a).
\end{equation}
Here $R_t(\cdot\given a)$ is a fixed \emph{unsticking} kernel supported on $\mathsf{X}_\cA$; when it is absolutely continuous we write $R_t(\dd y\given a)=r_t(y\given a)\dd y$.
Once $\vec X_t \in \mathsf{X}_\cA$, it diffuses according to:
\begin{equation}\label{eq:forwardSDE}
    \dd \vec X_t = \vec b_t(\vec X_t)\dd t + g_t\dd \vec W_t.
\end{equation}
The diffusion coefficient $g_t>0$ is a scalar, $\vec b_t$ is a drift, $\dd \vec W_t$ is a Wiener increment. There are no forward jumps into anchors, so once a trajectory unsticks, it never returns to $\cA$.
\end{definition}

In words: data start at anchors, wait at their anchor for a hazard-governed amount of time, then unstick into the continuous region and diffuse. Intuitively, the forward process transports mass from the discrete anchor set into the continuous ambient space. The third panel of \Cref{fig:diffusion_comparison} shows an SJD trajectory.

The following construction is used in all of our experiments. It introduces two parameters: a blending matrix $W$, the axis our experiments vary, and a variance ratio $\eta$, which we keep at $\eta=1$, the width of the hybrid perturbation kernel (\Cref{sec:conn-hybrid}).

\begin{example}[VP-matched sequence SJD with non-local jump kernel]\label{ex:vp-matched}
In practice, $\vec X_0$ is typically a sequence of anchors $\vec X_0 = (a^{(1)}, \dots, a^{(L)}) \in \cA^L$ (e.g.\ a sentence). The unsticking kernel at a position may depend on the whole clean sequence, not on that position's anchor alone.  We realize this here through a blending matrix $W\in\R^{L\times L}$ that couples the positions. With VP coefficients $\alpha(t),\sigma(t)$ and an embedding $E:\cA\to\R^d$, define the \emph{blended mean} at position $i$,
\begin{equation}\label{eq:blended-mean}
    \mu_i(\vec X_0) := \big(W E(\vec X_0)\big)_i = \sum_{j=1}^L W_{ij}\, E(a^{(j)}),
\end{equation}
where $E(\vec X_0)\in\R^{L\times d}$ stacks the per-position embeddings. Position $i$ is corrupted toward $\mu_i$ rather than its own embedding only. At $W=I$ we recover $\mu_i = E(a^{(i)})$ and the positions' jumps are independent. Position $i$ then unsticks to
\begin{equation}\label{eq:nonlocal-r}
    r_t\big(y\given i, \vec X_0\big) = \cN\Big(y;\alpha(t)\mu_i(\vec X_0),\eta^2\sigma^2(t)I\Big),\qquad \eta\in(0,1].
\end{equation}
The single-particle kernel $r_t(\cdot\given a)$ of \Cref{def:forward} has become $r_t(\cdot\given i, \vec X_0)$, now depending on the whole initial sequence. It departs from the usual VP perturbation kernel in its mean, set by $W$, and the variance, set by $\eta\in(0,1]$. \Cref{fig:eta-commit} shows the effect of $\eta$. 
\end{example}

\begin{remark}[Mass split]\label{rem:mass-split}
At each $t$, the law $p_t$ decomposes as $p_t=p_t^{\mathrm{ac}}+p_t^{\mathrm{at}}$, where $p_t^{\mathrm{ac}}$ is an absolutely continuous density on $\mathsf{X}_\cA$ and $p_t^{\mathrm{at}}=\sum_{a\in\cA} m_t(a)\delta_{a}$ is an atomic density with anchor mass $m_t(a)\in[0,1]$. The anchor mass $m_t(a)$ decays deterministically given $\vec\lambda_t$ when the diffusion does not re-attach to anchors in forward time.
\end{remark}

\begin{remark}[Conditional law is a mixture over unstick times]\label{rem:mixture}
Conditioned on $\vec X_0 = a$, a particle that has already unstuck by time $t$ did so at some random time $\tau\in(0,t)$, and its continuous density at $t$ is the convolution of the unsticking kernel at $\tau$ with the diffusion semigroup from $\tau$ to $t$. Integrating over $\tau$ against the hazard-survival density gives
\[
p_t^{\mathrm{ac}}(y\mid a) = \int_0^t \vec\lambda_\tau(a)S_a(\tau)\big(r_\tau(\cdot\mid a) * K_{\tau\to t}\big)(y)\dd\tau,
\]
where $S_a(\tau)=\exp\big(-\int_0^\tau \vec\lambda_s(a)\dd s\big)$ is the anchor survival function and $K_{\tau\to t}$ is the forward diffusion semigroup. In general, $p_t^{\mathrm{ac}}(\cdot\mid a)$ is therefore a mixture and not a single Gaussian, even when both $r_\tau(\cdot\mid a)$ and $K_{\tau\to t}$ are. Two particles that started at the same anchor but unstuck at different times carry different accumulated variances, and $p_t^{\mathrm{ac}}(\cdot\mid a)$ averages over these.
\end{remark}

\begin{remark}[Terminal prior]\label{rem:prior}
In \cref{ex:vp-matched}, when the VP coefficients $\alpha(T)\to 0$ and $\sigma(T)\to 1$ and every anchor loses its mass by $T$ (i.e.\ $S_a(T)=0$ for all $a$, so $p_T^{\mathrm{at}}=0$), every mixture component of \Cref{rem:mixture} converges to $\mathcal N(0, I_d)$ \emph{regardless of $\tau$, $\eta$, or the blending matrix $W$}.
\end{remark}

\subsection{Reverse-Time Generative Dynamics: Sticking}\label{backward-sjd}
Intuitively, if the forward process jumps off anchors, then the reverse-time process must jump into anchors. We state this formally below.

\begin{theorem}[Time reversal for jump-diffusions]\label{thm:anchored-time-reversal}
Under mild regularity assumptions, the reverse-time process $(\backvec{X}_\tau)_{\tau\in[0,T]}$ with $\tau=T-t$ is Markov on $\R^d$ and admits:
\begin{align}
\dd \backvec X_\tau &= \big(\backvec b_t(\backvec X_\tau)\big)\dd \tau + g_t \dd \backvec W_\tau \quad \text{on }\mathsf{X}_\cA, \label{eq:backwardSDE}\\
\backvec b_t(x) &= -\vec b_t(x) + g_t^2 \nabla \log p_t(x), \quad x\in\mathsf{X}_\cA, \label{eq:score-drift}
\end{align}
and a jump kernel that sticks to anchors, uniquely defined by the flux equation
\begin{equation}
p_t(\dd y) \backvec J_t(y,\dd x) = p_t(\dd x) \vec J_t(x,\dd y).
\label{eq:flux}
\end{equation}
In particular, for $y\in\mathsf{X}_\cA$ and $x=a\in\cA$,
\begin{equation}
\backvec J_t(y,\{a\}) = \frac{p_t(\{a\})}{p_t(y)}\vec\lambda_t r_t(y\given a).
\label{eq:sticky-rate}
\end{equation}
\end{theorem}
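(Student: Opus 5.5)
The plan is to identify the reverse generator by duality against the forward Kolmogorov (Fokker–Planck) equation, following the standard time-reversal argument for jump-diffusions \citep{conforti2022timereversal}. Write the forward generator acting on a test function $\varphi\in C_c^2(\R^d)$ as $\vec L_t = \vec L_t^{\mathrm{diff}} + \vec L_t^{\mathrm{jump}}$. Here $\vec L_t^{\mathrm{diff}}\varphi(x) = \vec b_t\cdot\nabla\varphi + \tfrac12 g_t^2\Delta\varphi$ for $x\in\mathsf X_\cA$, and
\[
\vec L_t^{\mathrm{jump}}\varphi(x)=\int(\varphi(y)-\varphi(x))\,\vec J_t(x,\dd y),
\]
with $\vec J_t(a,\dd y)=\vec\lambda_t R_t(\dd y\given a)$ for $a\in\cA$ and $\vec J_t\equiv 0$ off $\cA$. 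The anchors are fixed points of the diffusion part. The reverse process $\backvec X_\tau=\vec X_{T-\tau}$ is Markov, a standard fact for reversals of Markov processes. So it suffices to find a generator $\backvec L_t$ satisfying, for all test functions $\varphi,\psi$,
\[
\int \psi\,\vec L_t\varphi\,\dd p_t - \int\varphi\,\backvec L_t\psi\,\dd p_t = -\partial_t\!\int\varphi\psi\,\dd p_t + \int\big(\psi\vec L_t\varphi+\varphi\vec L_t\psi\big)\dd p_t,
\]
which is the usual "reverse generator = adjoint plus carré du champ" identity. Equivalently, I would show that the two-time joint laws of $(\vec X_t,\vec X_{t+h})$ and the candidate reversal agree to first order in $h$.

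\textbf{Diffusive part.} Restrict to $x\in\mathsf X_\cA$, where $p_t$ has a smooth positive density (this is one of the "mild regularity assumptions": $p_t^{\mathrm{ac}}\in C^{1,2}$, positive, with enough integrability for integration by parts). The Fokker–Planck equation on $\mathsf X_\cA$ reads
\[
\partial_t p_t = -\nabla\cdot(\vec b_t p_t) + \tfrac12 g_t^2\Delta p_t + \vec\lambda_t\sum_a m_t(a) r_t(\cdot\given a).
\]
The last term is a source term coming from unsticking. Running the classical Anderson/Haussmann–Pardoux computation, integrating by parts in the second-order terms gives the reversed SDE \eqref{eq:backwardSDE} with drift \eqref{eq:score-drift}. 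The source term does not enter the drift; instead it must be cancelled by a reverse jump term, which is the next step.

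\textbf{Jump part.} The forward jumps contribute the measure $p_t(\dd x)\vec J_t(x,\dd y)$ of transitions $x\to y$ per unit time. For the reversal, the rate of transitions $y\to x$ must be the same measure with the roles of $x$ and $y$ swapped. This is exactly \eqref{eq:flux}, and it defines $\backvec J_t$ $p_t$-a.e. as a Radon–Nikodym disintegration. Uniqueness follows from uniqueness of disintegrations, given that $p_t(\dd x)\vec J_t(x,\dd y)$ is $\sigma$-finite. Specializing to $x=a\in\cA$ and $y\in\mathsf X_\cA$, we have $p_t(\dd x)\vec J_t(x,\dd y)=m_t(a)\vec\lambda_t r_t(y\given a)\dd y$, while $p_t(\dd y)=p_t(y)\dd y$. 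Dividing gives \eqref{eq:sticky-rate}. Its total rate $\sum_a\backvec J_t(y,\{a\})$ times $p_t(y)$ reproduces exactly the source term above, so the Fokker–Planck equation of the candidate reverse process matches $\partial_\tau p_{T-\tau}$. Since the forward process never jumps into $\cA$, the reverse process has no jumps out of anchors, so once stuck it stays stuck. Consistently, the atom masses $m_t(a)$ evolve exactly by the accumulated sticking flux.

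\textbf{Main obstacle.} The hard part is justifying the duality rigorously across the mixed atomic/continuous law. Test functions must be handled near the anchors, where $p_t^{\mathrm{ac}}$ may behave badly, with possible boundary terms in the integration by parts around the points $a$. I would handle this by noting that points have zero capacity for $d\ge2$, so the diffusion never hits $\cA$. For $d=1$, I would add this as an assumption, consistent with Definition~\ref{def:forward}'s statement that there are no forward returns. One can then cut off a small ball around each anchor and let its radius go to zero. I would also need a uniqueness result for the martingale problem of the candidate reverse generator; this is what the "mild regularity" clause is meant to cover, and I would import it from \citet{conforti2022timereversal} rather than reprove it.
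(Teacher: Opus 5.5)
Your route is essentially the paper's. Both arguments rest on the Conforti--L\'eonard two-time duality, read the flux identity off the jump terms, and identify the drift by integrating by parts on $\mathsf X_\cA$. In that last step, your unsticking source $\sum_a m_t(a)\vec\lambda_t(a)r_t(\cdot\given a)$ is exactly the paper's atomic carr\'e-du-champ term, and it cancels against the reverse sticking loss $\backvec{\lambda_t^\star}p_t$.

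The genuine error is the display you say it ``suffices'' to verify. Since $\partial_t\int\varphi\psi\,\dd p_t=\int\vec L_t(\varphi\psi)\,\dd p_t$ and $\vec L_t(\varphi\psi)=\varphi\vec L_t\psi+\psi\vec L_t\varphi+\Gamma_t(\varphi,\psi)$, its right-hand side collapses to $-\int\Gamma_t(\varphi,\psi)\,\dd p_t$. So it asserts $\int\varphi\,\backvec L_t\psi\,\dd p_t=\int[\psi\vec L_t\varphi+\Gamma_t(\varphi,\psi)]\,\dd p_t$, which is false:
\begin{itemize}
\item For a stationary Ornstein--Uhlenbeck process, $\backvec L_t=\vec L_t$, so your left-hand side vanishes by reversibility, while the right-hand side is $-\int g^2\nabla\varphi\cdot\nabla\psi\,\dd p$, which is in general nonzero.
\item Solving it for a pure diffusion gives $\backvec L_t\psi=-\vec b_t\cdot\nabla\psi-\tfrac12 g_t^2\Delta\psi+\psi\,\partial_tp_t/p_t$. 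That has a wrong-sign second-order part and no score term, so it is not a generator.
\end{itemize}
The correct first-order identity is what your ``two-time joint laws'' sentence actually describes, and it is what the paper uses:
\[
\int\psi\,\vec L_t\varphi\,\dd p_t-\int\varphi\,\backvec L_t\psi\,\dd p_t=\partial_t\!\int\varphi\psi\,\dd p_t,
\qquad\text{equivalently}\qquad
\int\big[(\vec L_t\psi+\backvec L_t\psi)\varphi+\Gamma_t(\psi,\varphi)\big]\,\dd p_t=0 .
\]
Your later steps (the Anderson computation and the swapped-transition argument for jumps) never use the faulty display. Your conclusions therefore stand once you replace it with this identity.

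\textbf{Further points.}

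\emph{Fokker--Planck matching does not identify the generator.} Matching the candidate's Fokker--Planck equation to $\partial_\tau p_{T-\tau}$ is only a consistency check. You could add a drift $c_t$ with $\nabla\cdot(c_tp_t)=0$, or a flux-neutral jump kernel, without changing any marginal. The identification must come from the two-test-function identity above.

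\emph{Making the swapped-roles heuristic rigorous.} To extract the jump part from that identity without already knowing the drift, take $\varphi,\psi\in C_c^2(\R^d)$ with disjoint supports. The local terms $\varphi\,\vec L_t^{\mathrm{diff}}\psi$, $\varphi\,\backvec L_t^{\mathrm{diff}}\psi$ and $g_t^2\nabla\psi\cdot\nabla\varphi$ then all vanish. What remains is $\iint\varphi(x)\psi(y)\,\backvec J_t(x,\dd y)\,p_t(\dd x)=\iint\varphi(x)\psi(y)\,\vec J_t(y,\dd x)\,p_t(\dd y)$, which is the flux identity off the diagonal. This also avoids the paper's step of subtracting a separate ``diffusion-only'' identity, which presupposes the drift.

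\emph{Behaviour near anchors.} Your cut-off balls around the anchors play the same role as the paper's restriction to test functions compactly supported in $\mathsf X_\cA$ in the drift step.
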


\section{Learning Sticky Dynamics}\label{sec:hazard}
To turn \Cref{thm:anchored-time-reversal} into a usable generative model we need to evaluate, at every $(y, t)$, the reverse drift \eqref{eq:score-drift} and the sticky-jump rates \eqref{eq:sticky-rate}. Both involve the SJD marginal $p_t$, which we cannot evaluate in closed form: the off-anchor conditional $p_t^{\mathrm{ac}}(\cdot\given a)$ is the mixture over unstick times of \Cref{rem:mixture}, and the marginal $p_t(y) = \sum_{a\in\cA} p_0(a)p_t^{\mathrm{ac}}(y\given a)$ averages this mixture across anchors.

Our strategy is to learn the time-reversed SJD by training a single object: a denoising classifier $P_\theta(a\given y, t)$ fit by cross-entropy on SJD-corrupted pairs. We will show that an analytic reweighting of $P_\theta$ recovers both the score and the reverse hazard exactly --- so that \emph{no separate hazard head is needed} --- and that the whole training pipeline is simulation-free under standard conditions on the forward dynamics.

\subsection{When and Where to Stick}\label{sec:factor}
By \Cref{thm:anchored-time-reversal}, the sticky jump law has per-anchor backward intensities
\begin{equation}
    \Lambda^{\star}_t(a\given y) \equiv \backvec J_t(y,\{a\})= \frac{m_t(a)}{p_t(y)} \vec\lambda_t r_t(y\given a),\quad y\in\mathsf{X}_\cA,\ a\in\cA.
\label{eq:true-per-anchor-intensity}
\end{equation}
Summing over anchors gives the hazard at which $y$ commits, regardless of the destination,
\begin{equation}
\backvec{\lambda_t^\star}(y):=\sum_{a\in\cA}\Lambda^\star_t(a\given y)=\frac{\vec\lambda_t \sum_{a\in\cA}m_t(a) r_t(y\given a)}{p_t(y)},
\label{eq:true-total-intensity}
\end{equation}
and normalizing gives the allocation probability over destinations, conditional on a sticky jump,
\begin{equation}
\pi^\star_t(a\given y) := \frac{\Lambda^\star_t(a\given y)}{\sum_{b\in\cA}\Lambda^\star_t(b\given y)} = \frac{m_t(a)r_t(y\given a)}{\sum_{b\in\cA} m_t(b) r_t(y\given b)}.
\label{eq:stick-posterior}
\end{equation}
These are the reverse sampler's two decisions: \emph{when} and \emph{where} to commit, and together they fully factorize the sticky jump law:
\begin{equation}\label{eq:factorization}
\Lambda_t^\star(a\given y) = \backvec{\lambda_t^\star}(y)\pi_t^\star(a\given y), \qquad \sum_{a\in\cA} \pi_t^\star(a\given y) = 1.
\end{equation}
Crucially, $p_t(y)$ cancels from the allocation \eqref{eq:stick-posterior}; so does the forward hazard whenever it is \emph{anchor-agnostic}, i.e. $\vec\lambda_t(a) \equiv \vec\lambda_t$.
The reverse hazard \eqref{eq:true-total-intensity} thus carries all of the dependence on the unknown $p_t(y)$ that we still need to eliminate.

\subsection{Denoising Posterior and Classifier Training}\label{sec:classifier}
Define the SJD denoising posterior
\begin{equation}\label{eq:denoising-posterior}
p_t(a\given y) := \Pr\big(\vec X_0 = a \bigm| \vec X_t = y\big) = \frac{p_0(a)p_t^{\mathrm{ac}}(y\given a)}{p_t(y)}, \quad a\in\cA,\ y\in\mathsf X_\cA,
\end{equation}
where $p_t^{\mathrm{ac}}(\cdot\given a)$ is the mixture from \Cref{rem:mixture}. With paired samples $(\vec X_0, \vec X_t)$ drawn $\vec X_0 \sim p_0$ and $\vec X_t \sim p_t^{\mathrm{ac}}(\cdot\given \vec X_0)$, we train a classifier $P_\theta(a\given y, t)$ by the standard cross-entropy
\begin{equation}\label{eq:classifier-loss}
\mathcal L(\theta) := \E\left[-\log P_\theta\big(\vec X_0 \bigm| \vec X_t, t\big)\right],
\end{equation}
whose Bayes-optimal predictor is exactly \eqref{eq:denoising-posterior}.

The classifier sees $y$ and $t$ but not the latent unstick time $\tau$ that produced $\vec X_t$ from $\vec X_0$ which is the same information pattern available to the reverse sampler at inference. This is by design: the time-reversal theorem operates on $p_t$, not on trajectories, and has already integrated out $\tau$ by flux balance.

\subsection{Denoising Hazard Matching (DHM)}\label{sec:dhm}
The reverse hazard $\backvec{\lambda_t^\star}(y)$ \eqref{eq:true-total-intensity} still contains the unknown marginal $p_t(y)$. We eliminate it by writing $\backvec{\lambda_t^\star}$ as a conditional expectation of a forward-known quantity.

\begin{definition}[DHM target]\label{def:dhm-target}
For $\vec X_0 \in \cA$  and $y \in \mathsf X_\cA$, define
\begin{equation}\label{eq:dhm-target}
\widehat\lambda_t(y, \vec X_0) := \frac{\vec\lambda_t(\vec X_0) S_{\vec X_0}(t) r_t(y\given \vec X_0)}{p_t^{\mathrm{ac}}(y\given \vec X_0)},
\end{equation}
where $S_{\vec X_0}(t) = \exp\big(-\int_0^t \vec\lambda_s(\vec X_0)\dd s\big)$ is the survival.
\end{definition}

The target involves only forward-specified quantities: the forward hazard, its survival, the unsticking density at the current time, and the off-anchor conditional from \Cref{rem:mixture}. It contains no intractable marginal $p_t(y)$.

\begin{theorem}[Denoising Hazard Matching]\label{thm:dhm}
Let $(\vec X_0, \vec X_t)$ be paired by the SJD forward law, $\vec X_0 \sim p_0$ and $\vec X_t \sim p_t^{\mathrm{ac}}(\cdot\given \vec X_0)$. Then
\begin{equation}\label{eq:dhm-identity}
\E\left[\widehat\lambda_t(\vec X_t, \vec X_0) \bigm| \vec X_t = y\right] = \backvec{\lambda_t^\star}(y)
\qquad \text{for almost every } y \in \mathsf X_\cA.
\end{equation}
\end{theorem}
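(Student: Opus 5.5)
The plan is to compute the conditional expectation directly using the denoising posterior \eqref{eq:denoising-posterior}, and then match the result term by term with the closed form \eqref{eq:true-total-intensity}. The key preliminary observation is that the anchor mass appearing in \Cref{thm:anchored-time-reversal} is $m_t(a) = p_0(a)S_a(t)$. Mass at $a$ starts at $p_0(a)$. By \Cref{def:forward} it leaves only through unsticking at hazard $\vec\lambda_t(a)$, and it never re-enters, since there are no forward jumps into $\cA$. So it decays exactly as the survival function, which makes \Cref{rem:mass-split} quantitative. When the hazard is anchor-dependent, I will read \eqref{eq:true-total-intensity} with $\vec\lambda_t(a)$ inside the sum, consistent with \eqref{eq:sticky-rate} applied anchor by anchor.

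\textbf{Step 1: conditional law.} Because $\vec X_t \sim p_t^{\mathrm{ac}}(\cdot\given \vec X_0)$, the conditional law of $\vec X_0$ given $\vec X_t = y$ is $p_t(a\given y) = p_0(a)p_t^{\mathrm{ac}}(y\given a)/p_t(y)$. Here $p_t(y) = \sum_b p_0(b)p_t^{\mathrm{ac}}(y\given b)$ is the off-anchor density at $y$, which is the same quantity appearing in \eqref{eq:true-total-intensity}. This holds for almost every $y$ with $p_t(y)>0$; the null set where $p_t(y)=0$ is excluded by the ``almost every'' clause.

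\textbf{Step 2: expand and cancel.} Expanding the conditional expectation as a sum over anchors gives
\[
\E\big[\widehat\lambda_t(\vec X_t,\vec X_0)\bigm|\vec X_t=y\big]=\sum_{a\in\cA}\frac{p_0(a)p_t^{\mathrm{ac}}(y\given a)}{p_t(y)}\cdot\frac{\vec\lambda_t(a)S_a(t)r_t(y\given a)}{p_t^{\mathrm{ac}}(y\given a)}.
\]
The factor $p_t^{\mathrm{ac}}(y\given a)$ cancels. Anchors with $p_t^{\mathrm{ac}}(y\given a)=0$ carry zero posterior weight, so the convention $0\cdot(\text{anything})=0$ applies to them. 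What remains is $\sum_a \vec\lambda_t(a)\,p_0(a)S_a(t)\,r_t(y\given a)/p_t(y)$. Substituting $m_t(a)=p_0(a)S_a(t)$ turns this into \eqref{eq:true-total-intensity}, equivalently $\sum_a\Lambda^\star_t(a\given y)$ from \eqref{eq:true-per-anchor-intensity}.

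\textbf{Main obstacle.} The algebra is trivial; the care lies in the measure-theoretic details. Three points need checking.
\begin{enumerate}
\item The target \eqref{eq:dhm-target} must be well defined and integrable. Dividing by $p_t^{\mathrm{ac}}(y\given a)$ is only an issue where that density vanishes, which is a null set under the pairing. Integrability follows from Tonelli's theorem, since all terms are nonnegative and $\E[\widehat\lambda_t]=\sum_a \vec\lambda_t(a)m_t(a)\int r_t\,\dd y<\infty$ for finite hazards.
\item The identification $m_t(a)=p_0(a)S_a(t)$ requires the no-re-attachment property. Because $\cA$ has Lebesgue measure zero and the diffusion on $\mathsf X_\cA$ has a density, trajectories do not hit $\cA$ in forward time except through jumps, and \Cref{def:forward} forbids forward jumps into $\cA$.
\item For a countable $\cA$, the exchange of the sum and the conditional expectation is again justified by nonnegativity, via monotone convergence.
\end{enumerate}
With these in place, the identity holds for almost every $y$.
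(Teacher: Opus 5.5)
Your proposal is correct and is essentially the paper's proof: Bayes' rule for the posterior, expansion of the conditional expectation as a sum over anchors, cancellation of $p_t^{\mathrm{ac}}(y\given a)$, and the mass identity $m_t(a)=p_0(a)S_a(t)$. One small refinement: for that mass identity you only need that the unstuck part of the time-$t$ law puts no mass on the Lebesgue-null set $\cA$ (no return is also stipulated in the definition). You do not need the stronger claim that paths never hit $\cA$, which can fail for $d=1$.
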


\Cref{thm:dhm} is the hazard analogue of denoising score matching \citep{vincent2011denoisingscorematching}: the reverse hazard, like the score, is an intractable functional of the marginal that nonetheless equals a tractable conditional expectation under the same forward corruption used for training.

\subsection{Plug-In: One Classifier, No Hazard Head}\label{sec:plugin}
Because $\vec X_0$ is discrete, \eqref{eq:dhm-identity} expands as a finite sum,
\begin{equation}\label{eq:rb-identity}
\backvec{\lambda_t^\star}(y) = \sum_{a\in\cA} \widehat\lambda_t(y, a) p_t(a\given y),
\end{equation}
which substitutes a known weighting against an unknown distribution. Replacing $p_t(\cdot\given y)$ with the trained classifier $P_\theta$ yields the Rao--Blackwellized plug-ins
\begin{equation}\label{eq:plugin-estimators}
\Lambda_{\mathrm{plug},\theta}(a\given y, t) := \widehat\lambda_t(y, a) P_\theta(a\given y, t), \qquad \lambda_{\mathrm{plug},\theta}(y, t) := \sum_{a\in\cA} \Lambda_{\mathrm{plug},\theta}(a\given y, t).
\end{equation}
Both ingredients of the factorization \eqref{eq:factorization} are now in hand: the total commit rate is $\lambda_{\mathrm{plug},\theta}(y, t)$, and the anchor allocation at a jump is
\begin{equation}\label{eq:plugin-allocation}
\pi_{\mathrm{plug},\theta}(a\given y, t) = \frac{\Lambda_{\mathrm{plug},\theta}(a\given y, t)}{\lambda_{\mathrm{plug},\theta}(y, t)} \propto \widehat\lambda_t(y, a) P_\theta(a\given y, t).
\end{equation}
If $P_\theta(\cdot\given y, t) = \mathrm{softmax}(\ell_\theta(\cdot; y, t))$, sampling the destination anchor reduces to a softmax over reweighted logits $\ell_\theta(a; y, t) + \log \widehat\lambda_t(y, a)$. Training is plain cross-entropy; all SJD-specific structure enters at inference through the analytic reweighting $\widehat\lambda_t$. 
\Cref{app:rb-plugin} describes three properties that justify \eqref{eq:plugin-estimators}.

\subsection{Simulation-Free Training}\label{sec:sim-free}
Sampling $\vec X_t \sim p_t^{\mathrm{ac}}(\cdot\given \vec X_0)$ is simulation-free under three conditions on the forward process.

\begin{enumerate}
    \item \emph{Closed-form survival.} The survival $S_a(\tau)$ should be available in closed form so that an unstick time $\tau$ can be drawn by inverse CDF on $(0, t)$. For hazards of the form $\vec\lambda_\tau(a) = \beta(\tau) w(a)$ with $\beta$ elementary (e.g. constant, linear, polynomial, piecewise-constant), $S_a$ is a single exponential.
    \item \emph{Gaussian closure of the convolution.} The convolution $r_\tau(\cdot\given a) * K_{\tau\to t}$ should be Gaussian in closed form. This holds whenever $r_\tau$ is Gaussian and the forward SDE is linear-Gaussian.
    \item \emph{One-sample unbiasedness.} A single draw $(\tau, \vec X_t)$ is an unbiased sample from $p_t^{\mathrm{ac}}(\cdot\given \vec X_0)$. The cross-entropy loss therefore receives an unbiased gradient from one sample per step; we never need to evaluate the mixture density $p_t^{\mathrm{ac}}(y\given a)$ during training.
\end{enumerate}

\begin{examplecont}\label{ex:vp-matched-training}
\Cref{ex:vp-matched} satisfies all three conditions. The VP-SDE is linear-Gaussian; for an anchor-agnostic hazard $\vec\lambda_t(\vec X_0) \equiv \vec\lambda(t)$, the survival $S(t) = \exp\big(-\int_0^t \vec\lambda(s)\dd s\big)$ is a single exponential and $\tau$ is sampled by inverse CDF in closed form. The convolution evaluates to a single Gaussian whose mean is the blended mean and whose variance $v_t(\tau)$ (\Cref{app:convolution}):
\begin{equation}\label{eq:vp-matched-convolution}
(r_\tau * K_{\tau\to t})(\cdot\given \vec X_0, i) = \cN\Big(\cdot;\ \alpha(t)\mu_i(\vec X_0),\ v_t(\tau)I_d\Big),
\quad
v_t(\tau)= \sigma^2(t) - (1-\eta^2)\tfrac{\alpha^2(t)}{\alpha^2(\tau)}\sigma^2(\tau).
\end{equation}
Sampling $\vec X_t$ given $(\vec X_0, \tau)$ is a single Gaussian draw at each position, with the per-position mean $\alpha(t)\mu_i(\vec X_0)$ obtained from one $L\times L$ matmul $\mu = W E(\vec X_0)$.
\end{examplecont}

\Cref{alg:sjd-training} ties the pieces together. In practice, it is vectorized across a minibatch: the never-unstuck branch is implemented as a loss mask. \Cref{app:sampling} and \Cref{alg:sjd-sampling} describe how to sample from a trained SJD. When $W\neq I$, $W$ enters the sampler only through the reverse drift: at $\eta = 1$ the classifier-induced score evaluates the residual to the blended posterior mean at the cost of one blending product per reverse step, while the commit rates and allocation are unchanged (\Cref{app:classifier-score}).

\paragraph{Cost.} Despite the added machinery, SJD trains and samples at the same cost as existing hybrids. Training is one classifier forward--backward pass per step on a cross-entropy loss, the same as a masked or hybrid baseline, because all SJD-specific structure enters through the jump corruption: closed-form draws of $(\tau, \vec X_t)$ and one $L\times L$ blending product per minibatch. At sampling, each reverse step is one forward pass through $P_\theta$ plus the analytic reweighting, an $N_\tau$-point quadrature at $O(N_\tau|\cA|d)$ that runs one to two orders of magnitude below the network pass (\Cref{app:plugin}).

\begin{figure}[t]
\centering
\begin{minipage}[t]{0.48\textwidth}
\begin{algorithm}[H]
\caption{SJD simulation-free training.}\label{alg:sjd-training}
\begin{algorithmic}[1]
\REQUIRE Prior $p_0$; $\vec\lambda_t$, $S_a$, $r_\tau$, $K_{\tau\to t}$; embedding $E$ and blending matrix $W$; $P_\theta$; $T$.
\WHILE{not converged}
\STATE Sample $\vec X_0 \sim p_0$, $t \sim \mathrm{Unif}(0, T)$.
\STATE Compute $\mu \leftarrow W E(\vec X_0)\in\R^{L\times d}$.
\STATE Sample $u \sim \mathrm{Unif}(0, 1)$.
\STATE \textbf{if} $u < S_{\vec X_0}(t)$ \textbf{then continue}
\STATE Sample $\tau\!\sim\!\vec\lambda_\tau(\vec X_0) S_{\vec X_0}(\tau)/(1\!-\!S_{\vec X_0}(t))$.
\STATE Sample $\vec X_t \sim (r_\tau * K_{\tau\to t})(\cdot\given \vec X_0, i)$ at each unstuck position $i$, using \eqref{eq:vp-matched-convolution}.
\STATE Step $\theta$ on $-\log P_\theta(\vec X_0 \given \vec X_t, t)$.
\ENDWHILE
\end{algorithmic}
\end{algorithm}
\end{minipage}\hfill
\begin{minipage}[t]{0.48\textwidth}
\begin{algorithm}[H]
\caption{SJD predictor sampling.}\label{alg:sjd-sampling}
\begin{algorithmic}[1]
\REQUIRE $P_\theta$; $\vec b_t$, $g_t$, $\vec\lambda_t$, $r_t$; $T$, $\Delta\tau$.
\STATE $y \sim p_T$; $\tau \leftarrow 0$; mark all sites active.
\WHILE{$\tau < T$ and any site is active}
\STATE $t \leftarrow T - \tau$.
\STATE One forward pass: $P_\theta(\cdot\given y, t)$, $s_\theta(y, t)$.
\STATE Evaluate $\widehat\lambda_t(y, a)$ for $a \in \cA$ by \eqref{eq:plugin-quadrature}.
\STATE Form $\lambda_{\mathrm{plug},\theta}, \pi_{\mathrm{plug},\theta}$ by \eqref{eq:plugin-estimators}.
\STATE \emph{Drift:} $y\leftarrow y + (-\vec b_t(y) + g_t^2 s_\theta)\Delta\tau + g_t\sqrt{\Delta\tau}\xi$.
\STATE \emph{Jump:} per active site, with prob.\ 
       $1 - e^{-\lambda_{\mathrm{plug},\theta}\Delta\tau}$, draw 
       $a \sim \pi_{\mathrm{plug},\theta}$, freeze site at $a$.
\STATE $\tau \leftarrow \tau + \Delta\tau$.
\ENDWHILE
\STATE \textbf{return } $y$.
\end{algorithmic}
\end{algorithm}
\end{minipage}
\end{figure}

\section{Connections to Existing Diffusions}\label{sec:connections}
The SJD construction of \Cref{sec:sjd} unifies several existing model classes by varying three forward-process levers: the unsticking kernel $r_a$, the forward hazard $\vec\lambda_t$, and, in the VP-matched family of \Cref{ex:vp-matched}, the blending matrix $W$ and off-anchor variance ratio $\eta$. Different limits of these quantities recover absorbing-state discrete diffusion, pure continuous diffusion, and recently-proposed hybrid diffusion. We make these correspondences precise below.

\subsection{Masked Diffusion as a Degenerate-Kernel Limit}\label{sec:conn-discrete}
Masked diffusion models (MDMs) \citep{austin2021d3pm,sahoo2024mdlm,shi2024md4} are the SJD whose off-anchor region collapses to a single absorbing point. Identify $\cA = \{e(v) : v \in \cV\}$ with the embedded vocabulary, fix a mask location $m \notin \cA$, and set
\begin{equation}\label{eq:mdlm-as-sjd} 
    r_a(\cdot, t) \equiv \delta_m, \qquad g_t \equiv 0, \qquad
\vec\lambda_t(a) = \beta(t) := -\dot\alpha_t/\alpha_t.
\end{equation}
Then $p_t^{\mathrm{ac}}(\cdot\given a) = (1-\alpha_t)\delta_m$, and the plug-in \eqref{eq:plugin-allocation} at $y = m$ reduces to $\pi_{\mathrm{plug},\theta}(a\given m, t) \propto P_\theta(a\given m, t)$ with commit rate $\beta(t)\alpha_t/(1-\alpha_t)$, the standard MDM step. SJD strictly contains this limit and additionally provides an absolutely continuous off-anchor region whenever $r_a$ has a Lebesgue density.

This limit makes the ``information void'' of \citet{zheng2025cadd} structural: because every anchor unsticks to the same point, the conditional law $p^{\mathrm{ac}}_t(\cdot|a) = (1-\alpha_t)\delta_m$ is anchor-independent, so the corrupted state at a masked position carries no evidence about its source token. Correspondingly, the spatial factor of the DHM target~\eqref{eq:dhm-target} reduces to the constant $1/(1-\alpha_t)$: the commit rate is time-only, and the allocation can be informed only from context, through the classifier (\cref{sec:conn-schedules}).

\subsection{Continuous Diffusion as the No-Jump Limit}\label{sec:conn-continuous}
Setting $\vec\lambda_t \equiv 0$ shuts off jumps and reduces \eqref{eq:forwardSDE} to a standard SDE; \Cref{thm:anchored-time-reversal} reduces to the classical score-based time-reversal of \citet{song2021sde}.

For discrete data, three external mechanisms appear in the literature. \emph{(i) Decoder:} train a continuous diffusion in latent space and push samples through a separately-trained decoder \citep{li2022diffusionlm,dieleman2022cdcd,lovelace2023ld4lg}. \emph{(ii) Quantization:} Bit Diffusion \citep{chen2023analogbits} thresholds the terminal state. \emph{(iii) Doob's $h$-transform:} \citet{liu2023constraindiff} bias the SDE by the gradient of a singular log-density so its terminal law lands on a discrete set. A recent wave of embedding- and one-hot-space models closes the gap to discrete diffusion \citep{yang2026replaid,hu2026elf,chen2026langflow,lee2026flm}; in all of them, discreteness is still restored by a step outside the dynamics, whether a decoder, a readout, or an argmax. SJD instead enforces discreteness through the dynamics: anchors are absorbing in reverse time as a direct consequence of \eqref{eq:flux}.

\subsection{Hybrid Diffusion}\label{sec:conn-hybrid}
Hybrid diffusion models such as CADD \citep{zheng2025cadd}, CANDI \citep{pynadath2025candi}, and CCDD \citep{zhou2025ccdd} couple a masked chain with a continuous latent that carries graded information at masked positions. The VP-matched SJD of \cref{ex:vp-matched} at $(W, \eta) = (I, 1)$ recovers this construction exactly. With $W = I$, the unsticking kernel reduces to the per-anchor form $r_\tau(\cdot\given a)$, with $\eta = 1$ it coincides with the VP perturbation kernel ($r_\tau = q_\tau$), and the semigroup property collapses \Cref{rem:mixture} to
\begin{equation}\label{eq:eta1-collapse}
    p_t^{\mathrm{ac}}(y\given a) = (1-S(t)) \mathcal N\big(y;\alpha(t)a,\sigma^2(t)I\big).
\end{equation}

\begin{proposition}[The plain hybrid coincides with CADD]\label{prop:plain-hybrid}
Consider the VP-matched SJD of \Cref{ex:vp-matched} at $(W,\eta)=(I,1)$ with anchor-agnostic hazard $\vec\lambda(t)$ and survival $S(t)$, and CADD with masking survival $\alpha_t$, continuous kernel $\cN(\sqrt{\bar\gamma_t}E(x_0),(1-\bar\gamma_t)I_d)$ at masked positions, and a common embedding map $E$. Identify $S(t)\leftrightarrow\alpha_t$ and $(\alpha(t),\sigma^2(t))\leftrightarrow(\sqrt{\bar\gamma_t},1-\bar\gamma_t)$. Then:
\begin{enumerate}[label=(\roman*),leftmargin=*,nosep]
    \item the time-$t$ laws of the corrupted state given the data coincide for every $t$, and the SJD cross-entropy objective of \Cref{alg:sjd-training} equals CADD's cross-entropy objective;
    \item the reverse process of \Cref{thm:anchored-time-reversal} reduces to a commit rate $\vec\lambda(t)S(t)/(1-S(t))$ shared across anchors, an allocation equal to the classifier posterior, and a reverse SDE whose score is the residual to the posterior-mean embedding;
    \item CADD's sampler is a first-order discretization of (ii), so the two generative laws coincide in the small-step limit.
\end{enumerate}
\end{proposition}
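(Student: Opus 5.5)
We prove the three items in order. Each one reduces to a computation with the collapsed mixture \eqref{eq:eta1-collapse} and the formulas of \Cref{sec:factor,sec:dhm}.

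\textbf{(i) Equality of the corruption laws and objectives.} Fix $t$ and condition on the clean sequence. At $W=I$ the positions unstick independently, each with survival $S(t)$. At $\eta=1$ we have $v_t(\tau)=\sigma^2(t)$ for every $\tau$ by \eqref{eq:vp-matched-convolution}. Hence the law of position $i$ at time $t$ is
\[
S(t)\,\delta_{a^{(i)}}+(1-S(t))\,\cN\big(\alpha(t)E(a^{(i)}),\sigma^2(t)I\big),
\]
independently across positions. CADD keeps position $i$ unmasked with probability $\alpha_t$. Otherwise it masks the position and attaches the latent $\cN(\sqrt{\bar\gamma_t}E(x_0),(1-\bar\gamma_t)I)$. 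Under the stated identification these are the same product law, once we encode an unmasked position as the anchor point $E(a^{(i)})$ and a masked one as its continuous latent. These encodings are in bijection almost surely, because the Gaussian places no mass on $\cA$. So the pairs $(\vec X_0,\vec X_t)$ have the same joint law.

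\Cref{alg:sjd-training} computes $-\log P_\theta(\vec X_0\mid\vec X_t,t)$ on these pairs, with the never-unstuck case handled by the loss mask. CADD's loss is the same cross-entropy, taken over masked positions. The two expectations therefore coincide. One bookkeeping point needs a short remark: CADD may weight its loss by time, whereas SJD uses $t\sim\mathrm{Unif}(0,T)$. We take "equals" up to this common weighting convention, or we use the fact that both objectives share the Bayes-optimal predictor \eqref{eq:denoising-posterior}.

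\textbf{(ii) Form of the reverse process.} Substitute \eqref{eq:eta1-collapse} into the DHM target \eqref{eq:dhm-target}. Since $r_t=q_t$ is the same Gaussian that appears in $p_t^{\mathrm{ac}}(y\mid a)$, the ratio gives
\[
\widehat\lambda_t(y,a)=\frac{\vec\lambda(t)S(t)}{1-S(t)},
\]
which is independent of $y$ and of $a$. By \eqref{eq:rb-identity} and $\sum_a p_t(a\mid y)=1$, the total reverse hazard equals this constant. The allocation \eqref{eq:stick-posterior} uses $m_t(a)=p_0(a)S(t)$, so
\[
\pi_t^\star(a\mid y)\propto p_0(a)\,q_t(y\mid a)\propto p_t(a\mid y),
\]
which is the classifier posterior. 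For the drift, \eqref{eq:score-drift} requires $\nabla\log p_t(y)$ with $p_t(y)=(1-S)\sum_a p_0(a)q_t(y\mid a)$. Differentiating the Gaussian mixture gives the Tweedie form
\[
\nabla\log p_t(y)=\frac{\alpha(t)\sum_a p_t(a\mid y)\,E(a)-y}{\sigma^2(t)},
\]
which is the residual to the posterior-mean embedding.

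\textbf{(iii) CADD's sampler as a discretization.} Write out one CADD reverse step from $t$ to $t-\Delta t$. A masked position unmasks with probability $(\alpha_{t-\Delta t}-\alpha_t)/(1-\alpha_t)$. It then draws its token from the classifier, and its continuous latent takes a DDPM/DDIM-type update built on the predicted embedding.

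Expanding to first order gives an unmasking probability of $-\dot\alpha_t\Delta t/(1-\alpha_t)$. Since $\vec\lambda S=-\dot S$, this equals $\vec\lambda(t)S(t)\Delta t/(1-S(t))$, the rate from (ii). The latent update, expanded to first order, matches an Euler–Maruyama step of \eqref{eq:backwardSDE} with the score from (ii). For an ancestral DDPM step this is the standard fact that the ancestral step is a consistent discretization of the reverse VP-SDE.

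Convergence of the generative laws as $\Delta t\to0$ then follows from standard consistency of first-order schemes for jump-diffusions with locally Lipschitz coefficients. We can invoke this rather than reprove it.

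The main obstacle is step (iii). Matching the latent update requires committing to CADD's exact parameterization, and CADD's step may be DDIM-like rather than stochastic. If it is DDIM-like, it discretizes the probability-flow ODE rather than the SDE. Only the marginals would then agree, which weakens "generative laws coincide" to equality of time marginals. We would check this first.
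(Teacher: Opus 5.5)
Your proposal is correct and follows the paper's proof essentially step for step. It uses the same collapse $v_t(\tau)=\sigma^2(t)$, which makes the DHM target the constant $\vec\lambda(t)S(t)/(1-S(t))$, and the same measurable-bijection argument for (i). For (iii) it uses the same first-order matching of $(\alpha_{t-\Delta}-\alpha_t)/(1-\alpha_t)$, and of the DDPM ancestral step to the Euler--Maruyama step. Your direct Tweedie differentiation of the collapsed mixture is simply the $\eta=1$ specialization that the paper reads off from its classifier-induced score. The paper does not argue the two points you hedge on; it asserts them. It takes CADD's objective to be unweighted cross-entropy at masked positions with $t$ drawn uniformly. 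It takes CADD's latent update to be the ancestral step $z_{t-\Delta}\sim\cN(\tilde\mu_t(\hat z_{0,\theta},z_t),\tilde\beta_t I_d)$ with $\hat z_{0,\theta}=m_\theta$, and treats CADD's other latent estimators as further approximations of the same reverse law.
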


The sampler that hybrids assemble component by component thus arise jointly as the time reversal of one forward process, with rate, destination, and drift fixed by flux balance \eqref{eq:flux}. This turns the unsticking kernel into a new design space: the continuous coordinate is reversed state rather than conditioning information, so any forward kernel yields exact reverse dynamics through \Cref{thm:anchored-time-reversal}.

\subsection{Sticky Boundaries: A Classical Antecedent}\label{sec:conn-sticky}
SJD's theoretical antecedent is the classical theory of sticky-boundary diffusions \citep{feller1952parabolic,feller1954diffusion,ito1963brownianhalfline}. A one-dimensional sticky diffusion holds at the boundary for an exponential time and re-enters the interior according to a fixed jump-out measure; its law splits as an interior density plus a boundary atom \citep{bourabee2020sticky}. SJD lifts this model to a multi-anchor, high-dimensional setting: $\cA$ replaces the boundary point, $\vec\lambda_t$ replaces the sticky coefficient, and $R_a$ replaces the jump-out measure, with \Cref{rem:mass-split} as the corresponding mass split.

\subsection{Commitment Schedules and Token Ordering}\label{sec:conn-schedules}
The DHM target \eqref{eq:dhm-target} factors into temporal and spatial components:
\[
\widehat\lambda_t(y, \vec X_0) = \underbrace{\vec\lambda_t(\vec X_0) S_{\vec X_0}(t)}_{\text{temporal}} \cdot \underbrace{\frac{r_t(y\given \vec X_0)}{p_t^{\mathrm{ac}}(y\given \vec X_0)}}_{\text{spatial}}.
\]
At $(W, \eta) = (I, 1)$, \eqref{eq:eta1-collapse} forces the spatial factor to $1/(1-S(t))$ and the allocation \eqref{eq:plugin-allocation} reduces to the classifier posterior, recovering the time-only schedule of MDM-style samplers. Choosing $\eta < 1$ introduces a per-anchor mixture over unstick times whose density depends on proximity $\|y - \alpha(t)E(a^{(i)})\|^2$; $W \neq I$ then replaces the per-anchor distance with a cross-position distance to the blended mean $\mu_i(\vec X_0)$, so the hazard ratio at one position depends on the configuration of its neighbors. This gives a process-level reading of adaptive ordering \citep{kim2025trainforworst,chao2025mdmprime}: at $(I,1)$ the state carries no timing signal, so a planner can only rank positions by classifier confidence, an ordering imposed at sampling time. State-dependent forward hazards $\vec\lambda_t(a)$, in the spirit of MD4's GenMD4 \citep{shi2024md4}, are a further axis (See \cref{app:e2e-hazard}).  We fix $\eta=1$ in all experiments and recommend it as the practical default: the kernel then coincides with the perturbation kernel, so the identity-blend member is exactly CADD (\Cref{prop:plain-hybrid}), and the commit rate and allocation are closed-form and identical for every $W$ (\Cref{app:classifier-score}).

\section{Experiments}\label{sec:experiments}
The cross-position structure in \Cref{ex:vp-matched,sec:conn-hybrid} should be exploitable whenever positions are statistically dependent under a known incidence structure, so that a fixed linear functional of a position's neighbors is itself diagnostic of that position's value. 
Such structure is common in real-world data, and we instantiate three examples: spatial locality on CIFAR-10 through a value-space blur, local sequential correlation on Text8, and the row/column/box constraint graph on Sudoku.

\subsection{Image Generation}\label{sec:exp-image}
\begin{wrapfigure}[18]{r}{0.5\linewidth}
  \centering
  \vspace{-\intextsep}
  \includegraphics[width=\linewidth]{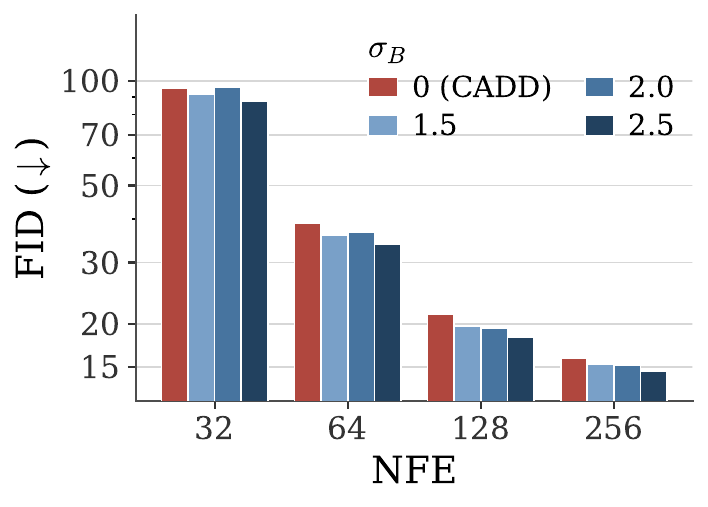}
  \caption{CIFAR-10 FID ($\downarrow$) across sampling budgets as a function of the value-space blur bandwidth $\sigma_B$. $\sigma_B{=}0$  is the plain hybrid, which coincides with CADD \citep{zheng2025cadd} (\Cref{prop:plain-hybrid}); $\sigma_B\in\{1.5,2.0,2.5\}$ are SJD.}
  \label{fig:cifar10_sigmaB}
\end{wrapfigure}
\Cref{fig:cifar10_sigmaB} sweeps $\sigma_B\in\{1.5,2.0,2.5\}$, the spatial correlation length of the corruption, against the $\sigma_B=0$ endpoint across sampling budgets. The identity kernel, the point at which existing hybrids sit, is the weakest member of the sweep: the corruption geometry is a substantive degree of freedom on images as well. Full settings are in \Cref{app:cifar10-imagenet}. At the full budget of 256 NFE, \Cref{tab:cifar10_fid} compares the best bandwidth from the sweep, $\sigma_B{=}2.5$, with masked, continuous, and hybrid baselines under matched settings: SJD attains the best FID overall, 1.3 points below the identity-kernel hybrid, with DDPM the closest external baseline.

\subsection{Text Generation}\label{sec:exp-text8}
Setting $W$ to a 1D Gaussian convolution of bandwidth $\sigma_W$ replaces a token with a noisy blend of a local neighborhood. This is a natural inductive bias for characters, where adjacent characters are correlated, so a blended window ties each position's corruption to its local context.
\Cref{fig:text8} reports valid-word counts at length thresholds $\geq 5$ and $\geq 6$ for NFE budgets of 32, 64, and 128. For $\sigma_W \geq 1.0$ SJD surpasses both CANDI \citep{pynadath2025candi} and MDLM \citep{sahoo2024mdlm}, with a larger margin at the $\geq 6$ threshold than at $\geq 5$ and gains that hold at NFE $= 128$. \Cref{fig:text8-frontier} further emphasizes this finding with quality–diversity frontiers on Text8, by NFE. Full settings are in \Cref{app:text8}.

\begin{figure}
    \centering
    \includegraphics[width=\linewidth]{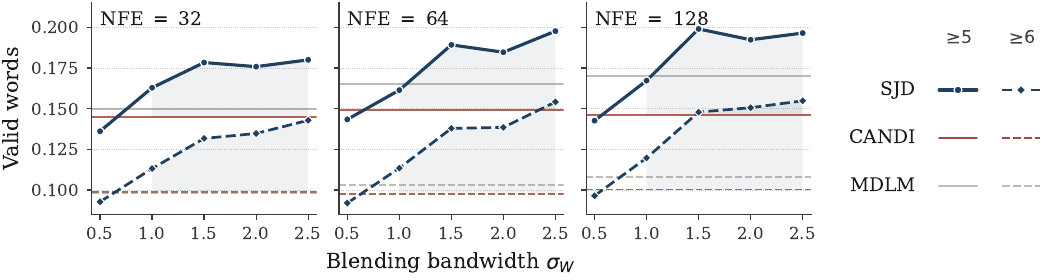}
    \caption{Text8 valid-word counts at length thresholds $\geq 5$ and $\geq 6$, as a function of the blending-kernel bandwidth $\sigma_W$, across NFE budgets. CANDI corresponds to $\sigma_W = 0$ ($W = I$); MDLM is the degenerate-kernel limit.}
    \label{fig:text8}
\end{figure}

\subsection{Solving Sudokus}\label{sec:exp-sudoku}
Sudoku probes the blending axis $W$ when the governing structure is a hard combinatorial constraint graph. The task is to complete a $9\times9$ grid so that every row, column, and $3\times3$ box is a permutation of $\{1,\dots,9\}$, which requires the model to honor board-level constraints rather than local statistics. We encode those constraints directly into the unsticking kernel. The full construction is given in \Cref{app:sudoku}. We then ask whether this constraint-aware corruption helps the model learn the board's logic. \Cref{tab:sudoku_schedule_ablations} isolates three effects of $W$. Across $6$ seeds CADD is bimodal: several runs collapse to near-chance performance. SJD never collapses. Moreover, by board takeoff (the training step at which a seed first resolves complete boards) SJD takes off roughly $4\times$ sooner; even the CADD seeds that avoid collapse begin solving boards far later. Finally, SJD's mean board accuracy exceeds CADD's and the autoregressive solver granted the oracle solving order.

\begin{table}[t]
\centering
\setlength{\tabcolsep}{3.5pt}
\small
\begin{minipage}{0.48\linewidth}
\centering
\captionof{table}{Sudoku board accuracy and convergence. \emph{Acc.}\ is exact-match board accuracy. \emph{Takeoff} is the average training step at which seeds first begin to solve complete boards.}
\label{tab:sudoku_schedule_ablations}
\begin{tabular}{@{}lcc@{}}
\toprule
Method & Acc.\ (\%) $\uparrow$ & Takeoff $\downarrow$ \\
\midrule
\multicolumn{3}{@{}l}{\textit{Autoregressive \citep{shah2024puzzles}}}\\
\quad Left-to-right & 9.73 & NA \\
\quad Solver order  & 87.18 & NA \\
\midrule
\multicolumn{3}{@{}l}{\textit{Masked Diffusion \citep{kim2025trainforworst}}}\\
\quad MDM & 89.49 & NA\\
\midrule
\multicolumn{3}{@{}l}{\textit{Hybrid Diffusion}}\\
\quad CADD [$(W{,}\eta){=}(I{,}1)$] & 47.12 \footnotesize{$\pm$ 47.12} & 203k \\
\quad SJD ($\sigma_W{=}1.5$) & \textbf{95.65 \footnotesize{$\pm$ 00.30}} & \textbf{50k}\\ 
\bottomrule
\end{tabular}
\end{minipage}
\hfill
\begin{minipage}{0.48\linewidth}
\centering
\captionof{table}{FID on CIFAR-10 32$\times$32 for unconditional generation; all methods sampled at 256 NFE.}
\label{tab:cifar10_fid}
\begin{tabular}{@{}lc@{}}
\toprule
Method & FID $\downarrow$ \\
\midrule
\multicolumn{2}{@{}l}{\textit{Hybrid Diffusion}} \\
\quad CADD [$(W{,}\eta){=}(I{,}1)$]          & 15.88 \\
\quad SJD ($\sigma_B{=}2.5$)               & \textbf{14.57} \\
\midrule
\multicolumn{2}{@{}l}{\textit{Masked Diffusion}} \\
\quad MD4 \citep{shi2024md4}               & 17.43 \\
\quad MDLM \citep{sahoo2024mdlm}           & 18.11 \\
\midrule
\multicolumn{2}{@{}l}{\textit{Continuous Diffusion}} \\
\quad Bit Diff. \citep{chen2023analogbits} & 19.82 \\
\quad DDPM \citep{ho2020ddpm}              & \underline{14.66} \\
\bottomrule
\end{tabular}
\end{minipage}
\end{table}

\section{Conclusion}\label{sec:conclusion}
Sticky Jump Diffusions place masked, continuous, and hybrid diffusion inside a single jump-diffusion process, fit by plain cross-entropy via Denoising Hazard Matching. Because the reverse process is the genuine time reversal of the forward law, the hazard rate and destination token are fixed by flux balance, not a hand-tuned schedule. The same forward law turns the reverse hazard into a conditional expectation that a single denoising classifier estimates, with no second network. Where previous hybrids hold the unsticking kernel fixed, our framework treats it as an explicit design space. We empirically study the cross-position axis of that space, centering each position's corruption on a weighted combination of its neighbors' embeddings, a coupling that a per-position kernel cannot express. We hope this process-level account makes masked, continuous, and hybrid diffusion easier to extend: new corruptions, schedules, and commitment rules can be specified as forward dynamics and obtained in reverse for free.

\clearpage

%%%%%%%%%%%%%%%%%%%%%%%%%%%%%%%%%%%%%%%%%%%%%%%%%%%%%%%%%%%%

\bibliography{references}
\bibliographystyle{plainnat}
\clearpage
%%%%%%%%%%%%%%%%%%%%%%%%%%%%%%%%%%%%%%%%%%%%%%%%%%%%%%%%%%%%

\appendix

\section{Notation, Assumptions, and Vocabulary}\label{app:notation}
We use overhead arrows to disambiguate time direction:
\begin{itemize}[leftmargin=*,nosep]
\item $\vec{\cdot}$ denotes \emph{forward} (noising) objects, e.g., $\vec X_t$ (state), $\vec b_t$ (drift), $\vec J_t$ (jump kernel), and $\vec\lambda_t$ (forward hazard of leaving an anchor).
\item $\backvec{\cdot}$ denotes \emph{reverse-time} (denoising) objects, e.g., $\backvec X_\tau$, $\backvec b_t$, $\backvec J_t$, with reverse time $\tau = T-t$.
\end{itemize}

The state space is $\mathsf X=\R^d$ and the (finite or countable) set of anchors is $\cA=\{a_k\}_{k=1}^K\subset \R^d$. 
We write the continuous region as
\[
\mathsf X_\cA := \R^d \setminus \cA .
\]

The forward process $(\vec X_t)_{t\in[0,T]}$ is an Itô diffusion on $\mathsf X_\cA$ with scalar diffusion coefficient $g_t>0$ and drift $\vec b_t$, and it can jump off anchors:
\[
\dd \vec X_t = \vec b_t(\vec X_t)\dd t + g_t\dd \vec W_t \quad \text{if } \vec X_t\in \mathsf X_\cA,
\]
and, if $\vec X_t=a\in\cA$, then with hazard $\vec\lambda_t$ it unsticks to $Y\sim R_a(\cdot)$ where $R_a$ is a fixed probability kernel supported on $\mathsf X_\cA$. 
There are no forward jumps into anchors: if $\vec X_t\in\mathsf X_\cA$ the jump intensity into $\cA$ is zero.

The reverse-time process $(\backvec X_\tau)_{\tau\in[0,T]}$ with $\tau=T-t$ is Markov on $\R^d$ and (under Assumption~\ref{ass:reg}) satisfies on $\mathsf X_\cA$:
\[
\dd \backvec X_\tau = \backvec b_t(\backvec X_\tau)\dd \tau + g_t\dd \backvec W_\tau, \qquad  \backvec b_t(x) = \vec b_t(x) - g_t^2 \nabla \log p_t(x).
\]

We write $R_a(\dd y)$ for the forward unsticking kernel (a probability measure on $\mathsf X_{\cA}$).
Whenever $R_a$ admits a density w.r.t.\ Lebesgue on $\mathsf X_{\cA}$, we write
\(
R_a(\dd y) = r_a(y) \dd y.
\)
Likewise, whenever $p_t$ admits a density on $\mathsf X_{\cA}$ we write $p_t(\dd y)=p_t(y)\dd y$.
All pointwise formulas involving $p_t(y)$ and $r_a(y, t)$ are understood under these absolute-continuity assumptions.

For sequence tasks, $E:\cA\to\R^d$ denotes the embedding map and $W\in\R^{L\times L}$ a blending matrix on positions. The blended mean at position $i$ is $\mu_i(\vec X_0) := \big(WE(\vec X_0)\big)_i$; we recover the per-anchor case $\mu_i = E(a^{(i)})$ at $W = I$.

\section{Preliminaries: Existing Diffusion Frameworks}\label{app:prelim}
We review the model classes that \Cref{sec:connections} identifies with limits of the SJD construction of \Cref{sec:sjd}.

\subsection{Continuous Score-Based Diffusion}\label{app:prelim-continuous}
A score-based diffusion model on $\R^d$ \citep{sohl2015thermo,ho2020ddpm,song2021sde} runs the Itô SDE $\dd \vec X_t = \vec b_t(\vec X_t)\dd t + g_t \dd \vec W_t$ from $\vec X_0 \sim p_{\mathrm{data}}$ to a tractable prior $p_T$. We focus on the variance-preserving (VP) instance with $\vec b_t(x) = -\tfrac12\beta(t)x$ and $g_t = \sqrt{\beta(t)}$, whose conditional marginal kernel is Gaussian,
\[
q_t(\cdot\given x_0) = \cN\big(\cdot; \alpha(t) x_0, \sigma^2(t) I_d\big), \qquad \alpha(t) = \exp\Big(-\tfrac12 \int_0^t\beta(s)\dd s\Big),\ \ \sigma^2(t) = 1 - \alpha^2(t),
\]
and converges to $\cN(0, I_d)$ as $t \to T$. The reverse-time process is the Itô SDE $\dd \backvec X_\tau = (-\vec b_t(\backvec X_\tau) + g_t^2 \nabla \log p_t(\backvec X_\tau))\dd \tau + g_t \dd \backvec W_\tau$ with $\tau = T-t$ \citep{nelson1967dynamical,Anderson1982ReversetimeDE,song2021sde}. The score $\nabla \log p_t$ is learned by denoising score matching \citep{hyvarinen2005scorematching,vincent2011denoisingscorematching}: a network $s_\theta(y,t)$ minimizes $\E[\|s_\theta(y,t) - \nabla_y \log q_t(y\given x_0)\|^2]$, the conditional score being available in closed form for VP. For discrete data, this construction requires an external map to the support like a separately-trained decoder \citep{li2022diffusionlm,dieleman2022cdcd,lovelace2023ld4lg}, a quantization step \citep{chen2023analogbits}, or a Doob $h$-transform \citep{liu2023constraindiff}.

\subsection{Masked Diffusion}\label{app:prelim-masked}
Absorbing-state masked diffusion \citep{austin2021d3pm,sahoo2024mdlm,shi2024md4} operates on a discrete state space $\cV \cup \{\mathsf m\}$ with a special mask symbol $\mathsf m$. The forward process keeps each clean token with probability $\alpha_t$ and replaces it by $\mathsf m$ otherwise, where $\alpha_t$ is a smooth survival schedule with $\alpha_0 = 1$ and $\alpha_T = 0$; in continuous time the clean-to-mask hazard is $\beta(t) = -\dot\alpha_t/\alpha_t$. Time reversal yields a Markov chain in which masked positions un-mask at total rate $\beta(t)\alpha_t/(1-\alpha_t)$ to a token drawn from the denoising posterior $p_t(v \given \mathsf m, x_t) \propto p_0(v)\Pr(\mathsf m \given v, t)$, while clean positions remain absorbing. A classifier $P_\theta(v \given x_t, t)$ is trained by cross-entropy on masked positions and substitutes for the posterior at sampling time. The per-position factorization is exact in the forward law and couples positions only through the classifier's context input; once a position commits, the only signal carried about its proximity to alternative tokens is whatever the classifier reconstructs from the surviving context.

\subsection{Hybrid Discrete-Continuous Diffusion}\label{app:prelim-hybrid}
Hybrid models \citep{zheng2025cadd,pynadath2025candi,zhou2025ccdd} couple a masked chain with a continuous latent that carries graded information at masked positions. CADD \citep{zheng2025cadd} represents the state at time $t$ as a pair $(x_t, z_t) \in (\cV \cup \{\mathsf m\}) \times \R^d$. The discrete component $x_t$ evolves as the masked chain of \Cref{app:prelim-masked} with survival $\alpha_t$, and the continuous component, defined for masked sites, follows the VP perturbation kernel of an embedding,
\[
z_t \given x_0 \sim \cN\big(\sqrt{\bar\gamma_t} E(x_0), (1-\bar\gamma_t) I_d\big), \qquad \text{when the site is masked at time $t$},
\]
with $\bar\gamma_t$ a VP-style schedule and $E$ a fixed embedding map. Time reversal couples the discrete un-masking transition with a score-driven update on $z_\tau$; both are predicted from a single network conditioned on $(x_t, z_t, t)$. CANDI \citep{pynadath2025candi} keeps the masked-position augmentation and varies the reverse parameterization. CCDD \citep{zhou2025ccdd} instead corrupts the continuous component at every position under a factored forward law $q_t(x_t,z_t\given x_0)=q_t^{\mathrm{disc}}(x_t\given x_0)q_t^{\mathrm{cont}}(z_t\given z_0)$, with possibly asynchronous schedules, so its forward lies outside the $(I,1)$ configuration.

\subsection{Framework-Level Unifications}\label{sec:conn-unification}
Generator Matching \citep{holderrieth2025generatormatching} learns the marginal generator of an arbitrary Markov process from conditional generators, and denoising Markov models \citep{benton2024dmm} extend score matching to general state spaces; both are abstract recipes that must be instantiated per process class. SJD is such an instantiation, worked out end to end for jump-diffusions whose marginals mix a continuous density with anchor atoms: the flux identity \eqref{eq:flux} gives the reverse jump kernel in closed form, DHM (\Cref{thm:dhm}) turns the reverse hazard into a conditional expectation that a classifier estimates, training is simulation-free, and the unsticking kernel exposes the $(W,\eta)$ design space our experiments study. Neither framework supplies these objects for the sticky, atom-carrying marginal.

\section{Theoretical Details and Proofs}\label{app:proofs}
\subsection{\texorpdfstring{Proof of \Cref{thm:anchored-time-reversal}}{Proof of Theorem \ref{thm:anchored-time-reversal}}}
\begin{assumption}[Regularity]\label{ass:reg}~
\begin{enumerate}
\item $\vec b_t$ and $g_t$ are continuous in $t$, locally Lipschitz in $x$, with $g_t\in(0,\infty)$.
\item For $x\in\cA$, $\vec J_t(x,\dd y)=\vec\lambda_t(x)R_t(\dd y\given x)$ on $\mathsf X_\cA$; for $x\in\mathsf X_\cA$, $\vec J_t(x,\cdot)\equiv 0$.
\item For almost every \ $t\in(0,T)$, $p_t$ admits a density on $\mathsf X_\cA$ and atoms on $\cA$.
\end{enumerate}
\end{assumption}
\begin{theorem*}[Time reversal for jump--diffusions, restated]
Under Assumption~\ref{ass:reg}, define the time-reversed process
\(
\backvec X_\tau := \vec X_{T-\tau},\ \tau\in[0,T],
\)
and denote the corresponding forward time by \(t:=T-\tau\).
Then \((\backvec X_\tau)_{\tau\in[0,T]}\) is Markov.

\medskip
\noindent
\textbf{(Diffusion part on }\(\mathsf X_\cA\)\textbf{).}
On the continuous region \(\mathsf X_\cA=\R^d\setminus\cA\), the reversed process admits an SDE
\[
\dd \backvec X_\tau = \tilde b_t(\backvec X_\tau)\dd\tau + g_t\dd \backvec W_\tau,
\qquad
\tilde b_t(x)= -\vec b_t(x)+g_t^2\nabla\log p_t(x),
\quad x\in\mathsf X_\cA.
\]
\noindent
\textbf{(Jump part: flux identity).}
The reverse jump kernel \(\backvec J_t\) is uniquely determined (for a.e. \(t\)) by the flux equation
\begin{equation}\label{eq:app-flux-restated}
p_t(\dd y)\backvec J_t(y,\dd x)=p_t(\dd x)\vec J_t(x,\dd y).
\end{equation}
In particular, since \(\vec J_t(a,\dd y)=\vec\lambda_t(a)R_t(\dd y\given a)\) for \(a\in\cA\) and \(\vec J_t(x,\cdot)\equiv 0\) for \(x\in\mathsf X_\cA\), we have for every anchor \(a\in\cA\) the measure identity on \(\mathsf X_\cA\):
\begin{equation}\label{eq:app-sticky-measure}
p_t(\dd y)\backvec J_t(y,\{a\}) = p_t(\{a\})\vec\lambda_t(a)R_t(\dd y\given a).
\end{equation}
If moreover \(p_t(\dd y)=p_t(y)\dd y\) on \(\mathsf X_\cA\) and \(R_t(\dd y\given a)=r_t(y\given a)\dd y\), then for \(y\in\mathsf X_\cA\),
\[
\backvec J_t(y,\{a\})=\frac{p_t(\{a\})}{p_t(y)}\vec\lambda_t(a)r_t(y\given a).
\]
Moreover \(\backvec J_t(a,\mathsf X_\cA)=0\), i.e. anchors are absorbing in reverse time.
\end{theorem*}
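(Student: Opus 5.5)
We propose to prove the restated theorem by the generator/duality route for time reversal of Markov processes, in the spirit of \citet{conforti2022timereversal}. First, Markovianity of $\backvec X_\tau=\vec X_{T-\tau}$ is automatic: the time reversal of a Markov process is Markov, with transition law given by Bayes' rule applied to the forward transitions and the one-time marginals $p_t$. The work lies in identifying the generator of the reversed process. Write the forward generator as $\vec L_t=\vec L^{\mathrm{d}}_t+\vec L^{\mathrm{j}}_t$. Here $\vec L^{\mathrm{d}}_t f=\vec b_t\cdot\nabla f+\tfrac12 g_t^2\Delta f$ acts on $\mathsf X_\cA$, and $\vec L^{\mathrm{j}}_tf(x)=\int (f(y)-f(x))\vec J_t(x,\dd y)$, which is nonzero only for $x\in\cA$ by Assumption~\ref{ass:reg}(ii). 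The reversed generator $\backvec L_t$ is characterized by the duality identity
\[
\int g\,\vec L_t f\,\dd p_t-\int f\,\backvec L_t g\,\dd p_t=\partial_t\!\int fg\,\dd p_t
\]
for smooth compactly supported test functions $f,g$. Equivalently, $\backvec L_t$ is the $p_t$-adjoint of $\vec L_t$ plus the correction coming from $\partial_t p_t=\vec L_t^{*}p_t$.

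Second, we split this identity into its diffusion and jump parts. For the diffusion part, we take $f,g$ supported in $\mathsf X_\cA$. The anchors are not hit by the forward SDE, since trajectories never return to $\cA$ and Assumption~\ref{ass:reg}(iii) gives a density. We then integrate by parts against $p_t(y)\dd y$, as in the classical Anderson/Haussmann--Pardoux computation, and use the forward Fokker--Planck equation on $\mathsf X_\cA$. The only source term in that equation is $\sum_a p_t(\{a\})\vec\lambda_t(a)r_t(\cdot\given a)$. This yields a second-order part $\tfrac12g_t^2\Delta$ and a drift $-\vec b_t+g_t^2\nabla\log p_t$, and hence the reverse SDE. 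The source term does not affect the second-order part; it is accounted for by the jump part.

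For the jump part, we write the jump contribution to the duality as $\iint g(x)(f(y)-f(x))\,p_t(\dd x)\vec J_t(x,\dd y)$. We require it to match $\iint f(y')(g(x')-g(y'))\,p_t(\dd y')\backvec J_t(y',\dd x')$ up to the diagonal terms, which cancel against the mass-balance part of $\partial_t p_t$. Matching the off-diagonal terms for all $f,g$ forces the measure identity
\[
p_t(\dd x)\vec J_t(x,\dd y)=p_t(\dd y)\backvec J_t(y,\dd x)
\]
on $\R^d\times\R^d$ off the diagonal; this is \eqref{eq:app-flux-restated}. Uniqueness follows because a measure on the product space determines its disintegration against $p_t(\dd y)$ $p_t$-a.e. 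Specializing $x=a$ and using Assumption~\ref{ass:reg}(ii) gives \eqref{eq:app-sticky-measure}. Taking Radon--Nikodym derivatives with respect to Lebesgue measure on $\mathsf X_\cA$ gives the pointwise rate $\backvec J_t(y,\{a\})=p_t(\{a\})\vec\lambda_t(a)r_t(y\given a)/p_t(y)$. Finally, anchors are absorbing in reverse time. For $y=a\in\cA$, the right side of the flux identity involves $\vec J_t(x,\{a\})$, which is $0$ for every $x$ because there are no forward jumps into anchors. Hence $p_t(\{a\})\backvec J_t(a,\cdot)=0$, and since $p_t(\{a\})>0$ whenever the anchor carries mass, $\backvec J_t(a,\mathsf X_\cA)=0$.

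The main obstacle is making the duality rigorous. We need enough integrability and regularity of $p_t$ to justify the integration by parts near the anchors, which are points in $\R^d$ where the density may blow up. We also need the martingale problem for the candidate reverse generator to be well posed, so that it identifies the law of $\backvec X$. The first thing to try is to cite the general jump-diffusion time-reversal result of \citet{conforti2022timereversal}, which assumes only a finite-entropy/local integrability condition. We would verify that condition using $d\ge2$, where points are polar for Brownian motion so the anchors are never hit, together with Assumption~\ref{ass:reg}. Otherwise we would localize with test functions vanishing in neighborhoods of $\cA$ and pass to the limit.
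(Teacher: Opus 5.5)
Your proposal is correct and follows essentially the paper's route. Both arguments rest on the Conforti--L\'eonard duality, read off the flux identity from off-diagonal test functions, specialize it to anchors (including absorption), and recover the score drift from test functions supported in $\mathsf X_\cA$, where the reverse sticky-jump killing term $-u\,\backvec{\lambda_t^\star}$ cancels the forward source. Your $\partial_t\int fg\,\dd p_t$ formulation with the Fokker--Planck source $\sum_{a} p_t(\{a\})\vec\lambda_t(a)\,r_t(\cdot\given a)$ is equivalent to the paper's carr\'e-du-champ identity, in which that same source enters as the atomic contribution $\Gamma_t(u,v)(a)$.
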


\begin{proof}
Fix \(t\in(0,T)\) for which the marginal \(p_t\) admits the decomposition
\[
p_t = p_t^{\mathrm{ac}} + p_t^{\mathrm{at}},
\qquad
p_t^{\mathrm{at}}=\sum_{a\in\cA} m_t(a)\delta_a,
\qquad
p_t^{\mathrm{ac}}(\dd x)=p_t(x)\dd x \ \text{on }\mathsf X_\cA,
\]
as in Assumption~\ref{ass:reg}(iii), and let \(\mathcal U = C_c^2(\R^d)\).

The proof extracts the backward generator \(\backvec L_t\) from a single integration-by-parts identity, taken from \citet{conforti2022timereversal}, by isolating the two pieces of structural information that identity carries. We first manipulate the jump contribution to recover the flux identity \eqref{eq:app-flux-restated}; specializing to the anchored forward kernel then yields the explicit form of \(\backvec J_t\) and the absorbing property at anchors. The diffusion contribution determines the drift \(\tilde b_t\) via distributional integration by parts; the boundary terms generated by the atomic part of \(p_t\) cancel against the backward sticky jumps, with the cancellation supplied by the flux identity itself.

\paragraph{The integration-by-parts identity.}
The forward generator on \(u\in\mathcal U\) is
\begin{equation}\label{eq:app-fwd-generator}
    \vec L_t u(x) = \mathds{1}_{\mathsf X_\cA}(x)\left[\vec b_t(x)\cdot\nabla u(x) + \tfrac12 g_t^2\Delta u(x)\right] + \int_{\R^d}\big(u(y)-u(x)\big)\vec J_t(x,\dd y),
\end{equation}
with \(\vec J_t(x,\cdot)\equiv 0\) on \(\mathsf X_\cA\) and \(\vec J_t(a,\dd y)=\vec\lambda_t(a)R_t(\dd y\given a)\) on \(\cA\); the corresponding carré du champ is
\[
\Gamma_t(u,v)(x)
= \mathds{1}_{\mathsf X_\cA}(x)g_t^2\nabla u(x)\cdot\nabla v(x)
+ \int_{\R^d}\big(u(y)-u(x)\big)\big(v(y)-v(x)\big)\vec J_t(x,\dd y).
\]
Under Assumption~\ref{ass:reg}, \(\mathcal U\) is a measure-determining algebra contained in \(\mathrm{dom}\vec L_t\), and \citet[Theorem~4.4]{conforti2022timereversal} ensures the existence of a backward generator \(\backvec L_t\) (defined \(p_t\)-a.e.) such that, for all \(u,v\in\mathcal U\) and a.e.\ \(t\),
\begin{equation}\label{eq:ibp-restated}
\int_{\R^d}\Big[\big(\vec L_t u + \backvec L_t u\big)(x)v(x) + \Gamma_t(u,v)(x)\Big]p_t(\dd x)=0.
\end{equation}
Equation~\eqref{eq:ibp-restated} is the only analytic input to the proof: the flux identity, the anchored jump formula, and the score-form drift will all be consequences of choosing test functions \(u,v\) appropriately. Conservation of the continuous-martingale quadratic variation under time reversal forces the reversed diffusion coefficient to coincide with \(g_t\), so \(\backvec L_t\) takes the form
\begin{equation}\label{eq:bwd-generator-ansatz}
\backvec L_t u(x)
= \mathds{1}_{\mathsf X_\cA}(x)\left[\tilde b_t(x)\cdot\nabla u(x) + \tfrac12 g_t^2\Delta u(x)\right]
+ \int_{\R^d}\big(u(y)-u(x)\big)\backvec J_t(x,\dd y),
\end{equation}
for an unknown drift \(\tilde b_t\) and an unknown backward jump kernel \(\backvec J_t\), both of which we now identify.

\paragraph{Step 1: flux identity.}
We first isolate the jump contribution to \eqref{eq:ibp-restated}. The diffusion parts of \(\vec L_t\), \(\backvec L_t\), and \(\Gamma_t\) satisfy a closed integration-by-parts identity on their own (the standard score-based time-reversal calculation in the absence of jumps; cf.\ \citet[Theorem~5.7]{conforti2022timereversal}) so subtracting it from \eqref{eq:ibp-restated} leaves the jump-only identity
\begin{equation}\label{eq:ibp-jump-only-restated}
\int_{\R^d}\Big[\big(\vec L_t^{\mathrm{jump}} u + \backvec L_t^{\mathrm{jump}} u\big)(x)v(x) + \Gamma_t^{\mathrm{jump}}(u,v)(x)\Big]p_t(\dd x)=0,
\end{equation}
where the superscript ``jump'' picks out the integral term of the corresponding operator. Expanding the three integrals,
\begin{align*}
    0 = & \iint v(x)\big(u(y)-u(x)\big)\vec J_t(x,\dd y)p_t(\dd x) + \iint v(x)\big(u(y)-u(x)\big)\backvec J_t(x,\dd y)p_t(\dd x) \\ 
    & {} + \iint \big(u(y)-u(x)\big)\big(v(y)-v(x)\big)\vec J_t(x,\dd y)p_t(\dd x).
\end{align*}
The algebraic identity
\[
    v(x)\big(u(y)-u(x)\big) + \big(u(y)-u(x)\big)\big(v(y)-v(x)\big) = \big(u(y)-u(x)\big)v(y)
\]
collapses the first and third terms of the right-hand side, leaving
\[
\iint v(x)\big(u(y)-u(x)\big)\backvec J_t(x,\dd y)p_t(\dd x)
= -\iint \big(u(y)-u(x)\big)v(y)\vec J_t(x,\dd y)p_t(\dd x).
\]
Swap the dummy variables \(x\leftrightarrow y\) on the right and use \(u(x)-u(y)=-\big(u(y)-u(x)\big)\):
\[
\iint v(x)\big(u(y)-u(x)\big)\backvec J_t(x,\dd y)p_t(\dd x)
= \iint v(x)\big(u(y)-u(x)\big)\vec J_t(y,\dd x)p_t(\dd y).
\]
The class of test functions \((x,y)\mapsto v(x)\big(u(y)-u(x)\big)\) for \(u,v\in\mathcal U\) is measure-determining off the diagonal, so the two measures on \(\R^d\times\R^d\) coincide:
\begin{equation}\label{eq:flux-derived}
p_t(\dd x)\backvec J_t(x,\dd y)=p_t(\dd y)\vec J_t(y,\dd x),
\end{equation}
which is exactly the flux identity \eqref{eq:app-flux-restated}.

\paragraph{Step 2: anchored jumps.}
We now specialize \eqref{eq:flux-derived} to the structure of the SJD forward kernel. By Assumption~\ref{ass:reg}(ii), \(\vec J_t(x,\cdot)\equiv 0\) for \(x\in\mathsf X_\cA\) and \(\vec J_t(a,\dd y)=\vec\lambda_t(a)R_t(\dd y\given a)\) for \(a\in\cA\). Fix \(a\in\cA\) and apply \eqref{eq:flux-derived} with \(\dd x = \{a\}\) and \(\dd y\subset\mathsf X_\cA\):
\begin{equation}\label{eq:flux-specialized}
p_t(\dd y)\backvec J_t(y,\{a\})
= p_t(\{a\})\vec J_t(a,\dd y)
= m_t(a)\vec\lambda_t(a)R_t(\dd y\given a),
\end{equation}
which is exactly \eqref{eq:app-sticky-measure}. If \(p_t\) admits density \(p_t(y)\) on \(\mathsf X_\cA\) and \(R_t(\dd y\given a)=r_t(y\given a)\dd y\), dividing both sides of \eqref{eq:flux-specialized} by \(p_t(y)\dd y\) gives the closed form
\[
\backvec J_t(y,\{a\}) = \frac{m_t(a)}{p_t(y)}\vec\lambda_t(a)r_t(y\given a),\qquad y\in\mathsf X_\cA,\ a\in\cA.
\]
Conversely, applying \eqref{eq:flux-derived} with \(\dd y=\{a\}\) and any \(\dd x\subset\mathsf X_\cA\),
\[
p_t(\{a\})\backvec J_t(a,\dd x) = p_t(\dd x)\vec J_t(x,\{a\}) = 0,
\]
since the forward kernel does not jump out of \(\mathsf X_\cA\). Hence \(\backvec J_t(a,\mathsf X_\cA)=0\): anchors are absorbing in reverse time.

The total backward sticky-jump rate at \(y\in\mathsf X_\cA\) is \(\backvec{\lambda_t^\star}(y) := \sum_{a\in\cA}\backvec J_t(y,\{a\})\). Summing \eqref{eq:flux-specialized} over \(a\) yields the identity
\begin{equation}\label{eq:flux-summed}
\sum_{a\in\cA} m_t(a)\vec\lambda_t(a)r_t(y\given a)
= p_t(y)\backvec{\lambda_t^\star}(y),\qquad y\in\mathsf X_\cA,
\end{equation}
which we will use in Step~3.

\paragraph{Step 3: diffusion drift.}
Take \(u,v\in\mathcal U\) with supports contained in \(\mathsf X_\cA\), so that \(u(a)=v(a)=0\) for every \(a\in\cA\). We catalogue the contributions of \eqref{eq:ibp-restated} term by term.

\emph{Forward generator.}
On \(\mathsf X_\cA\), \(\vec L_t u(x) = \vec b_t(x)\cdot\nabla u(x) + \tfrac12 g_t^2 \Delta u(x)\) since \(\vec J_t(x,\cdot)\equiv 0\). At \(x=a\in\cA\), the contribution \(\vec L_t u(a)v(a)\) vanishes because \(v(a)=0\).

\emph{Backward generator.}
On \(\mathsf X_\cA\), the ansatz \eqref{eq:bwd-generator-ansatz}, the result of Step~2, and \(u(a)=0\) give
\[
\backvec L_t u(x)
= \tilde b_t(x)\cdot\nabla u(x) + \tfrac12 g_t^2\Delta u(x)
+ \sum_{a\in\cA}\big(u(a)-u(x)\big)\backvec J_t(x,\{a\})
= \tilde b_t\cdot\nabla u + \tfrac12 g_t^2\Delta u - u\backvec{\lambda_t^\star}.
\]
The sticky jumps to anchors thus contribute the additional term \(-u(x)\backvec{\lambda_t^\star}(x)\) to \(\backvec L_t u\) on \(\mathsf X_\cA\), even though \(u\) is supported off the anchors. At \(x=a\), \(\backvec L_t u(a)v(a)=0\) since \(v(a)=0\).

\emph{Carré du champ.}
On \(\mathsf X_\cA\), \(\Gamma_t(u,v)(x) = g_t^2\nabla u\cdot\nabla v\) since \(\vec J_t(x,\cdot)\equiv 0\). At \(x=a\in\cA\), \(u(a)=v(a)=0\) leaves the nontrivial contribution
\[
\Gamma_t(u,v)(a) = \int_{\R^d} u(y)v(y)\vec\lambda_t(a)R_t(\dd y\given a).
\]

Combining the three contributions in \eqref{eq:ibp-restated},
\begin{multline}\label{eq:ibp-expanded-drift}
0 = \int_{\mathsf X_\cA}\Big[(\vec b_t+\tilde b_t)\cdot\nabla uv
+ g_t^2\Delta uv
+ g_t^2\nabla u\cdot\nabla v
- uv\backvec{\lambda_t^\star}\Big]p_t(x)\dd x \\
{}+ \sum_{a\in\cA} m_t(a)\vec\lambda_t(a)\int_{\mathsf X_\cA} u(y)v(y)r_t(y\given a)\dd y.
\end{multline}
The atomic carré-du-champ contribution and the backward sticky-jump contribution cancel exactly: by \eqref{eq:flux-summed},
\[
\sum_{a\in\cA} m_t(a)\vec\lambda_t(a)\int_{\mathsf X_\cA} uvr_t(y\given a)\dd y
= \int_{\mathsf X_\cA} u(y)v(y)\backvec{\lambda_t^\star}(y)p_t(y)\dd y,
\]
so the second line of \eqref{eq:ibp-expanded-drift} kills the \(-uv\backvec{\lambda_t^\star}\) term in the bulk integral. What remains is the standard diffusion-IbP identity
\begin{equation}\label{eq:diffusion-ibp}
\int_{\mathsf X_\cA}\Big[(\vec b_t+\tilde b_t)\cdot\nabla uv
+ g_t^2\Delta uv
+ g_t^2\nabla u\cdot\nabla v\Big]p_t(x)\dd x = 0.
\end{equation}
Integrating by parts in space (boundary terms vanish since \(u,v\) are compactly supported in \(\mathsf X_\cA\)),
\[
\int v\Delta up_t\dd x
= -\int \nabla u\cdot\nabla(v p_t)\dd x
= -\int \nabla u\cdot\big(p_t\nabla v + v\nabla p_t\big)\dd x,
\]
so the bracketed combination simplifies:
\[
\int g_t^2\big(v\Delta u + \nabla u\cdot\nabla v\big)p_t\dd x
= -g_t^2\int v\nabla u\cdot\nabla p_t\dd x.
\]
Therefore \eqref{eq:diffusion-ibp} reduces to
\[
0 = \int_{\mathsf X_\cA} v(x)A(x)\cdot\nabla u(x)\dd x,
\qquad
A(x):=p_t(x)\big(\vec b_t(x)+\tilde b_t(x)\big) - g_t^2\nabla p_t(x).
\]
A standard distributional argument forces \(A\equiv 0\) on \(\mathsf X_\cA\): for any \(\varphi\in C_c^\infty(\mathsf X_\cA)\) and any cutoff \(\chi\in C_c^\infty(\mathsf X_\cA)\) with \(\chi\equiv 1\) on \(\mathrm{supp}(\varphi)\), set \(u(x):=x_i\chi(x)\) and \(v(x):=\varphi(x)\), so that \(\nabla u=e_i\) on \(\mathrm{supp}(\varphi)\) and the identity above reads \(0 = \int_{\mathsf X_\cA}\varphi A_i\dd x\). As \(\varphi\) ranges over \(C_c^\infty(\mathsf X_\cA)\), \(A_i=0\) a.e.\ on \(\mathsf X_\cA\); ranging \(i\in\{1,\ldots,d\}\) yields \(A\equiv 0\) a.e. Dividing by \(p_t(x)>0\),
\[
\tilde b_t(x) = -\vec b_t(x) + g_t^2\nabla\log p_t(x),\qquad x\in\mathsf X_\cA,
\]
which is the claimed reverse drift, recovering the score-based time-reversal of \citet{song2021sde} on the off-anchor region.
\end{proof}

\subsection{\texorpdfstring{Proof of \Cref{thm:dhm}}{Proof of Theorem \ref{thm:dhm}}}\label{app:dhm}

We restate \Cref{thm:dhm} with the relevant SJD-specific objects in scope, then give the proof.

\begin{theorem*}[Denoising Hazard Matching, restated]
Let $\vec X_0\sim p_0$ on $\cA$ and, conditional on $\vec X_0$, let $\vec X_t$ be the SJD state at time $t$ restricted to the continuous region $\mathsf X_\cA$, with conditional density $p_t^{\mathrm{ac}}(\cdot\given \vec X_0)$ given by the unstick-time mixture of \Cref{rem:mixture}. For $y\in\mathsf X_\cA$ and $x_0\in\cA$, define the DHM target
\begin{equation}\label{eq:app-dhm-target}
\widehat\lambda_t(y, x_0) := \frac{\vec\lambda_t(x_0)S_{x_0}(t)r_{x_0}(y, t)}{p_t^{\mathrm{ac}}(y\given x_0)},
\end{equation}
with $S_{x_0}(t)=\exp\big(-\int_0^t \vec\lambda_s(x_0)\dd s\big)$. Then for almost every $y\in\mathsf X_\cA$,
\begin{equation}\label{eq:app-dhm-identity}
\E\big[\widehat\lambda_t(\vec X_t, \vec X_0)\bigm| \vec X_t = y\big] = \backvec{\lambda_t^\star}(y),
\end{equation}
where $\backvec{\lambda_t^\star}(y)$ is the total reverse jump rate of \eqref{eq:true-total-intensity}.
\end{theorem*}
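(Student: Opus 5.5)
The identity is a direct Bayes computation: write the conditional expectation as a finite sum over anchors weighted by the denoising posterior \eqref{eq:denoising-posterior}, and let the mixture density $p_t^{\mathrm{ac}}(y\given x_0)$ in the denominator of the target \eqref{eq:app-dhm-target} cancel against the same factor in the posterior numerator. Concretely, fix $y\in\mathsf X_\cA$ with $p_t(y)>0$. Because $\vec X_0$ takes values in the countable set $\cA$, the regular conditional law of $\vec X_0$ given $\vec X_t=y$ is $p_t(a\given y)=p_0(a)p_t^{\mathrm{ac}}(y\given a)/p_t(y)$, with $p_t(y)=\sum_b p_0(b)p_t^{\mathrm{ac}}(y\given b)$ the off-anchor density. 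Hence
\[
\E\big[\widehat\lambda_t(\vec X_t,\vec X_0)\bigm|\vec X_t=y\big]=\sum_{a\in\cA}\widehat\lambda_t(y,a)\,\frac{p_0(a)p_t^{\mathrm{ac}}(y\given a)}{p_t(y)}.
\]

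Next, substitute the target. The factor $p_t^{\mathrm{ac}}(y\given a)$ cancels term by term, leaving
\[
\frac{1}{p_t(y)}\sum_{a\in\cA}\vec\lambda_t(a)\,p_0(a)S_a(t)\,r_t(y\given a).
\]
The remaining step is to identify $p_0(a)S_a(t)$ with the anchor mass $m_t(a)=p_t(\{a\})$. Since there are no forward jumps into $\cA$ (\Cref{def:forward}), a particle is at anchor $a$ at time $t$ if and only if it started at $a$ and has not yet unstuck. This event has probability $p_0(a)S_a(t)$, because the unstick time has hazard $\vec\lambda_s(a)$ (\Cref{rem:mass-split}). The sum therefore equals $\sum_a m_t(a)\vec\lambda_t(a)r_t(y\given a)/p_t(y)$. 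By \eqref{eq:true-total-intensity}, or equivalently the summed flux identity \eqref{eq:flux-summed} from the proof of \Cref{thm:anchored-time-reversal}, this is $\backvec{\lambda_t^\star}(y)$.

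I expect the main obstacle to be measure-theoretic care rather than algebra. The target is undefined where $p_t^{\mathrm{ac}}(y\given a)=0$. On that set, however, the posterior weight of $a$ vanishes, so we adopt the convention $\widehat\lambda_t\cdot 0=0$. We must then check that the surviving term $p_0(a)S_a(t)\vec\lambda_t(a)r_t(y\given a)$ also vanishes for a.e.\ such $y$. This follows from the mixture of \Cref{rem:mixture}: $p_t^{\mathrm{ac}}(\cdot\given a)\ge 0$ dominates contributions from unstick times near $t$, so $r_t(\cdot\given a)$ is absolutely continuous with respect to it up to a null set. Under the Gaussian kernels used in practice both are strictly positive and the issue disappears. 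For countable $\cA$, the exchange of sum and conditional expectation is justified by nonnegativity (Tonelli). The set $\{p_t(y)=0\}$ is null under the law of $\vec X_t$, which explains the ``almost every $y$'' qualifier.
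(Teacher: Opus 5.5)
Your proof is correct and follows the same route as the paper's proof. You write the conditional expectation as a finite sum against the Bayes posterior $p_0(a)p_t^{\mathrm{ac}}(y\given a)/p_t(y)$, cancel $p_t^{\mathrm{ac}}(y\given a)$ term by term, and use the no-re-attachment identity $m_t(a)=p_0(a)S_a(t)$ to arrive at \eqref{eq:true-total-intensity}.

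Your discussion of the set where $p_t^{\mathrm{ac}}(y\given a)=0$ goes beyond the paper. The paper simply divides there; the matching support condition $r_a(y,t)>0\Rightarrow p_t^{\mathrm{ac}}(y\given a)>0$ is stated only later, in \Cref{app:rb-plugin}. Your ``unstick times near $t$'' argument is not automatic. It needs, for example, a hazard that does not vanish on $(0,t)$, or full support of the semigroup. It is cleaner to state that condition as an explicit assumption.
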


\begin{proof}
The proof is a direct application of Bayes' rule for the SJD posterior. The crucial point is that the $p_t^{\mathrm{ac}}(y\given a)$ in the denominator of \eqref{eq:app-dhm-target} is the actual conditional density of the SJD continuous part and not an auxiliary corruption kernel. The same object therefore reappears in the denoising posterior \eqref{eq:denoising-posterior}, and the two factors cancel.

Because the SJD forward dynamics never re-attach to anchors, the only way for a particle to be at anchor $a$ at time $t$ is to have started at $a$ and never unstuck. Hence
\begin{equation}\label{eq:app-mass-split}
m_t(a) = p_0(a)S_a(t),\qquad a\in\cA.
\end{equation}

Let $y\in\mathsf X_\cA$. Since $\vec X_0$ takes values in the finite set $\cA$, the joint density of $(\vec X_0,\vec X_t)$ on $\cA\times\mathsf X_\cA$ is $p_0(a)p_t^{\mathrm{ac}}(y\given a)$, and the marginal continuous density of $\vec X_t$ is
\[
p_t(y) = \sum_{a\in\cA} p_0(a)p_t^{\mathrm{ac}}(y\given a),\qquad y\in\mathsf X_\cA.
\]
Bayes' rule gives the denoising posterior \eqref{eq:denoising-posterior},
\begin{equation}\label{eq:app-bayes-posterior}
p_t(a\given y) = \Pr\big(\vec X_0 = a\bigm| \vec X_t = y\big) = \frac{p_0(a)p_t^{\mathrm{ac}}(y\given a)}{p_t(y)}.
\end{equation}

Because $\vec X_0$ is supported on $\cA$, the conditional expectation in \eqref{eq:app-dhm-identity} is a finite sum:
\begin{equation}\label{eq:app-dhm-discrete}
\E\big[\widehat\lambda_t(\vec X_t, \vec X_0)\bigm| \vec X_t = y\big] = \sum_{a\in\cA} \widehat\lambda_t(y, a)p_t(a\given y).
\end{equation}

Substituting \eqref{eq:app-dhm-target} and \eqref{eq:app-bayes-posterior} into \eqref{eq:app-dhm-discrete},
\[
\sum_{a\in\cA}\widehat\lambda_t(y, a)p_t(a\given y)
=
\sum_{a\in\cA}\frac{\vec\lambda_t(a)S_a(t)r_a(y, t)}{p_t^{\mathrm{ac}}(y\given a)}\cdot\frac{p_0(a)p_t^{\mathrm{ac}}(y\given a)}{p_t(y)}.
\]
The factor $p_t^{\mathrm{ac}}(y\given a)$ cancels exactly:
\[
=\frac{1}{p_t(y)}\sum_{a\in\cA} p_0(a)S_a(t)\vec\lambda_t(a)r_a(y, t).
\]
Applying the mass identity \eqref{eq:app-mass-split},
\[
=\frac{1}{p_t(y)}\sum_{a\in\cA} m_t(a)\vec\lambda_t(a)r_a(y, t)
=\backvec{\lambda_t^\star}(y),
\]
where the last equality is the definition of the total reverse jump rate \eqref{eq:true-total-intensity}.
This proves \eqref{eq:app-dhm-identity}.
\end{proof}

\subsection{Rao--Blackwellized Plug-In}\label{app:rb-plugin}

\Cref{thm:dhm} expresses the total reverse hazard $\backvec{\lambda_t^\star}(y)$ as a conditional expectation under the SJD forward pairing. We now record three properties that justify the plug-in estimators \eqref{eq:plugin-estimators}: a per-anchor flux identity, consistency of the plug-in when the classifier matches the true posterior, and the Rao--Blackwell variance reduction.

\paragraph{Setup.}
Throughout this subsection we fix $t\in(0,T]$ and $y\in\mathsf X_\cA$, and write $\widehat\lambda_t(y,a)$ for the DHM target \eqref{eq:app-dhm-target} and $p_t(a\given y)$ for the SJD denoising posterior \eqref{eq:app-bayes-posterior}. We assume the support condition $r_a(y,t)>0\Rightarrow p_t^{\mathrm{ac}}(y\given a)>0$, which is automatic whenever the convolution semigroup $r_\tau * K_{\tau\to t}$ has full support on $\mathsf X_\cA$ (in particular for the VP-matched family of \Cref{ex:vp-matched}). Given any probability vector $P_\theta(\cdot\given y, t)$ on $\cA$, recall the plug-in estimators
\begin{equation}\label{eq:app-plugin-restated}
    \Lambda_{\mathrm{plug},\theta}(a\given y, t) := \widehat\lambda_t(y, a)P_\theta(a\given y, t),\qquad \lambda_{\mathrm{plug},\theta}(y, t) := \sum_{a\in\cA}\Lambda_{\mathrm{plug},\theta}(a\given y, t).
\end{equation}

\paragraph{Per-anchor flux identity.}
The following identity 
%behind \Cref{rem:reweight-irreducible}
is the per-anchor analogue of \Cref{thm:dhm}: the true reverse \emph{per-anchor} intensity factors exactly as the DHM target times the denoising posterior.

\begin{lemma}[Per-anchor identity]\label{lem:per-anchor}
For every $a\in\cA$ and almost every $y\in\mathsf X_\cA$,
\begin{equation}\label{eq:app-per-anchor-identity}
    \Lambda_t^\star(a\given y) = \widehat\lambda_t(y, a)p_t(a\given y).
\end{equation}
\end{lemma}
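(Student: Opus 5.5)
The plan is to verify \eqref{eq:app-per-anchor-identity} directly, with the same Bayes-rule cancellation used in the proof of \Cref{thm:dhm}, but this time without summing over anchors. Fix $a\in\cA$ and $y\in\mathsf X_\cA$ at which $p_t(y)>0$ and the closed form of \Cref{thm:anchored-time-reversal} holds. Start from the left-hand side as given by the flux identity specialized to anchors, \eqref{eq:true-per-anchor-intensity}, namely $\Lambda^\star_t(a\given y)=m_t(a)\vec\lambda_t(a)r_a(y,t)/p_t(y)$. Then substitute the mass identity \eqref{eq:app-mass-split}, $m_t(a)=p_0(a)S_a(t)$. It holds because forward dynamics never re-attach to anchors.

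On the right-hand side, insert the DHM target \eqref{eq:app-dhm-target} and the posterior \eqref{eq:app-bayes-posterior}:
\[
\widehat\lambda_t(y,a)\,p_t(a\given y)=\frac{\vec\lambda_t(a)S_a(t)r_a(y,t)}{p_t^{\mathrm{ac}}(y\given a)}\cdot\frac{p_0(a)p_t^{\mathrm{ac}}(y\given a)}{p_t(y)}.
\]
When $p_t^{\mathrm{ac}}(y\given a)>0$, the factor $p_t^{\mathrm{ac}}(y\given a)$ cancels and the product equals $p_0(a)S_a(t)\vec\lambda_t(a)r_a(y,t)/p_t(y)=\Lambda^\star_t(a\given y)$. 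This is exactly the summand-level version of the computation in \Cref{thm:dhm}. Summing it over $a$ recovers \Cref{thm:dhm}, which gives a consistency check.

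The only delicate step is the set where $p_t^{\mathrm{ac}}(y\given a)=0$, since there $\widehat\lambda_t(y,a)$ is a $0/0$ or undefined expression. This is what the support condition stated in the Setup handles. If $r_a(y,t)>0$, the support condition gives $p_t^{\mathrm{ac}}(y\given a)>0$, so the cancellation above applies. If $r_a(y,t)=0$, then $\Lambda^\star_t(a\given y)=0$ by \eqref{eq:true-per-anchor-intensity}. In that case I would adopt the convention $\widehat\lambda_t(y,a)p_t(a\given y)=0$. This is legitimate because $p_t(a\given y)=0$ whenever $p_t^{\mathrm{ac}}(y\given a)=0$, and $\widehat\lambda_t(y,a)=0$ when the numerator vanishes and the denominator does not.

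The remaining exceptional points are those where $p_t(y)=0$ or where the density representations of \Cref{thm:anchored-time-reversal} fail. These form a Lebesgue-null set (respectively $p_t$-null), which is why the identity is stated for almost every $y$. I expect no real obstacle beyond stating these conventions cleanly.
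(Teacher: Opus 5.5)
Your proof is correct and follows the paper's argument: substitute the DHM target and the Bayes posterior, cancel $p_t^{\mathrm{ac}}(y\given a)$, and use $m_t(a)=p_0(a)S_a(t)$ to match \eqref{eq:true-per-anchor-intensity}. Your explicit handling of the points where $p_t^{\mathrm{ac}}(y\given a)=0$ (via the support condition) or $p_t(y)=0$ is a refinement the paper leaves implicit; the latter set is in general only $p_t$-null rather than Lebesgue-null, so ``almost every'' should be read in that sense there.
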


\begin{proof}
Substitute \eqref{eq:app-dhm-target} for $\widehat\lambda_t(y, a)$ and \eqref{eq:app-bayes-posterior} for $p_t(a\given y)$:
\[
\widehat\lambda_t(y, a)p_t(a\given y)
=
\frac{\vec\lambda_t(a)S_a(t)r_a(y, t)}{p_t^{\mathrm{ac}}(y\given a)}\cdot\frac{p_0(a)p_t^{\mathrm{ac}}(y\given a)}{p_t(y)}
=
\frac{p_0(a)S_a(t)\vec\lambda_t(a)r_a(y, t)}{p_t(y)}.
\]
Using $m_t(a)=p_0(a)S_a(t)$ from \eqref{eq:app-mass-split}, the right-hand side is exactly $\Lambda_t^\star(a\given y)$ as defined in \eqref{eq:true-per-anchor-intensity}.
\end{proof}

\Cref{lem:per-anchor} is the structural fact that makes a single classifier sufficient: the DHM target $\widehat\lambda_t(y, a)$ is precisely the Radon--Nikodym factor that converts a probability mass $p_t(a\given y)$ (a posterior over data labels) into a per-anchor jump rate $\Lambda_t^\star(a\given y)$ (jumps per unit time into anchor $a$).

\paragraph{Consistency of the plug-in.}
\Cref{lem:per-anchor} immediately yields consistency: substituting the true posterior into the plug-in recovers the true reverse intensities.

\begin{proposition}[Consistency]\label{prop:plugin-consistent}
If $P_\theta(a\given y, t) = p_t(a\given y)$ for every $a\in\cA$ and almost every $y\in\mathsf X_\cA$, then
\begin{equation}\label{eq:app-consistency}
\Lambda_{\mathrm{plug},\theta}(a\given y, t) = \Lambda_t^\star(a\given y),
\qquad
\lambda_{\mathrm{plug},\theta}(y, t) = \backvec{\lambda_t^\star}(y),
\qquad
\pi_{\mathrm{plug},\theta}(a\given y, t) = \pi_t^\star(a\given y).
\end{equation}
\end{proposition}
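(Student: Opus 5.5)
The plan is to obtain all three identities in \eqref{eq:app-consistency} directly from \Cref{lem:per-anchor}, in the order per-anchor intensity, total rate, allocation. Each later identity is a deterministic function of the earlier one, so only the first needs a genuine argument.

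\emph{Per-anchor identity.} Fix $a\in\cA$ and take $y$ in the full-measure set where both the hypothesis $P_\theta(a\given y,t)=p_t(a\given y)$ and \Cref{lem:per-anchor} hold. Since $\cA$ is countable, a countable union of null sets is null, so one common exceptional set works for all anchors. Substituting the hypothesis into the definition \eqref{eq:app-plugin-restated} gives $\Lambda_{\mathrm{plug},\theta}(a\given y,t)=\widehat\lambda_t(y,a)p_t(a\given y)$. By \Cref{lem:per-anchor}, this equals $\Lambda_t^\star(a\given y)$.

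\emph{Total rate.} Summing the per-anchor identity over $a\in\cA$ gives $\lambda_{\mathrm{plug},\theta}(y,t)=\sum_a\Lambda_t^\star(a\given y)$, which is $\backvec{\lambda_t^\star}(y)$ by \eqref{eq:true-total-intensity}. Equivalently, this is \eqref{eq:rb-identity}, i.e.\ \Cref{thm:dhm}. In the countable case the summands are nonnegative, so exchanging the order of summation needs no justification, and the series converges because $\backvec{\lambda_t^\star}(y)<\infty$ for a.e.\ $y$.

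\emph{Allocation.} Dividing the per-anchor identity by the total-rate identity gives $\pi_{\mathrm{plug},\theta}(a\given y,t)=\Lambda_t^\star(a\given y)/\backvec{\lambda_t^\star}(y)=\pi_t^\star(a\given y)$, using \eqref{eq:plugin-allocation} and \eqref{eq:stick-posterior}.

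The only real subtlety, which I expect to be the main obstacle, is the division. The allocation is only well defined where $\backvec{\lambda_t^\star}(y)>0$. Where the total rate vanishes, both $\pi$'s are undefined, and the statement should be read as holding on $\{\backvec{\lambda_t^\star}>0\}$ or by a common convention.

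A second point is that the support condition from the Setup, $r_a(y,t)>0\Rightarrow p_t^{\mathrm{ac}}(y\given a)>0$, is needed so that $\widehat\lambda_t(y,a)$ is finite. Where $p_t^{\mathrm{ac}}(y\given a)=0$ we also have $r_a(y,t)=0$, and we adopt the convention $0/0=0$; both sides then vanish, so the identities still hold.
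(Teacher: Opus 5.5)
Your proposal is correct and follows the paper's proof: the per-anchor equality is \Cref{lem:per-anchor} after substituting $P_\theta(\cdot\given y,t)=p_t(\cdot\given y)$, the total-rate equality follows by summing over $a$, and the allocation equality follows by normalizing. Your extra bookkeeping is sound and only makes explicit what the paper leaves implicit: a common null set for the countably many anchors, reading the allocation on $\{\backvec{\lambda_t^\star}>0\}$, and invoking the Setup's support condition to keep $\widehat\lambda_t$ finite. One small point: equality of the sums needs no convergence argument, because termwise-equal nonnegative series agree in $[0,\infty]$.
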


\begin{proof}
The per-anchor equality is \Cref{lem:per-anchor}. The total-rate equality follows by summing over $a$, and the allocation equality by normalizing.
\end{proof}

\Cref{prop:plugin-consistent} pins down what the classifier needs to learn: the cross-entropy loss \eqref{eq:classifier-loss} is a strictly proper scoring rule whose Bayes optimum is exactly the posterior $p_t(\cdot\given y)$, and the plug-in inherits all three identities of \eqref{eq:app-consistency} as soon as $P_\theta$ matches that posterior.

\paragraph{Rao--Blackwellization.}
The DHM identity \eqref{eq:app-dhm-identity} also yields an unbiased Monte Carlo estimator of $\backvec{\lambda_t^\star}(y)$ from a single $(\vec X_0, \vec X_t)$ pair: the random variable
\begin{equation}\label{eq:app-Z}
    Z := \widehat\lambda_t(\vec X_t, \vec X_0).
\end{equation}
By \Cref{thm:dhm}, $\E[Z\given \vec X_t = y] = \backvec{\lambda_t^\star}(y)$. The plug-in \eqref{eq:app-plugin-restated} replaces $Z$ by its conditional expectation given $(\vec X_t, t)$, with the unknown posterior approximated by $P_\theta$. The next proposition makes this Rao--Blackwell relationship precise.

\begin{proposition}[Rao--Blackwellization]\label{prop:rb}
Let $Z = \widehat\lambda_t(\vec X_t, \vec X_0)$ as in \eqref{eq:app-Z}, with $(\vec X_0, \vec X_t)$ paired by the SJD forward law. Assume $\E[Z^2\given t] < \infty$. Then:
\begin{enumerate}
    \item \emph{(Conditional mean.)} $\E[Z\given \vec X_t, t] = \backvec{\lambda_t^\star}(\vec X_t)$ almost surely.
    \item \emph{(Variance decomposition.)} The law of total variance gives
    \begin{equation}\label{eq:app-rb-decomp}
        \Var(Z\given t) = \Var\big(\backvec{\lambda_t^\star}(\vec X_t)\bigm| t\big) + \E\big[\Var(Z\given \vec X_t, t)\bigm| t\big].
    \end{equation}
    \item \emph{(Variance reduction.)} Consequently,
    \begin{equation}\label{eq:app-rb-inequality}
        \Var\big(\backvec{\lambda_t^\star}(\vec X_t)\bigm| t\big) \le \Var(Z\given t),
    \end{equation}
    with equality if and only if $Z$ is almost surely a function of $(\vec X_t, t)$, i.e.\ $\Var(Z\given \vec X_t, t)=0$ a.s.
\end{enumerate}
\end{proposition}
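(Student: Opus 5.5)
Fix $t$ and condition on it throughout, so every expectation and variance below is taken under the joint law of $(\vec X_0,\vec X_t)$ on $\cA\times\mathsf X_\cA$, with density $p_0(a)p_t^{\mathrm{ac}}(y\given a)$. The plan is to obtain (i) directly from \Cref{thm:dhm}, and then derive (ii) and (iii) from the standard Hilbert-space (orthogonal projection) structure of conditional expectation in $L^2$. The assumption $\E[Z^2\given t]<\infty$ is what makes that structure available.

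For (i), \Cref{thm:dhm} states that $\E[Z\given \vec X_t=y]=\backvec{\lambda_t^\star}(y)$ for almost every $y\in\mathsf X_\cA$ with respect to Lebesgue measure. To upgrade this to an almost-sure statement about $\E[Z\given \vec X_t,t]$, I will use two facts. First, the law of $\vec X_t$ on this event is absolutely continuous with density $p_t(y)$. Second, the exceptional Lebesgue-null set is therefore also null for the law of $\vec X_t$. A version of the conditional expectation is the finite sum \eqref{eq:app-dhm-discrete}, and by \Cref{lem:per-anchor} it sums to $\backvec{\lambda_t^\star}(\vec X_t)$. Since $Z\in L^2\subset L^1$, this version is a legitimate conditional expectation.

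For (ii), I will write $Z-\E[Z]=(Z-\backvec{\lambda_t^\star}(\vec X_t))+(\backvec{\lambda_t^\star}(\vec X_t)-\E[Z])$, using $\E[\backvec{\lambda_t^\star}(\vec X_t)]=\E[Z]$ by the tower property. The plan is then to square this decomposition and take expectations:
\begin{enumerate}
\item The squared first term has expectation $\E[\Var(Z\given\vec X_t,t)\given t]$, which is the conditional second moment of the residual.
\item The squared second term has expectation $\Var(\backvec{\lambda_t^\star}(\vec X_t)\given t)$.
\item The cross term vanishes. To see this, condition on $\vec X_t$: the second factor is $\vec X_t$-measurable, and the first factor has conditional mean zero by (i).
\end{enumerate}
Square integrability of both pieces follows from conditional Jensen, $\E[\backvec{\lambda_t^\star}(\vec X_t)^2]\le \E[Z^2]<\infty$, and this justifies splitting the expectations.

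For (iii), the inequality is immediate from (ii), because the second summand is nonnegative. Equality holds if and only if $\E[\Var(Z\given\vec X_t,t)]=0$. Since the integrand is nonnegative, that is equivalent to $\Var(Z\given \vec X_t,t)=0$ a.s. That in turn is equivalent to $Z=\E[Z\given\vec X_t,t]$ a.s., which says exactly that $Z$ agrees a.s.\ with a $\sigma(\vec X_t)$-measurable function. Conversely, if $Z=h(\vec X_t)$ a.s., then its conditional variance is zero.

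The only delicate point is in (i): passing from the Lebesgue-a.e.\ statement of \Cref{thm:dhm} to an almost-sure identity. This needs the absolute continuity of $\vec X_t$ on the off-anchor event, together with the support condition $r_a(y,t)>0\Rightarrow p_t^{\mathrm{ac}}(y\given a)>0$ stated in the Setup, so that $\widehat\lambda_t$ is well defined wherever the posterior weight $p_t(a\given y)$ is positive. Everything after (i) is routine.
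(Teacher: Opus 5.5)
Your proposal is correct and follows the paper's proof: (i) comes from the finite-sum expansion \eqref{eq:app-dhm-discrete} together with \Cref{thm:dhm}, (ii) is the law of total variance (which you re-derive via the orthogonal decomposition instead of citing it), and (iii) follows from nonnegativity of the second term. One small correction to your last paragraph: $p_t(a\given y)>0$ already forces $p_t^{\mathrm{ac}}(y\given a)>0$, so the support condition is not what makes $\widehat\lambda_t$ well defined there; its actual role is to guarantee that anchors with zero posterior weight contribute nothing to $\backvec{\lambda_t^\star}(y)$, so the sum in (i) recovers the full rate.
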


\begin{proof}
For (i), apply the discrete expansion \eqref{eq:app-dhm-discrete} with $y = \vec X_t$:
\[
\E[Z\given \vec X_t, t] = \sum_{a\in\cA}\widehat\lambda_t(\vec X_t, a)p_t(a\given \vec X_t) = \backvec{\lambda_t^\star}(\vec X_t),
\]
where the second equality is \Cref{thm:dhm}. Statement (ii) is the law of total variance, conditional on $t$, with conditioning $\sigma$-field $\sigma(\vec X_t, t)$ and the substitution $\E[Z\given \vec X_t, t] = \backvec{\lambda_t^\star}(\vec X_t)$ from (i). Statement (iii) follows from \eqref{eq:app-rb-decomp} since the second term is nonnegative, with equality iff that term vanishes.
\end{proof}

\paragraph{Interpretation.}
The single-sample target $Z$ in \eqref{eq:app-Z} is unbiased for $\backvec{\lambda_t^\star}(\vec X_t)$ but carries the within-cell noise $\Var(Z\given \vec X_t, t)$: at a fixed $\vec X_t = y$, different latent draws of $\vec X_0\sim p_t(\cdot\given y)$ produce different values $\widehat\lambda_t(y, \vec X_0)$. The Rao--Blackwellized estimator $\sum_a \widehat\lambda_t(y, a)p_t(a\given y)$ averages this noise out exactly, replacing $Z$ by its conditional mean and saving the second term in \eqref{eq:app-rb-decomp}. The plug-in \eqref{eq:app-plugin-restated} performs this Rao--Blackwellization with the true posterior replaced by the trained classifier $P_\theta$:
\begin{equation}\label{eq:app-rb-plugin-formula}
    \lambda_{\mathrm{plug},\theta}(y, t) = \underbrace{\sum_{a\in\cA}\widehat\lambda_t(y, a)p_t(a\given y)}_{=\backvec{\lambda_t^\star}(y)\text{ by \Cref{thm:dhm}}} + \underbrace{\sum_{a\in\cA}\widehat\lambda_t(y, a)\big(P_\theta(a\given y, t) - p_t(a\given y)\big)}_{\text{classifier error}}.
\end{equation}
The first term is exact by Rao--Blackwell; the residual is controlled by the gap between $P_\theta$ and the Bayes-optimal posterior, which the cross-entropy loss \eqref{eq:classifier-loss} drives to zero in the well-specified, infinite-data limit. In particular, by Cauchy--Schwarz,
\[
    \big|\lambda_{\mathrm{plug},\theta}(y, t) - \backvec{\lambda_t^\star}(y)\big| \le \Big(\sum_{a\in\cA}\widehat\lambda_t(y, a)^2\Big)^{1/2}\Big(\sum_{a\in\cA}\big(P_\theta(a\given y, t) - p_t(a\given y)\big)^2\Big)^{1/2},
\]
so any consistent classifier yields a consistent plug-in hazard, and the conversion from posterior error to hazard error is mediated by the analytic factor $\widehat\lambda_t(y, \cdot)$.

Two practical consequences follow. First, training reduces to a single cross-entropy objective on a discrete classifier, with no auxiliary regression head for the (unbounded, heavy-tailed) hazard target $Z$. Second, all SJD-specific structure---the forward hazard, the unsticking density, and the unstick-time mixture $p_t^{\mathrm{ac}}(y\given a)$---enters only through the analytic reweighting $\widehat\lambda_t(y, a)$, which is computed once per reverse step using the closed-form quadrature of \Cref{app:plugin}. The classifier never sees these quantities directly; it learns only the discrete posterior $p_t(\cdot\given y)$, and the analytic reweighting converts that into the rates and intensities required by the reverse SDE.

\subsection{\texorpdfstring{Proof of \Cref{prop:plain-hybrid}}{Proof of the plain-hybrid proposition}}\label{app:plain-hybrid}
\textbf{\begin{proposition*}[The plain hybrid coincides with CADD, restated]
Consider the VP-matched SJD of \Cref{ex:vp-matched} at $(W,\eta)=(I,1)$ with anchor-agnostic hazard $\vec\lambda(t)$ and survival $S(t)$, and CADD with masking survival $\alpha_t$, continuous kernel $\cN(\sqrt{\bar\gamma_t}E(x_0),(1-\bar\gamma_t)I_d)$ at masked positions, and a common embedding map $E$. Identify $S(t)\leftrightarrow\alpha_t$ and $(\alpha(t),\sigma^2(t))\leftrightarrow(\sqrt{\bar\gamma_t},1-\bar\gamma_t)$. Then:
\begin{enumerate}[label=(\roman*),leftmargin=*,nosep]
    \item the time-$t$ laws of the corrupted state given the data coincide for every $t$, and the SJD cross-entropy objective of \Cref{alg:sjd-training} equals CADD's cross-entropy objective;
    \item the reverse process of \Cref{thm:anchored-time-reversal} reduces to a commit rate $\vec\lambda(t)S(t)/(1-S(t))$ shared across anchors, an allocation equal to the classifier posterior, and a reverse SDE whose score is the residual to the posterior-mean embedding;
    \item CADD's sampler is a first-order discretization of (ii), so the two generative laws coincide in the small-step limit.
\end{enumerate}
\end{proposition*}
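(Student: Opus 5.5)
The plan is to prove the three parts in order, each reducing to a computation already available in the paper. For (i), I would work per position, which suffices since at $W=I$ the positions unstick independently given $\vec X_0$. Conditioned on $\vec X_0=a$, the SJD state at time $t$ has two pieces: an atom at $a$ of mass $S(t)$, and the absolutely continuous part \eqref{eq:eta1-collapse}, $(1-S(t))\cN(y;\alpha(t)a,\sigma^2(t)I)$. The collapse is the convolution formula \eqref{eq:vp-matched-convolution} at $\eta=1$: there $v_t(\tau)=\sigma^2(t)$ for every $\tau$, so integrating against the unstick-time density just produces the factor $1-S(t)$. CADD's corrupted pair $(x_t,z_t)$ has the following law: $x_t=x_0$ with probability $\alpha_t$; otherwise $x_t=\mathsf m$ and $z_t\sim\cN(\sqrt{\bar\gamma_t}E(x_0),(1-\bar\gamma_t)I_d)$. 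I would define the measurable bijection that sends an anchor state to the unmasked pair and an off-anchor $y$ to $(\mathsf m,y)$. Under the stated identification of schedules, this map pushes one law onto the other. The same map identifies the classifier inputs, and \Cref{alg:sjd-training} drops never-unstuck positions exactly as CADD scores only masked positions. Hence the two expected cross-entropies coincide term by term.

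For (ii), I would invoke \Cref{thm:anchored-time-reversal} and \Cref{lem:per-anchor}. With $r_t(y\given a)=\cN(y;\alpha(t)a,\sigma^2(t)I)$ and \eqref{eq:eta1-collapse}, the DHM target of \Cref{def:dhm-target} becomes
\[
\widehat\lambda_t(y,a)=\frac{\vec\lambda(t)S(t)}{1-S(t)},
\]
which does not depend on $a$ or $y$. Then $\Lambda^\star_t(a\given y)=\frac{\vec\lambda(t)S(t)}{1-S(t)}\,p_t(a\given y)$. Summing over $a$ gives the shared commit rate, and normalizing gives $\pi^\star_t=p_t(\cdot\given y)$, the classifier posterior at optimum (\Cref{prop:plugin-consistent}). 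For the drift \eqref{eq:score-drift}, I would differentiate the mixture $p_t(y)=(1-S(t))\sum_a p_0(a)\cN(y;\alpha(t)a,\sigma^2(t)I)$ under the sum, which is a Tweedie-type step. This yields
\[
\nabla\log p_t(y)=\frac{\alpha(t)\,\E[a\given y]-y}{\sigma^2(t)},
\]
the residual to the posterior-mean embedding. In the sequence setting the conditioning is on the full corrupted sequence; the factorization across positions makes the same computation go through coordinatewise.

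For (iii), I would write CADD's reverse step over an interval $\Delta\tau$ explicitly. A masked site unmasks with probability $(\alpha_{s}-\alpha_t)/(1-\alpha_t)$ to a token drawn from the classifier. Using $\dot\alpha_t=-\vec\lambda(t)\alpha_t$, this probability equals $\frac{\vec\lambda(t)S(t)}{1-S(t)}\Delta\tau+O(\Delta\tau^2)$. The continuous update is a DDPM/Euler step whose mean uses the posterior-mean embedding. I would show that its one-step mean and covariance match the Euler--Maruyama step of \eqref{eq:backwardSDE} up to $O(\Delta\tau^{3/2})$, and that the jump probability matches $1-e^{-\lambda\Delta\tau}$ to first order. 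Convergence of the generative laws would then follow from a standard weak-consistency argument for jump-diffusion schemes: matching generators to first order plus stability of the limiting martingale problem.

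I expect (iii) to be the main obstacle. The parameterization of CADD's continuous update, whether DDPM-posterior or probability-flow, needs care, and a rigorous small-step limit requires well-posedness of the reversed martingale problem near $t\to0$, where $1-S(t)\to0$ and the rate blows up. I would handle this by proving generator convergence on $C_c^2$ test functions on $[\epsilon,T]$ and letting $\epsilon\to0$.
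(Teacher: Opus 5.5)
Your proposal is correct and follows essentially the same route as the paper. For (i) you use the measurable bijection together with \eqref{eq:eta1-collapse}; for (ii) you collapse the DHM target to $\vec\lambda(t)S(t)/(1-S(t))$ and compute the Gaussian-mixture (Tweedie) score; for (iii) you match, to first order, the unmasking probability and the DDPM step against the exponential clock and Euler--Maruyama, and your generator-convergence argument on $[\epsilon,T]$ with $\epsilon\to0$ adds rigor the paper omits, since it simply asserts that first-order agreement gives coincidence of the laws as $\Delta\to0$.
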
}

\paragraph{(i) Forward laws and objectives.}
Fix a position and condition on $\vec X_0$. Under $S(t)\leftrightarrow\alpha_t$ the stuck probability equals the survival of CADD's absorbing chain. Conditionally on having unstuck, \eqref{eq:eta1-collapse} gives the off-anchor location the law $\cN(\alpha(t)a,\sigma^2(t)I_d)$, which is CADD's masked-position latent $\cN(\sqrt{\bar\gamma_t}E(a),(1-\bar\gamma_t)I_d)$ under $(\alpha(t),\sigma^2(t))\leftrightarrow(\sqrt{\bar\gamma_t},1-\bar\gamma_t)$. Positions are conditionally independent given $\vec X_0$ in both constructions, so the time-$t$ joint laws agree. The state map is a measurable bijection (a stuck coordinate sits exactly on its anchor and thus carries the token identity, as does an unmasked token; an unstuck coordinate carries the latent), so the Bayes posteriors over the clean sequence agree, and both objectives draw $t$ uniformly and apply unweighted cross-entropy at unstuck (masked) positions: \Cref{alg:sjd-training} coincides with CADD's loss.

\paragraph{(ii) The reverse process at $(I,1)$.}
At $\eta=1$, $v_t(\tau)=\sigma^2(t)$ by \eqref{eq:vp-matched-convolution}, so the spatial factor of the DHM target equals $r_t/p_t^{\mathrm{ac}}=1/(1-S(t))$ by \eqref{eq:eta1-collapse}, giving $\widehat\lambda_t(y,a)=\vec\lambda(t)S(t)/(1-S(t))$ for every $a$ and $y$. The plug-in total rate \eqref{eq:plugin-estimators} is therefore anchor-independent and the allocation \eqref{eq:plugin-allocation} reduces to $P_\theta(\cdot\given y,t)$. Since $v_t(\tau)$ no longer depends on $\tau$, the time-posterior weight in \eqref{eq:app-classifier-score} integrates to one and the classifier-induced score reduces to
\[
s_\theta(y,t) = -\frac{y-\alpha(t)m_\theta(y,t)}{\sigma^2(t)}, \qquad m_\theta(y,t):=\sum_{a\in\cA}P_\theta(a\given y,t)a,
\]
the residual to the posterior-mean anchor, exact at the cross-entropy optimum.

\paragraph{(iii) CADD's sampler as a discretization.}
Over a reverse step from $t$ to $t-\Delta$, the exponential clock of \Cref{alg:sjd-sampling} commits with probability $1-\exp(-\widehat\lambda_t\Delta)=\vec\lambda(t)S(t)\Delta/(1-S(t))+O(\Delta^2)$; since $\dot S=-\vec\lambda S$, this matches CADD's unmasking probability $(\alpha_{t-\Delta}-\alpha_t)/(1-\alpha_t)$ to first order under $S\leftrightarrow\alpha$. The committed token is drawn from the classifier posterior in both. While uncommitted, CADD's update $z_{t-\Delta}\sim\cN(\tilde\mu_t(\hat z_{0,\theta},z_t),\tilde\beta_t I_d)$ with $\hat z_{0,\theta}=\sum_a P_\theta(a\given\cdot)\,E(a)=m_\theta$ is the standard DDPM ancestral step, which agrees to first order in $\Delta$ with the Euler--Maruyama step of \Cref{alg:sjd-sampling} on the reverse SDE \eqref{eq:backwardSDE} with the score above \citep{ho2020ddpm,song2021sde}. Upon commitment both processes place the coordinate at the committed anchor and freeze it. The two samplers are therefore first-order discretizations of the same jump-diffusion, and their laws coincide as $\Delta\to 0$. CADD's alternative latent estimators, such as resampling $z_t$ from the forward kernel centered at $\hat z_{0,\theta}$, are further approximations of the same reverse law, trading fidelity for mode coverage as \citet{zheng2025cadd} discuss. \qed

\subsection{Learning the Forward Hazard, and Why We Fix It}\label{app:e2e-hazard}

\Cref{sec:conn-schedules} leaves the per-anchor forward hazard to future work. Here we make it learnable, derive the objective, and trace its failure to identifiable defects of the \emph{objective} rather than of the hazard axis: a seed-replicated experiment with hazards fixed a priori shows Text8 generation insensitive to non-uniform schedules, with uniform the most seed-stable choice. This supports the design of the main text: place the model's adaptivity on the kernel axis $(W,\eta)$.

\paragraph{Objective.} Parametrize $\vec\lambda_t(a)=\beta(t)w(a)$ with $w:\cA\to\R_{>0}$ learned jointly with $\theta$ (the anchor-agnostic baseline is $w\equiv1$). The plain cross-entropy \eqref{eq:classifier-loss} is not a bound on $\log p(\vec X_0)$, so its $w$-gradient does not target likelihood; we replace it with the negative ELBO. At $\eta=1$ the unstick-time mixture collapses \eqref{eq:eta1-collapse}, making the DHM target \eqref{eq:dhm-target} the closed-form scalar $\widehat\lambda_t(a)=\beta(t)w(a)S_a(t)/(1-S_a(t))$ and the conditional score target $w$-free.

\begin{proposition}[SJD negative ELBO at $\eta=1$]\label{prop:elbo-eta1}
Let $S_a(t)=e^{-w(a)B(t)}$ with $B(t)=\int_0^t\beta(s)\dd s$, $m_t(a)=p_0(a)S_a(t)$, and assume $S_a(T)=0$ for every $a$ together with the terminal conditions $\alpha(T)=0$, $\sigma(T)=1$. Then the negative ELBO decomposes as $\mathcal{L}(\theta,w)=\mathcal{L}_{\mathrm{CE}}+\mathcal{L}_{\mathrm{RB}}+\mathcal{L}_{\mathrm{score}}$, where
\begin{align}
\mathcal{L}_{\mathrm{CE}} &= \int_0^T\sum_{a\in\cA}\big(-\dot m_t(a)\big)\E_{Y\sim q_t(\cdot\given a)}\big[-\log P_\theta(a\given Y,t)\big]\dd t,\label{eq:lce}\\
\mathcal{L}_{\mathrm{RB}} &= \int_0^T\E_{Y\sim p_t^{\mathrm{ac}}}\Big[\textstyle\sum_{a\in\cA}\widehat\lambda_t(a)\big(P_\theta(a\given Y,t)-p_t(a\given Y)\big)\Big]\dd t,\label{eq:lrb}\\
\mathcal{L}_{\mathrm{score}} &= \int_0^T\sum_{a\in\cA}p_0(a)\big(1-S_a(t)\big)\E_{Y\sim q_t(\cdot\given a)}\Big[\tfrac{g_t^2}{2}\big\|s_\theta(Y,t)+\tfrac{Y-\alpha(t)a}{\sigma^2(t)}\big\|^2\Big]\dd t,\label{eq:lscore}
\end{align}
$-\dot m_t(a)=p_0(a)\beta(t)w(a)S_a(t)$ is the forward rate of mass leaving anchor $a$, and $\E_{Y\sim p_t^{\mathrm{ac}}}$ denotes integration against $p_t^{\mathrm{ac}}$ on $\mathsf X_\cA$.
\end{proposition}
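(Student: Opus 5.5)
The plan is to obtain the negative ELBO as a path-space relative entropy between the true reversed SJD of \Cref{thm:anchored-time-reversal} and the model reverse process. The model reverse process uses the score $s_\theta$ and the plug-in intensities $\Lambda_{\mathrm{plug},\theta}(a\given y,t)=\widehat\lambda_t(a)P_\theta(a\given y,t)$ of \eqref{eq:plugin-estimators}. Write $\mathbb P$ for the law of the reversed forward path and $\mathbb Q_\theta$ for the law of the model path started from the same terminal prior. Data processing on the final coordinate gives
\[
-\E_{p_0}\log q_\theta(\vec X_0)\le H(p_0)+\mathrm{KL}(\mathbb P\,\|\,\mathbb Q_\theta).
\]
The assumptions $S_a(T)=0$, $\alpha(T)=0$ and $\sigma(T)=1$ are used exactly here: they make $p_T=\cN(0,I_d)$ with no atoms (\Cref{rem:prior}), so the initial-law KL term vanishes.

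Next I split the path KL with a Girsanov formula for jump-diffusions. Both processes share the diffusion coefficient $g_t$ and are absorbed at anchors, so only off-anchor time contributes. The path KL is then the integral over $t$ of $\int_{\mathsf X_\cA}p_t(y)[\cdots]\dd y$, where the bracket has two pieces:
\begin{itemize}[leftmargin=*,nosep]
\item a drift term $\tfrac{g_t^2}{2}\|s_\theta-\nabla\log p_t\|^2$;
\item a Poisson-intensity term $\sum_a\big[\Lambda^\star\log(\Lambda^\star/\Lambda_{\mathrm{plug},\theta})-\Lambda^\star+\Lambda_{\mathrm{plug},\theta}\big]$.
\end{itemize}

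For the jump piece, \Cref{lem:per-anchor} gives $\Lambda^\star=\widehat\lambda_t\,p_t(a\given y)$. Hence the log-ratio is $\log p_t(a\given y)-\log P_\theta(a\given y,t)$, and the linear terms $-\Lambda^\star+\Lambda_{\mathrm{plug},\theta}$ are exactly the integrand of $\mathcal L_{\mathrm{RB}}$ in \eqref{eq:lrb}. For the cross-entropy part, flux balance \eqref{eq:sticky-rate} gives $p_t(y)\Lambda^\star(a\given y)=m_t(a)\beta(t)w(a)r_t(y\given a)$. At $\eta=1$ we have $r_t=q_t$, and $-\dot m_t(a)=p_0(a)\beta(t)w(a)S_a(t)$, so this equals $-\dot m_t(a)q_t(y\given a)$. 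Therefore $\int p_t\sum_a\Lambda^\star(-\log P_\theta)$ equals $\mathcal L_{\mathrm{CE}}$ in \eqref{eq:lce}.

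For the drift piece, I apply the standard denoising-score-matching rewrite. By \eqref{eq:eta1-collapse}, $p_t^{\mathrm{ac}}=\sum_a p_0(a)(1-S_a(t))q_t(\cdot\given a)$ is a mixture of Gaussians with $w$-dependent weights. Replacing $\nabla\log p_t$ by the conditional score $-(y-\alpha(t)a)/\sigma^2(t)$ under this mixture gives $\mathcal L_{\mathrm{score}}$ in \eqref{eq:lscore}, up to a $\theta$-independent variance term.

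The main obstacle is the leftover terms: $H(p_0)$, the $\sum\Lambda^\star\log p_t(a\given y)$ entropy term, and the DSM variance. They are $\theta$-free but, unlike in pure score matching, may depend on $w$. I would first compute them in closed form, pairing $-\dot m_t\log p_t(a\given y)$ against the score-matching remainder. The aim is to show they telescope, by integrating $\frac{\dd}{\dd t}$ of the entropy of the joint law of the token and the state from $t=0$ to $T$, to $-H(p_0)$ plus the terminal entropy of $\cN(0,I)$. With that cancellation the bound is exactly $\mathcal L_{\mathrm{CE}}+\mathcal L_{\mathrm{RB}}+\mathcal L_{\mathrm{score}}$. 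If the telescoping fails, I would state the decomposition up to an additive constant and check that this constant is indeed $w$-free.
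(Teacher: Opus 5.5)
Your data-processing bound, the vanishing terminal KL, the Girsanov split, and the identification of the pieces (via \Cref{lem:per-anchor}, flux balance with $r_t=q_t$, and the DSM rewrite) are all correct. Together, however, they only establish
\begin{align*}
H(p_0)+\mathrm{KL}(\mathbb P\,\|\,\mathbb Q_\theta)
&=\mathcal L_{\mathrm{CE}}+\mathcal L_{\mathrm{RB}}+\mathcal L_{\mathrm{score}}+H(p_0)\\
&\quad-\int_0^T\Big[\tfrac{g_t^2}{2}C_t+\sum_{a\in\cA}\big(-\dot m_t(a)\big)\E_{Y\sim q_t(\cdot\given a)}\big[-\log p_t(a\given Y)\big]\Big]\dd t,
\end{align*}
with $f_a:=p_0(a)(1-S_a(t))q_t(\cdot\given a)$ and $C_t:=\sum_a\int f_a\|\nabla\log q_t(y\given a)-\nabla\log p_t^{\mathrm{ac}}(y)\|^2\dd y$ the DSM variance. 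The proposition \emph{is} the claim that the last two terms cancel, and that is exactly the step you leave open.

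Your fallback (``up to an additive constant, check it is $w$-free'') does not prove the statement, and the constant is not obviously benign. $C_t$ and the posterior-entropy term each depend on $w$ through $S_a$, and exactness (or at least $w$-independence) is what the paper's later envelope argument and $\mathcal L(\theta^\star(w),w)=\mathrm H(p_0)$ rely on. Your predicted endpoint is also off: the bracket integrates to exactly $H(p_0)$, with no $\cN(0,I_d)$ entropy, and a residual of that form would contradict the statement.

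The functional to differentiate is the conditional entropy $H(\vec X_0\given\vec X_t)=-\sum_a\int f_a\log(f_a/p_t^{\mathrm{ac}})\dd y$ (stuck coordinates reveal their token), not a joint token--state entropy. This uses $\eta=1$ through the evolution equation $\partial_t f_a=-\nabla\cdot(\vec b_t f_a)+\tfrac12 g_t^2\Delta f_a+(-\dot m_t(a))\,q_t(\cdot\given a)$. One integration by parts then shows that $\frac{\dd}{\dd t}H(\vec X_0\given\vec X_t)$ equals the bracket; the drift term drops because $\sum_a f_a\nabla\log(f_a/p_t^{\mathrm{ac}})=0$. Integrating from $H(\vec X_0\given\vec X_0)=0$ to $H(\vec X_0\given\vec X_T)=H(p_0)$ closes your route, modulo regularity at $t\to0$ and $t\to T$.

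The paper sidesteps this bookkeeping by conditioning on $\vec X_0=a$ before applying Girsanov. The conditioned process is itself an SJD (atom $S_a(t)\delta_a$, off-anchor part $(1-S_a(t))q_t(\cdot\given a)$). Flux balance therefore gives the state-free reverse intensity $\widehat\lambda_t(a)$ into $a$ alone, and the conditioned drift uses the Gaussian score $-(y-\alpha(t)a)/\sigma^2(t)$.

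The path KL against the model then has the $\mathcal L_{\mathrm{score}}$ integrand directly, with no DSM remainder. After the $\log\widehat\lambda_t(a)$ terms cancel, the jump integrand is $\sum_b\widehat\lambda_t(b)P_\theta(b\given Y,t)-\widehat\lambda_t(a)\log P_\theta(a\given Y,t)-\widehat\lambda_t(a)$. This averages to $\mathcal L_{\mathrm{CE}}+\mathcal L_{\mathrm{RB}}$ via $p_0(a)(1-S_a(t))\widehat\lambda_t(a)=-\dot m_t(a)$ and the tower identity.

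Both routes bound the same quantity. With $\mathbb P^a$ the reversed path law given $\vec X_0=a$,
\[
\sum_a p_0(a)\mathrm{KL}(\mathbb P^a\|\mathbb Q_\theta)=\mathrm{KL}(\mathbb P\|\mathbb Q_\theta)+\sum_a p_0(a)\mathrm{KL}(\mathbb P^a\|\mathbb P)=\mathrm{KL}(\mathbb P\|\mathbb Q_\theta)+H(p_0).
\]
Combined with the paper's computation, this is the quickest confirmation that your leftover vanishes.
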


\begin{proof}
Condition on $\vec X_0=a$. At $\eta=1$ the conditional law at time $t$ is the atom $S_a(t)\delta_a$ plus the absolutely continuous part $(1-S_a(t))q_t(\cdot\given a)$ \eqref{eq:eta1-collapse}. The conditional process is itself an SJD, so the flux identity of \Cref{thm:anchored-time-reversal} applies to it: the conditional reverse intensity into $a$ is the state-free scalar $S_a(t)\vec\lambda_t(a)q_t(y\given a)\big/\big[(1-S_a(t))q_t(y\given a)\big]=\widehat\lambda_t(a)$ (the DHM target \eqref{eq:dhm-target} is precisely the $\vec X_0$-conditional reverse intensity), and the conditional reverse drift uses the conditional score $-(y-\alpha(t)a)/\sigma^2(t)$. The model reverses with score $s_\theta$ and plug-in rates $\Lambda_\theta(b\given y,t)=\widehat\lambda_t(b)P_\theta(b\given y,t)$ \eqref{eq:plugin-estimators}; anchors are absorbing on both sides, so committed segments contribute nothing. The pathwise KL between the two reverse laws is the diffusion quadratic term plus the jump term in Bregman form, following \citet[Prop.~2]{campbell2023transdimensional},
$\sum_b\Lambda_\theta(b)-\widehat\lambda_t(a)\log\Lambda_\theta(a)+\widehat\lambda_t(a)\log\widehat\lambda_t(a)-\widehat\lambda_t(a)$,
integrated over the off-anchor event, plus a terminal KL that vanishes under the stated assumptions. Substituting $\Lambda_\theta$, the $\log\widehat\lambda_t(a)$ terms cancel, leaving $\sum_b\widehat\lambda_t(b)P_\theta(b\given Y,t)-\widehat\lambda_t(a)\log P_\theta(a\given Y,t)-\widehat\lambda_t(a)$. Average over $a\sim p_0$ and the off-anchor event (weight $1-S_a(t)$, $Y\sim q_t(\cdot\given a)$). The middle term integrates to $\mathcal{L}_{\mathrm{CE}}$ via $p_0(a)(1-S_a(t))\widehat\lambda_t(a)=-\dot m_t(a)$. The first term equals $\E_{Y\sim p_t^{\mathrm{ac}}}[\sum_b\widehat\lambda_t(b)P_\theta(b\given Y,t)]$ since $\sum_a p_0(a)(1-S_a(t))q_t(y\given a)=p_t^{\mathrm{ac}}(y)$; the third sums to $-\sum_b(-\dot m_t(b))$, and the tower identity $\E_{Y\sim p_t^{\mathrm{ac}}}[\sum_b\widehat\lambda_t(b)p_t(b\given Y)]=\sum_b(-\dot m_t(b))$ combines the two into $\mathcal{L}_{\mathrm{RB}}$. The diffusion term is $\mathcal{L}_{\mathrm{score}}$.
\end{proof}

$\mathcal{L}_{\mathrm{CE}}$ is the SJD analogue of the MDLM cross-entropy bound, with the absorbing point replaced by a VP-Gaussian draw and the time-only weight by the per-anchor $-\dot m_t(a)$. $\mathcal{L}_{\mathrm{RB}}$ is the rate-balancing part of the jump term, and its role is sharper than self-consistency: $\mathcal{L}_{\mathrm{CE}}$ alone is minimized by the allocation $\pi^\star_t(a\given y)\propto\widehat\lambda_t(a)p_t(a\given y)$ of \eqref{eq:plugin-allocation} rather than by the posterior, and $\mathcal{L}_{\mathrm{RB}}$, linear in $P_\theta$, is exactly the correction that restores the posterior as the joint minimizer. $\mathcal{L}_{\mathrm{score}}$ is conditional denoising score matching; because the induced score of \Cref{app:classifier-score} is exact at the cross-entropy optimum, we train no score head and optimize $\mathcal{L}_{\mathrm{CE}}+\mathcal{L}_{\mathrm{RB}}$ alone. Every $w$-dependence is reparametrizable: unlike GenMD4, whose discrete forward pass forces a REINFORCE estimator \citep[App.~I.2]{shi2024md4}, the continuous off-anchor state yields an exact $\partial_w$ from one backward pass. In the degenerate-kernel limit of \Cref{sec:conn-discrete} the score term vanishes and $\mathcal{L}_{\mathrm{CE}}$ reduces to the GenMD4 negative ELBO under $\alpha_{t,a}\leftrightarrow S_a(t)$, so the objective generalizes GenMD4's.

\paragraph{The objective cannot identify a useful hazard.} First, the only term that couples $P_\theta$ to the choice of hazard carries no signal at the optimum:
\begin{lemma}[Rate-balance vanishes at the optimum]\label{lem:rb-vanishes}
If $P_\theta(\cdot\given y,t)=p_t(\cdot\given y)$ for almost every $(y,t)$, then $\mathcal{L}_{\mathrm{RB}}(\theta,w)=0$ for every $w$.
\end{lemma}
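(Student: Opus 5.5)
The argument is a direct substitution: the integrand of $\mathcal{L}_{\mathrm{RB}}$ in \eqref{eq:lrb} is linear in the difference $P_\theta(a\given Y,t)-p_t(a\given Y)$, and the hypothesis makes that difference vanish almost everywhere. I would first fix $t\in(0,T)$ and, for each $y\in\mathsf X_\cA$, write the inner sum as $\sum_{a\in\cA}\widehat\lambda_t(a)\big(P_\theta(a\given y,t)-p_t(a\given y)\big)$. Here $\widehat\lambda_t(a)=\beta(t)w(a)S_a(t)/(1-S_a(t))$ is the $\eta=1$ closed form recorded before \Cref{prop:elbo-eta1}. It is a finite nonnegative scalar for every $t\in(0,T)$, since $S_a(t)<1$ whenever $B(t)>0$; it does not depend on $y$; and it depends on $w$ only through $w(a)$ and $S_a(t)$.

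Under the hypothesis, the set of $(y,t)$ where $P_\theta(\cdot\given y,t)\neq p_t(\cdot\given y)$ is Lebesgue-null. I then need it to be null with respect to the measure $p_t^{\mathrm{ac}}(y)\,\dd y\,\dd t$ that appears in \eqref{eq:lrb}. This holds because $p_t^{\mathrm{ac}}$ has a Lebesgue density on $\mathsf X_\cA$, so by Fubini the exceptional set has $p_t^{\mathrm{ac}}$-measure zero for almost every $t$. Off this set the inner sum is zero term by term, for any $w$.

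So the integrand of $\mathcal{L}_{\mathrm{RB}}$ vanishes $p_t^{\mathrm{ac}}\otimes\dd t$-almost everywhere, and the integral is $0$. The conclusion holds for every $w$ because $w$ entered only through the finite weights $\widehat\lambda_t(a)$, which multiply a zero factor. One could also phrase this through the tower identity used in the proof of \Cref{prop:elbo-eta1}: with $P_\theta=p_t$, both pieces of $\mathcal{L}_{\mathrm{RB}}$ equal $\sum_b(-\dot m_t(b))$ and cancel. The direct substitution is cleaner, though.

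The only delicate point is integrability at the endpoints. Near $t\to0$, $\widehat\lambda_t(a)$ blows up like $1/B(t)$, and the integral of the difference is only formally defined there. I would handle this by noting that the integrand is identically zero almost everywhere, not merely integrable. The integral of the zero function is $0$ regardless of the weights, so no bound on $\widehat\lambda_t$ is needed, provided $\mathcal{L}_{\mathrm{RB}}$ is read as an integral of an almost-everywhere-defined function. Beyond this measure-theoretic care, I see no real obstacle.
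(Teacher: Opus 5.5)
Your proposal is correct and takes the same route as the paper, whose proof simply observes that at the Bayes classifier the integrand of $\mathcal{L}_{\mathrm{RB}}$ vanishes pointwise for every $w$. Your additional care is a harmless refinement of that one-line argument: you transfer the Lebesgue-null exceptional set to a $p_t^{\mathrm{ac}}\otimes\dd t$-null set, and you note that no integrability bound on $\widehat\lambda_t$ near $t\to 0$ is needed.
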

\begin{proof}
At the Bayes classifier the integrand of \eqref{eq:lrb} is zero pointwise, for every $w$.
\end{proof}
\noindent At $w\equiv1$ it moreover vanishes identically in $\theta$, since $\widehat\lambda_t$ is anchor-agnostic. Second, the full bound is flat in $w$ at the per-$w$ optimum. At the Bayes classifier the plug-in rates are exact (\Cref{prop:plugin-consistent}) and the induced score is exact (\Cref{app:classifier-score}), so the model is the exact time reversal of the forward law for \emph{every} $w$: the bound is tight at the data entropy, $\mathcal{L}(\theta^\star(w),w)=\mathrm{H}(p_0)$ in the well-specified limit---the operational form of the schedule non-identifiability of \citet{kingma2021variational}. Since $\theta^\star$ is also a stationary point of $\mathcal{L}$ in $\theta$, the envelope theorem gives $\partial_w\mathcal{L}\big|_{\theta^\star}=0$, hence
\[
\partial_w\big(\mathcal{L}_{\mathrm{CE}}+\mathcal{L}_{\mathrm{RB}}\big)\big|_{\theta^\star}=-\partial_w\mathcal{L}_{\mathrm{score}}\big|_{\theta^\star}:
\]
at convergence, the trained objective's hazard gradient is exactly the negative of the omitted score term's. Descending it inflates $w$ fastest on anchors carrying the most $p_0$-weighted score residual (the prefactor $p_0(a)(1-S_a(t))$ grows with $w(a)$ and is largest for common anchors), i.e.\ a frequency-ordered schedule; away from the optimum, the residual $w$-signal in $\mathcal{L}_{\mathrm{RB}}$ supplies the opposite pull. Both directions materialize below, and both degrade generation when deployed.

\paragraph{Learned schedules degrade generation.} We test on Text8 at $W=I$, $\eta=1$, training for $150$k steps and reporting the valid-word fraction at length $\geq5$, maximized over the temperature/NFE sweep of \Cref{app:text8}. A learned hazard can hurt through two routes: at training, the weight $-\dot m_t(a)$ skews which $(a,t)$ pairs $P_\theta$ sees; at inference, $\widehat\lambda_t(a)$ reweights the allocation \eqref{eq:plugin-allocation}. An offline read-out that overrides $w\equiv1$ in the sampler only separates them. In \Cref{tab:e2e-text8}, the \emph{bounded} variant ($w\in[0.5,2]$, cross-entropy driven) learns the frequency-ordered schedule predicted by the envelope identity; the \emph{uniform-CE} variant ($w\equiv1$ inside the weight, so $\partial_w\mathcal{L}_{\mathrm{CE}}=0$ and the residual signal is $\partial_w\mathcal{L}_{\mathrm{RB}}$) runs anti-frequency to its clamp $[0.05,20]$. Both classifiers stay healthy, forcing $w\equiv1$ at sampling recovers $0.10$--$0.14$, at the fixed-hazard ELBO reference of $0.13$, yet the deployed hazard degrades generation monotonically in spread: $1\times\to0.13$, $4\times\to0.066$, $403\times\to0.014$, against $0.27$ for the plain-CE baseline within the same budget ($0.28$ at $300$k).
Note that the ELBO weighting alone already costs half the metric: at $w\equiv1$, $\mathcal{L}_{\mathrm{RB}}\equiv0$ and the objective reduces to $-\dot m_t$-weighted cross-entropy, dropping $0.27\to0.13$.

\begin{table}[t]
\centering
\caption{End-to-end hazard learning on Text8 ($W=I$, $\eta=1$): valid-word fraction at length $\geq 5$, maximized over the temperature/NFE sweep of \Cref{app:text8}; best checkpoint within $150$k steps (deployed learned-hazard runs peak at $25$k). ``Deployed'' samples with the trained hazard; ``$w\equiv1$ sampler'' overrides it to uniform at sampling only. Spread is $\max_a w(a)/\min_a w(a)$ at convergence; parentheses give the rank correlation between $\log w(a)$ and character frequency.}
\label{tab:e2e-text8}
\begin{tabular}{llccc}
\toprule
Configuration & $w$ schedule & spread & deployed & $w\equiv1$ sampler \\
\midrule
Fixed hazard, plain CE \eqref{eq:classifier-loss} & uniform               & $1\times$   & $0.27$  & $0.27$ \\
Fixed hazard, ELBO-trained $\theta$               & uniform               & $1\times$   & $0.13$  & $0.13$ \\
\midrule
Bounded (cross-entropy--driven)                   & frequency ($+0.99$)   & $4\times$   & $0.066$ & $0.10$ \\
Uniform-CE ($\mathcal{L}_{\mathrm{RB}}$-driven)   & anti-freq.\ ($-0.7$)  & $403\times$ & $0.014$ & $0.14$ \\
\bottomrule
\end{tabular}
\end{table}

\paragraph{Fixed non-uniform hazards are nearly neutral.} The learned runs confound three factors: the objective's $w$-dynamics, the damage the $w$-dependent CE weight does to $\theta$, and, in the $w\equiv1$ read-outs, a train/inference hazard mismatch. To probe the axis itself we remove all three: we hardcode $w(a)\propto\mathrm{freq}(a)^{\gamma}$, normalized so $\sum_a p_0(a)w(a)=1$, freeze them, and train by plain cross-entropy \eqref{eq:classifier-loss}, so the hazard enters only the corruption and the inference reweighting and the classifier is trained and sampled under the same hazard. 

The exponent $\gamma$ sets spreads of $2\times$ and $4\times$ in each direction: frequency-keyed ($\gamma>0$; common characters commit \emph{last} in reverse time) and anti-frequency ($\gamma<0$; common characters commit \emph{first}). Every arm runs three seeds under the offline protocol of \Cref{app:text8}; as a plumbing check, $\gamma=0$ through this path reproduces the historical fixed-hazard baseline.
\Cref{tab:frozen-hazard} is unambiguous: every schedule lands within $0.02$ of uniform in the mean, and the same $4\times$ frequency direction that collapsed to $0.066$ under the ELBO recipe reaches $0.279$ here. The failure of end-to-end hazard learning is therefore a property of the objective, not of the axis: the bound is flat in $w$ at the optimum, the CE weight—the only strong $w$-signal—corrupts the classifier it trains, and either runaway direction degrades generation through the reverse-sampler allocation.

\begin{table}[t]
\centering
\caption{Frozen frequency-keyed hazards on Text8 (plain cross-entropy \eqref{eq:classifier-loss}, $W=I$, $\eta=1$): valid-word fraction at length $\geq 5$ under the offline protocol of \Cref{app:text8}, mean$\pm$std over three seeds. Anti-frequency schedules commit common characters first in reverse time; frequency-keyed schedules commit them last. No non-uniform schedule clears uniform under the pre-registered mean$-$std criterion at either budget.}
\label{tab:frozen-hazard}
\begin{tabular}{lcc}
\toprule
Hazard & spread & valid-word (len $\geq 5$) \\
\midrule
\multicolumn{3}{@{}l}{\textit{$150$k steps (three seeds)}}\\
\quad anti-frequency       & $4\times$ & $0.262 \pm 0.004$ \\
\quad anti-frequency       & $2\times$ & $0.272 \pm 0.015$ \\
\quad uniform ($w\equiv1$) & $1\times$ & $0.263 \pm 0.000$ \\
\quad frequency            & $2\times$ & $0.279 \pm 0.013$ \\
\quad frequency            & $4\times$ & $0.279 \pm 0.008$ \\
\midrule
\multicolumn{3}{@{}l}{\textit{$300$k steps (three seeds)}}\\
\quad anti-frequency       & $2\times$ & $0.270 \pm 0.035$ \\
\quad uniform ($w\equiv1$) & $1\times$ & $0.273 \pm 0.005$ \\
\bottomrule
\end{tabular}
\end{table}

\section{Implementation Details}\label{app:implementation-details}
\subsection{Convolution Closure for the VP-Matched SJD}\label{app:convolution}
The DHM target \eqref{eq:dhm-target} and the plug-in quadrature \eqref{eq:plugin-quadrature} both require the off-anchor conditional $p_t^{\mathrm{ac}}(y\given \vec X_0, i)$ at each position $i$, which is built from the convolution
\[
(r_\tau * K_{\tau\to t})(y\given \vec X_0, i) = \int_{\R^d} r_\tau(z\given \vec X_0, i) K_{\tau\to t}(y\given z)\dd z,
\]
of the unsticking kernel at time $\tau$ with the forward diffusion semigroup from $\tau$ to $t$. We work out this convolution explicitly for the VP-matched family of \Cref{ex:vp-matched}, where both factors are Gaussian and the convolution stays Gaussian. The per-anchor case is recovered at $W = I$, where $\mu_i(\vec X_0) = E(a^{(i)})$ depends only on the local anchor and the conditioning collapses to $p_t^{\mathrm{ac}}(y\given a^{(i)})$.

\paragraph{The VP semigroup.}
The VP-SDE $\dd \vec X_t = -\tfrac12 \beta(t) \vec X_t \dd t + \sqrt{\beta(t)}\dd \vec W_t$ has the linear-Gaussian transition kernel
\[
K_{\tau\to t}(\cdot\given z) = \cN\left(\cdot;\ \tfrac{\alpha(t)}{\alpha(\tau)}z,\ \Big(\sigma^2(t) - \tfrac{\alpha^2(t)}{\alpha^2(\tau)}\sigma^2(\tau)\Big) I_d\right),
\]
where $\alpha(t) = \exp\big(-\tfrac12 \int_0^t \beta(s)\dd s\big)$ and $\sigma^2(t) = 1 - \alpha^2(t)$. The VP-matched unsticking kernel from \Cref{ex:vp-matched} is
\[
r_\tau(\cdot\given \vec X_0, i) = \cN\left(\cdot;\ \alpha(\tau)\mu_i(\vec X_0),\ \eta^2 \sigma^2(\tau)I_d\right),
\qquad
\mu_i(\vec X_0) = \big(WE(\vec X_0)\big)_i.
\]

\paragraph{The convolution.}
Both factors are isotropic Gaussian, so the convolution is again isotropic Gaussian with mean and variance given by linearity. Writing $m := \mu_i(\vec X_0)$ for the blended mean at position $i$,
\begin{align*}
\mathrm{mean:}\quad &\tfrac{\alpha(t)}{\alpha(\tau)} \cdot \alpha(\tau)m = \alpha(t)m, \\[3pt]
\mathrm{variance:}\quad &\Big(\tfrac{\alpha(t)}{\alpha(\tau)}\Big)^2 \cdot \eta^2\sigma^2(\tau) + \Big(\sigma^2(t) - \tfrac{\alpha^2(t)}{\alpha^2(\tau)}\sigma^2(\tau)\Big) = \sigma^2(t) - (1-\eta^2)\tfrac{\alpha^2(t)}{\alpha^2(\tau)}\sigma^2(\tau).
\end{align*}
Writing $v_t(\tau) := \sigma^2(t) - (1-\eta^2)\alpha^2(t)\sigma^2(\tau)/\alpha^2(\tau)$, we recover \eqref{eq:vp-matched-convolution}:
\begin{equation}\label{eq:app-vp-matched-convolution}
(r_\tau * K_{\tau\to t})(\cdot\given \vec X_0, i) = \cN\big(\cdot;\ \alpha(t)\mu_i(\vec X_0),\ v_t(\tau) I_d\big).
\end{equation}
The variance $v_t(\tau)$ is identical to the per-anchor case because the variance derivation depends only on the unsticking variance $\eta^2\sigma^2(\tau)$ and the VP semigroup; the cross-position structure $W$ enters only through the mean.

\paragraph{Sanity checks.}

Three limits confirm that \eqref{eq:app-vp-matched-convolution} is well-behaved.

\begin{figure*}[t]
    \centering
    \includegraphics[width=\textwidth]{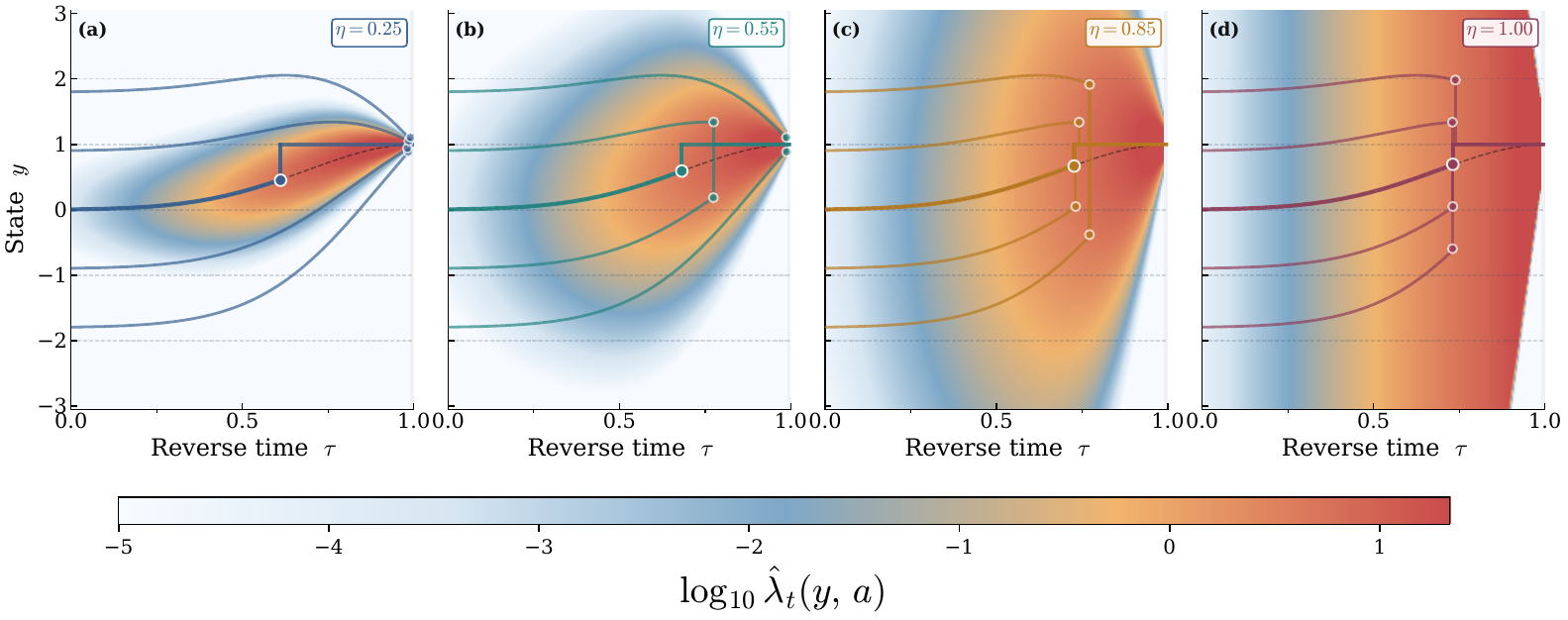}
    \caption{Effect of $\eta$ on the state-dependent hazard target $\widehat{\lambda}_t(y,a)$ under a single anchor $a = 1$ for reverse trajectories (circles mark commitment events). As $\eta \to 1$, the width of the intensity surface widens. As $\eta$ decreases from $1.00$ (d) to $0.25$ (a), the variance of the unsticking kernel sharpens relative to the mixture marginal $p_t^{\mathrm{ac}}$. High intensity regions contract toward the anchor mean, and trajectories in low intensity regions tend to later commitments.}
    \label{fig:eta-commit}
\end{figure*}

\emph{Recovery of the diffusion kernel at $\eta = 1$.} When $\eta = 1$ the unsticking kernel coincides with $q_\tau(\cdot\given a) = K_{0\to\tau}(\cdot\given a)$, and the semigroup property of the VP transition kernel gives $q_\tau * K_{\tau\to t} = q_t$. Consistently, $v_t(\tau)\big|_{\eta = 1} = \sigma^2(t)$ for every $\tau$, recovering the VP perturbation kernel.

\emph{Variance is positive and decreasing in $\tau$ for $\eta < 1$.} For fixed $t$ and $\eta \in (0, 1]$, $v_t(\tau)$ is strictly decreasing in $\tau \in (0, t)$, with $v_t(\tau) \to \eta^2 \sigma^2(t)$ as $\tau \to t^-$ (a particle that just unstuck) and $v_t(\tau) \to \sigma^2(t)$ as $\tau \to 0^+$ (an early unstick has had the full VP semigroup to wash out the variance deficit). In particular $v_t(\tau) > 0$ throughout.

\emph{Terminal Gaussian prior.} As $t \to T$ with $\alpha(T) \to 0$, every mixture component converges to $\cN(0, I_d)$ regardless of $\tau$ and $\eta$, recovering \Cref{rem:prior}.

\subsection{Efficient Evaluation of the Plug-In Hazard}\label{app:plugin}
We give the implementation details behind the cost estimate of the plugin hazard: a one-dimensional quadrature for $p_t^{\mathrm{ac}}(y\given a)$ and the resulting plug-in $\widehat\lambda_t(y, a)$, with a vectorized form that shares structure across anchors. We focus throughout on the VP-matched family of \Cref{ex:vp-matched}, where the convolution \eqref{eq:app-vp-matched-convolution} is closed-form.

\paragraph{Discretizing the quadrature.}
Recall that
\begin{equation}\label{eq:app-plugin-quadrature}
p_t^{\mathrm{ac}}(y\given a) = \int_0^t \vec\lambda_\tau(a) S_a(\tau) (r_\tau * K_{\tau\to t})(y\given a)\dd\tau.
\end{equation}
We discretize $\tau$ on a fixed grid $0 < \tau_1 < \cdots < \tau_{N_\tau} \le t$ with quadrature weights $\{w_i\}_{i=1}^{N_\tau}$ (we use Simpson's rule with $N_\tau \in [16, 64]$ in our experiments). Substituting the closed-form convolution \eqref{eq:app-vp-matched-convolution},
\begin{equation}\label{eq:app-plugin-discretized}
p_t^{\mathrm{ac}}(y\given a) \approx \sum_{i=1}^{N_\tau} w_i \vec\lambda_{\tau_i}(a) S_a(\tau_i) \cN\big(y; \alpha(t) a, v_t(\tau_i) I_d\big).
\end{equation}

\paragraph{Log-domain stability.}
The Gaussian densities in \eqref{eq:app-plugin-discretized} can underflow when $y$ is far from $\alpha(t) a$. We therefore evaluate the log-density by log-sum-exp:
\begin{equation}\label{eq:app-logsumexp}
\log p_t^{\mathrm{ac}}(y\given a) \approx \mathrm{logsumexp}_{i}\Big(\log w_i + \log \vec\lambda_{\tau_i}(a) + \log S_a(\tau_i) + \log \cN\big(y; \alpha(t) a, v_t(\tau_i) I_d\big)\Big).
\end{equation}
The DHM target is then $\log \widehat\lambda_t(y, a) = \log\vec\lambda_t(a) + \log S_a(t) + \log r_a(y, t) - \log p_t^{\mathrm{ac}}(y\given a)$, and the reweighted logits in \eqref{eq:plugin-allocation} become
\[
\ell_\theta(a; y, t) + \log \widehat\lambda_t(y, a),
\]
fed directly into a softmax to sample the destination anchor. 

\paragraph{Sharing structure across anchors.}
For the common case of an anchor-agnostic forward hazard $\vec\lambda_\tau(a) \equiv \vec\lambda(\tau)$, the prefactor $w_i \vec\lambda(\tau_i) S(\tau_i)$ in \eqref{eq:app-plugin-discretized} is independent of $a$ and can be precomputed once per reverse step. The only $a$-dependence in the integrand is in the Gaussian mean $\alpha(t) a$ (recall that the variance $v_t(\tau_i)$ depends only on $\tau_i$ and $t$, not on $a$; a consequence of VP-matching). The quadrature therefore reduces to evaluating $|\cA| \cdot N_\tau$ Gaussian log-densities and one log-sum-exp per anchor. More generally, when the hazard factors as $\vec\lambda_\tau(a) = \beta(\tau) w(a)$, the $\tau$-weights $w_i \beta(\tau_i) S(\tau_i)^{w(a)}$ can be precomputed for each anchor.

\paragraph{Cost.}
Per reverse step and per site $y$, evaluating the full plug-in has three components:
\begin{enumerate}
\item Precomputing the $\tau$-weights and variances: $\mathcal O(N_\tau)$.
\item Evaluating $\cN(y;\alpha(t) a, v_t(\tau_i) I_d)$ for all $(a, i)$: $\mathcal O(N_\tau |\cA| d)$.
\item Log-sum-exp across $\tau_i$ and softmax across $a$: $\mathcal O(N_\tau |\cA|)$.
\end{enumerate}
The dominant term is step 2. With $N_\tau \in [16, 64]$, $|\cA|$ on the order of a vocabulary size, and $d$ on the order of a feature dimension, this is one to two orders of magnitude cheaper than a single forward pass through $P_\theta$ in our experiments. The quadrature is fully vectorizable and runs alongside the classifier on the same accelerator.

\paragraph{Cost of the blending step.}
For the non-local kernel of \Cref{ex:vp-matched}, computing the per-position blended mean $\mu = WE(\vec X_0)$ adds an $L\times L$ matrix multiply with cost $\mathcal O(L^2 d)$ for dense $W$ and $\mathcal O(Ldw)$ for banded $W$ of bandwidth $w$. Both are negligible against one classifier forward pass, and the matmul is computed once per minibatch (training) or once per reverse step (sampling).

\subsection{Classifier-Induced Score}\label{app:classifier-score}

The reverse drift \eqref{eq:score-drift} requires the score $\nabla \log p_t(y)$ on $\mathsf X_\cA$. We show how to read it off from the trained classifier $P_\theta$ and the same plug-in quadrature used for the hazard, without training a second network \cite{dieleman2022cdcd}.

\paragraph{Setup.}
The marginal density on $\mathsf X_\cA$ factors as $p_t(y) = \sum_{a\in\cA} p_0(a) p_t^{\mathrm{ac}}(y\given a)$, so by the standard mixture-score identity
\begin{equation}\label{eq:app-mixture-score}
\nabla \log p_t(y) = \sum_{a\in\cA} p_t(a\given y) \nabla_y \log p_t^{\mathrm{ac}}(y\given a).
\end{equation}
The classifier $P_\theta$ approximates $p_t(a\given y)$ from \eqref{eq:denoising-posterior}, so the only remaining piece is $\nabla_y \log p_t^{\mathrm{ac}}(y\given a)$.

\paragraph{Score of the mixture conditional.}
Differentiating the integral form of $p_t^{\mathrm{ac}}(y\given a)$ in \eqref{eq:app-plugin-quadrature} under the integral sign,
\[
\nabla_y p_t^{\mathrm{ac}}(y\given a) = \int_0^t \vec\lambda_\tau(a) S_a(\tau) \nabla_y (r_\tau * K_{\tau\to t})(y\given a)\dd\tau,
\]
and dividing by $p_t^{\mathrm{ac}}(y\given a)$,
\begin{equation}\label{eq:app-conditional-score}
\nabla_y \log p_t^{\mathrm{ac}}(y\given a) = \int_0^t \mathrm{w}_t(\tau\given y, a) \nabla_y \log (r_\tau * K_{\tau\to t})(y\given a)\dd\tau,
\end{equation}
where the time-posterior weight is
\begin{equation}\label{eq:app-time-posterior}
\mathrm{w}_t(\tau\given y, a) := \frac{\vec\lambda_\tau(a) S_a(\tau) (r_\tau * K_{\tau\to t})(y\given a)}{p_t^{\mathrm{ac}}(y\given a)}.
\end{equation}
This is the conditional density of the unstick time given $(\vec X_0 = a, \vec X_t = y)$; \eqref{eq:app-conditional-score} expresses the score of the mixture as the average of the component scores, weighted by the time-posterior.

\paragraph{VP-matched closed form.}
Under \Cref{ex:vp-matched}, the convolution \eqref{eq:app-vp-matched-convolution} is Gaussian, so its score is
\[
\nabla_y \log (r_\tau * K_{\tau\to t})(y\given a) = -\frac{y - \alpha(t) a}{v_t(\tau)}.
\]
Substituting into \eqref{eq:app-conditional-score} and \eqref{eq:app-mixture-score},
\begin{equation}\label{eq:app-classifier-score}
s_\theta(y, t) = -\sum_{a\in\cA} P_\theta(a\given y, t) \int_0^t \mathrm{w}_t(\tau\given y, a) \frac{y - \alpha(t) a}{v_t(\tau)}\dd\tau,
\end{equation}
where we have substituted $P_\theta$ for the true posterior $p_t(a\given y)$. \Cref{eq:app-classifier-score} is the classifier-induced score: it depends on $P_\theta$, the precomputed $\tau$-grid quadrature for $\mathrm{w}_t$, and the affine residual $y - \alpha(t) a$.

\paragraph{Sharing computation with the plug-in hazard.}
The time-posterior weight $\mathrm{w}_t(\tau_i\given y, a)$ is the same Gaussian-density times hazard-survival product that appears in the discretized quadrature \eqref{eq:app-plugin-discretized}, normalized by $p_t^{\mathrm{ac}}(y\given a)$. In implementation, the score and the hazard reuse the same set of evaluations: at each $(y, t)$ we compute $\log \cN(y; \alpha(t) a, v_t(\tau_i) I_d)$ once for each $(a, i)$ pair, log-sum-exp over $i$ to obtain $\log p_t^{\mathrm{ac}}(y\given a)$ for the hazard, and softmax over $i$ within each $a$ to obtain $\mathrm{w}_t(\tau_i\given y, a)$ for the score. The marginal cost of the score on top of the hazard is one extra weighted sum per anchor: still negligible compared to a forward pass through $P_\theta$.

\paragraph{Limit at $\eta = 1$.}
For $\eta = 1$, the convolution \eqref{eq:app-vp-matched-convolution} collapses to the VP perturbation kernel $q_t(\cdot\given a)$ for every $\tau$, so the time-posterior $\mathrm{w}_t(\tau\given y, a)$ becomes uniform and integrates out. \Cref{eq:app-classifier-score} reduces to
\[
s_\theta(y, t) \big|_{\eta = 1} = -\sum_{a\in\cA} P_\theta(a\given y, t) \frac{y - \alpha(t) a}{\sigma^2(t)},
\]
which is the standard mixture-of-Gaussians score familiar from masked diffusion with continuous augmentation \cite{zheng2025cadd}. The general $\eta < 1$ formula \eqref{eq:app-classifier-score} interpolates between this masked-diffusion-style score and the per-component VP score by averaging over the unstick time.

\paragraph{Blended kernels ($W \neq I$).}
For the non-local kernel of \Cref{ex:vp-matched} the conditioning object is the whole clean sequence, and \eqref{eq:app-classifier-score} does not apply as written: the mixture component at position $i$ is centered at the blended $\alpha(t)\mu_i(\vec X_0)$ of \eqref{eq:blended-mean}, not at $\alpha(t)a^{(i)}$. At $\eta = 1$ the correction is exact and closed-form. Write $y = (y_1, \dots, y_L)$ for the sequence state. Conditionally on $\vec X_0$, the active (off-anchor) coordinates are independent with $y_i \sim \cN\big(\alpha(t)\mu_i(\vec X_0), \sigma^2(t) I_d\big)$, by \eqref{eq:app-vp-matched-convolution} at $v_t(\tau) = \sigma^2(t)$, and a committed position reveals its anchor exactly, since anchors are absorbing. The tower argument of \eqref{eq:app-mixture-score}, now conditioning on the full state (active coordinates and committed anchors), gives
\[
\nabla_{y_i} \log p_t(y) = -\frac{y_i - \alpha(t)\E\big[\mu_i(\vec X_0)\given y\big]}{\sigma^2(t)},
\]
and because $\mu_i$ is linear in the embedded sequence \eqref{eq:blended-mean}, the conditional mean requires only the \emph{per-position marginal} posteriors: $\E[\mu_i(\vec X_0)\given y] = \sum_j W_{ij} m_j(y)$, where $m_j(y) := \sum_{a\in\cA} p_t^{(j)}(a\given y) E(a)$ is the posterior-mean embedding at position $j$ and $p_t^{(j)}(\cdot\given y)$ the marginal posterior of that position's anchor given the full state; at committed and clamped positions, $m_j = E(a^{(j)})$ exactly. Substituting the classifier yields the blended-mean induced score
\begin{equation}\label{eq:blended-score}
s_{\theta,i}(y, t)\big|_{\eta=1} = -\frac{y_i - \alpha(t)\big(W m_\theta(y, t)\big)_i}{\sigma^2(t)},
\qquad
m_{\theta,j}(y, t) = \sum_{a\in\cA} P_\theta^{(j)}(a\given y, t) E(a).
\end{equation}
Three remarks. First, \eqref{eq:blended-score} is exact, not a mean-field approximation: the conditional score is linear in $\mu_i$, so the joint posterior over sequences is never needed, and the single per-position classifier remains sufficient at $W \neq I$. Second, the jump side needs no modification: at $\eta = 1$ the spatial factor of the DHM target collapses to $1/(1 - S_a(t))$ for \emph{any} $W$ so the commit rates and allocation are unchanged, and \eqref{eq:blended-score} is the only place $W$ enters the sampler: one blending product $W m_\theta$ per reverse step, the same product as in training (\Cref{app:plugin}). Third, at $W = I$ it reduces to the $\eta = 1$ limit above. We use \eqref{eq:blended-score} whenever $W \neq I$; for $\eta < 1$ the time-posterior \eqref{eq:app-time-posterior} depends on $\mu_i(\vec X_0)$ inside the conditional expectation, per-position marginals no longer suffice, and we do not combine $W \neq I$ with $\eta < 1$.

\subsection{Sampling}\label{app:sampling}

A trained classifier $P_\theta$ from \Cref{sec:hazard} fully specifies an approximation to the reverse-time SJD of \Cref{thm:anchored-time-reversal}: the score and the per-anchor reverse intensities are both available through the plug-in~\eqref{eq:plugin-estimators}. We assemble these into a sampler that runs from the prior $p_T$ at $\tau = 0$ to data at $\tau = T$.

\paragraph{Predictor: one reverse step.}
At reverse time $\tau$ (forward time $t = T - \tau$) and current state $y \in \mathsf X_\cA$, one Euler step of size $\Delta\tau$ does two things in parallel:
\begin{enumerate}
    \item \emph{Drift and diffuse along the score.} Compute the classifier-induced score $s_\theta(y, t)$ (\Cref{app:classifier-score}) and advance the continuous SDE component via a standard Euler--Maruyama step.
    \item \emph{Sample a sticky jump.} With probability $1 - \exp\big(-\lambda_{\mathrm{plug},\theta}(y, t)\Delta\tau\big)$, commit to an anchor drawn from $\pi_{\mathrm{plug},\theta}(\cdot\given y, t)$ \eqref{eq:plugin-allocation} and treat that anchor as absorbing for the rest of the reverse trajectory.
\end{enumerate}
Both steps reuse the same forward pass through $P_\theta$: the classifier produces logits over anchors, from which the score and the reweighted jump distribution are read off. \Cref{alg:sjd-sampling} states the full sampler.

\paragraph{Evaluating the plug-in hazard.}
The plug-in \eqref{eq:plugin-estimators} requires the mixture density $p_t^{\mathrm{ac}}(y\given a)$ pointwise, evaluated by
\begin{equation}\label{eq:plugin-quadrature}
p_t^{\mathrm{ac}}(y\given a) = \int_0^t \vec\lambda_\tau(a) S_a(\tau) (r_\tau * K_{\tau\to t})(y\given a) \dd\tau,
\end{equation}
which is one-dimensional and admits a closed-form integrand whenever the simulation-free conditions of \Cref{sec:sim-free} hold. We discretize \eqref{eq:plugin-quadrature} on a $\tau$-grid of $N_\tau$ points and evaluate the log of the integrand by log-sum-exp for numerical stability. The cost is negligible compared to a forward pass through $P_\theta$. See \Cref{app:plugin} for details.

\section{Experimental Details}\label{app:experiments-details}

\subsection{CIFAR-10}\label{app:cifar10-imagenet}
\paragraph{Value-space blending kernel.}
The blended arms of \Cref{fig:cifar10_sigmaB} replace the linear embedding blend \eqref{eq:blended-mean} with a blend applied to raw pixel values before embedding. Flatten the $32\times32$ pixel grid row-major into $N=1024$ sites, indexed by $i\mapsto(r_i,c_i)$ with $r_i=\lfloor i/32\rfloor$ and $c_i = i\bmod 32$, and write the squared grid distance
\begin{equation}
  d^2_{ij} = (r_i-r_j)^2 + (c_i-c_j)^2 .
\end{equation}
The blending matrix $B^{(\sigma_B)}\in\mathbb{R}^{N\times N}$ is the row-stochastic Gaussian position kernel
\begin{equation}
  B^{(\sigma_B)}_{ij} = \frac{\exp\big(-d^2_{ij}/2\sigma_B^{2}\big)} {\sum_{k=1}^{N}\exp\big(-d^2_{ik}/2\sigma_B^{2}\big)},
  \qquad \sigma_B\in\{1.5,2.0,2.5\},
\end{equation}
equivalently the row-wise softmax of $-D/2\sigma_B^2$ with $D=[d^2_{ij}]$. Row-stochasticity keeps blended values on the pixel scale, and $B^{(\sigma_B)}\to I$ as $\sigma_B\to 0$, recovering the identity-kernel hybrid. With $v(\vec X_0)\in[0,255]^{N\times 3}$ the raw pixel values and $\mathcal{E}$ the fixed sinusoidal embedding of the value scale,
\begin{equation}
  \mathcal{E}(v) = \Big[\sin\big(2\pi f v/P\big), \cos\big(2\pi f v/P\big)\Big]_{f=1}^{F},
  \qquad F=4,\quad P=255,\quad \lVert\mathcal{E}(v)\rVert_2=\sqrt{F}=2,
\end{equation}
the forward mean blurs the raw values and embeds the blurred result,
\begin{equation}\label{eq:embed-the-blur}
  \mu(\vec X_0) =\ \mathcal{E}\big(B^{(\sigma_B)} v(\vec X_0)\big),
\end{equation}
with $B^{(\sigma_B)}$ acting over the flattened spatial grid independently for each of the $C=3$ colour channels; the token sequence has length $L=3N=3072$, one \texttt{uint8} token per channel per site. The unsticking kernel \eqref{eq:nonlocal-r} uses this mean at $\eta=1$. The composition $\mathcal{E}\circ B^{(\sigma_B)}$ is smooth in the pixel values, which is what makes natural images low-pass under this corruption; the same blur applied to arbitrary learned embeddings does not preserve that structure.

At sampling, the induced score and plug-in weights require the forward mean under the denoising posterior. We form the per-site posterior-mean value $\hat v_i = \sum_{v=0}^{255} P_\theta(v\given\cdot)v$, blur it through $B^{(\sigma_B)}$, and embed the blended result, $\mathcal{E}\big(B^{(\sigma_B)}\hat v\big)$; the embedding step is the mean-field substitution of \Cref{app:classifier-score}.

\paragraph{Setup and common hyperparameters.}
All CIFAR-10 experiments used unconditional generation on images represented as discrete \texttt{uint8} tokens of shape \(32\times 32\times 3\) with vocabulary size \(256\) \citep{shi2024md4,zheng2025cadd}. We used no data augmentation. Unless otherwise noted, training and validation used batch size \(512\), AdamW with learning rate \(10^{-4}\), \(\beta_2=0.999\), weight decay \(0\), and a 1{,}000-step warmup followed by a constant learning-rate schedule. All runs used seed \(0\), were trained for \(200{,}000\) steps, used EMA decay \(0.9999\), and saved checkpoints every \(10{,}000\) steps; the best checkpoint was selected by validation FID. We report FID-50k against CIFAR-10 train statistics, using FID evaluation batch size \(128\). All samplers used \(256\) NFEs.

All CIFAR-10 models used ADM-family U-Net backbones \citep{dhariwal2021adm}. SJD, MD4, MDLM, and Bit Diffusion used the 5D image-token ADM variant, while DDPM used the 2D ADM variant. Shared ADM hyperparameters were feature dimension \(96\), \(32\) layers, dropout \(0.1\), channel multipliers \([3,4,4]\), \(4\) residual blocks per level, and attention at resolutions \([2,4]\). We train using two NVIDIA H100 GPUs. Owing to compute constraints, each CIFAR-10 configuration is trained with a single seed, following the single-run reporting practice of prior diffusion evaluations on this benchmark \citep{ho2020ddpm,chen2023analogbits,shi2024md4,zheng2025cadd}, so FID differences on the order of run-to-run variability should be read with caution.

\paragraph{Model-specific settings.}
SJD used a linear VP schedule with \(\beta_{\min}=0.1\), \(\beta_{\max}=20.0\), and \(T=1\), a polynomial-\(\alpha\) hazard with \(p=1\); sampling used a cosine grid; anchors are learned end-to-end and normalized following the methodology from \citet[Section 3.2]{dieleman2022cdcd}. MD4 used the continuous-time masked discrete diffusion setup with a linear noise schedule, antithetic time sampling, ancestral sampling, a cosine grid, and top-\(p=0.98\). DDPM used a linear \(\beta\) schedule from \(10^{-4}\) to \(2\times 10^{-2}\) and \(\epsilon\)-prediction. MDLM used a continuous-time masked discrete model with a linear noise schedule, ancestral sampling, a cosine grid, and no prediction caching. Bit Diffusion used \(x_0\)-prediction, an MSE loss, a linear signal schedule, and deterministic DDIM-style sampling on a uniform grid. In our experiments Bit Diffusion is not self-conditioned.

\subsection{Text8}\label{app:text8}
We follow the setup of \citet{lou2024sedd, austin2021d3pm} on 256-character sequences with an 8M-parameter DiT denoiser. The blending matrix $W$ is a 1D Gaussian convolution of bandwidth $\sigma_W$, swept over $\sigma_W \in \{0.5, 1.0, 1.5, 2.0, 2.5\}$; $W = I$ ($\sigma_W = 0$) recovers the CANDI baseline.

\paragraph{Noise schedule.}
All methods use the log-linear schedule for the discrete masking corruption \citep{sahoo2024mdlm, pynadath2025candi}. CANDI and SJD additionally use the CANDI rank-degradation schedule with $r_{\min} = 0.05$ and $r_{\max} = 0.4$.

\paragraph{Training.}
\Cref{tab:text8-hparams} lists the optimizer and batch settings, shared across all methods.

\begin{table}[h]
\centering
\caption{Text8 training hyperparameters.}
\label{tab:text8-hparams}
\begin{minipage}[t]{0.4\textwidth}
\centering
\begin{tabular}{lc}
\toprule
Hyperparameter & Value \\
\midrule
\multicolumn{2}{l}{\textit{Optimizer}} \\
Algorithm & AdamW \\
Learning rate & $1 \times 10^{-3}$ \\
$\beta_1$ & 0.9 \\
$\beta_2$ & 0.999 \\
$\epsilon$ & $1 \times 10^{-8}$ \\
Weight decay & 0 \\
Gradient clip & 1.0 \\
\bottomrule
\end{tabular}
\end{minipage}%
\hfill
\begin{minipage}[t]{0.6\textwidth}
\centering
\begin{tabular}{lc}
\toprule
Hyperparameter & Value \\
\midrule
\multicolumn{2}{l}{\textit{Batch size}} \\
Global batch size & 512 \\
Gradient accumulation steps & 4 \\
\midrule
\multicolumn{2}{l}{\textit{Training}} \\
Max steps & 40{,}000 \\
Precision & \texttt{bf16} \\
EMA decay & 0.9999 \\
Antithetic time sampling & \checkmark \\
LR scheduler & Constant \\
Hardware & 1 RTX A6000 \\
\bottomrule
\end{tabular}
\end{minipage}
\end{table}

\paragraph{Inference and quality--diversity frontier.}
MDLM uses ancestral sampling. CANDI and SJD use the exact hybrid sampler of \citet{pynadath2025candi}. For each method and each NFE budget we sweep the sampling temperature over 11 evenly spaced values in $[0.05, 1.0]$ to trace the quality--diversity frontier (\Cref{fig:text8-frontier}); \Cref{fig:text8} reports the maximum valid-word count along each frontier. SJD at $\sigma_W \geq 1.0$ Pareto-dominates CANDI and MDLM throughout the high-diversity regime, with the dominance widening at higher NFE.

\begin{figure}
    \centering
    \includegraphics[width=\linewidth]{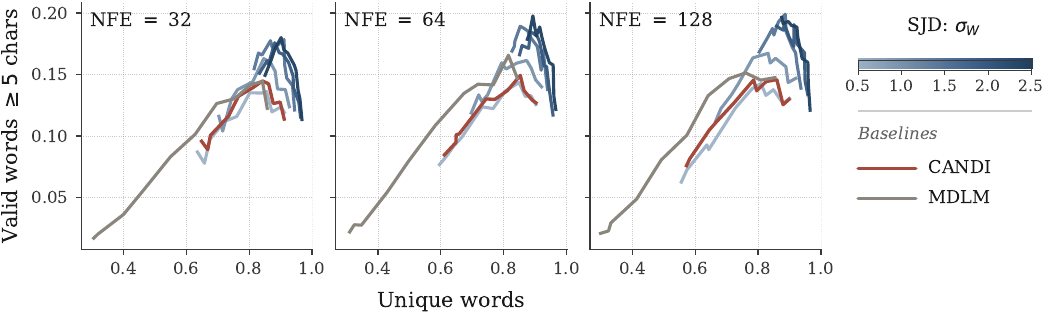}
    \caption{Quality--diversity frontiers on Text8, by NFE budget. Each curve is a temperature sweep in the (unique-word fraction, valid-word fraction at length $\geq 5$) plane; upper-right dominates. The SJD family at $\sigma_W \geq 1.0$ Pareto-dominates CANDI and MDLM at high diversity.}
    \label{fig:text8-frontier}
\end{figure}

\subsection{Sudoku}\label{app:sudoku}
We use the Sudoku dataset and split from \citet{shah2024puzzles}. Each instance consists of a 9$\times$9 puzzle (with blanks) and its unique completed solution. The dataset is filtered to puzzles solvable without backtracking using a fixed set of human-style strategies, ensuring that test-time evaluation probes logical deduction rather than combinatorial search.
\begin{table}[h]
\centering
\caption{Sudoku dataset statistics}
\label{tab:sudoku_dataset_stats}
\small
\begin{tabular}{lc}
\toprule
Statistic & Value \\
\midrule
Grid size & $9\times 9$ \\
Vocabulary & digits $\{0,\dots,9\}$ ($|\mathcal V|=10$) \\
Train puzzles & 1{,}800{,}000 \\
Test puzzles & 100{,}000 \\
Filtering & solvable by fixed strategy set \\
URL & \href{https://drive.google.com/drive/folders/1TluiZjYl-zLdbxjVmhfWl-WyX_OvD7UW}{source} \\
\bottomrule
\end{tabular}
\end{table}
The model receives an input such as ($080050060\ldots603100007$) and predicts the full solution ($789251364\ldots653184297$). This matches the MGDM-style Sudoku formulation \citep{ye2025mgdm} in which the puzzle is represented as an 81-token discrete sequence with (0) used as the masking symbol.

\paragraph{Constraint-graph jump kernel.}
The Sudoku board is the $L=81$ token positions of a $9\times 9$ grid. Index a position $i\in\{0,\dots,80\}$ by its row, column, and $3\times 3$ box,
\begin{equation}
r_i = \left\lfloor i/9 \right\rfloor,\qquad
c_i = i \bmod 9,\qquad
b_i = 3\left\lfloor r_i/3 \right\rfloor + \left\lfloor c_i/3 \right\rfloor .
\end{equation}
Two positions are Sudoku-constraint neighbours iff they share a unit (row, column, or box):
\begin{equation}
\mathcal{N}(i) = \{i\}\cup
\underbrace{\{ j : r_j=r_i \}}_{\text{row}}\cup
\underbrace{\{ j : c_j=c_i \}}_{\text{column}}\cup
\underbrace{\{ j : b_j=b_i,\ r_j\neq r_i,\ c_j\neq c_i \}}_{\text{box}} .
\end{equation}
The fixed blending matrix $W\in\mathbb{R}^{81\times 81}$ is the constraint-graph Gaussian with bandwidth $\sigma_W=1.5$,
\begin{equation}
W_{ij} =
\begin{cases}
\exp\left(-\dfrac{(r_i-r_j)^2+(c_i-c_j)^2}{2\sigma_W^2}\right), & j\in\mathcal{N}(i),\\[2ex]
0, & \text{otherwise.}
\end{cases}
\end{equation}
The blended mean \eqref{eq:blended-mean} of each cell is thus its own embedding plus a Gaussian-weighted sum of its $20$ constraint peers,
\begin{equation}
\mu_i(\vec{X}_0) = \big(WE(\vec{X}_0)\big)_i
= E\big(a^{(i)}\big)
+ \sum_{j\in\mathcal{N}(i)\setminus\{i\}}
\exp\left(-\frac{(r_i-r_j)^2+(c_i-c_j)^2}{2\sigma_W^2}\right) E\big(a^{(j)}\big),
\end{equation}
and the unsticking kernel \eqref{eq:nonlocal-r} uses this mean at $\eta=1$,
\(
r_t(y \given i, \vec{X}_0)
= \mathcal{N}\big(y; \alpha(t)\mu_i(\vec{X}_0), \sigma^2(t)I_d\big).
\)

\end{document}